\documentclass{article}

\usepackage[preprint]{neurips_2026}

\usepackage[utf8]{inputenc} 
\usepackage[T1]{fontenc}    
\usepackage{hyperref}       
\usepackage{url}            
\usepackage{booktabs}       
\usepackage{amsfonts}       
\usepackage{nicefrac}       
\usepackage{microtype}      
\usepackage{xcolor}         

\usepackage[ruled,vlined]{algorithm2e}
\usepackage{graphicx}
\usepackage{wrapfig}
\usepackage{subcaption}
\usepackage{amsmath, amssymb, amsthm}
\newtheorem{theorem}{Theorem}[section]
\newtheorem{lemma}[theorem]{Lemma}

\newtheorem{remark}{Remark}
\newtheorem{definition}{Def.}[section]

\numberwithin{equation}{subsection}

\DeclareUnicodeCharacter{2212}{-}

\newcommand{\indep}{\mathrel{\perp\!\!\!\perp}}

\title{Data-Driven Covariate Selection for Nonparametric and Cycle-Agnostic Causal Effect Estimation}

\author{%
  Ana Let\'{i}cia Garcez Vicente\thanks{Equal contribution} \\
  Leiden University\\
  Netherlands\\
  \texttt{a.l.garcez.vicente@liacs.leidenuniv.nl} \\
  \And
  Gijs van Seeventer\footnotemark[1] \\
  Leiden University\\
  Netherlands\\
  \texttt{g.g.van.seeventer@liacs.leidenuniv.nl} \\
  \And
  Saber Salehkaleybar \\
  Leiden University\\
  Netherlands\\
  \texttt{s.salehkaleybar@liacs.leidenuniv.nl} \\
}

\begin{document}

\maketitle

\begin{abstract}

  Estimating causal effects from observational data requires identifying valid adjustment sets. This task is especially challenging in realistic settings where latent confounding and feedback loops are present.
  Existing approaches typically assume acyclicity or rely on global causal structure learning, limiting applicability and computational efficiency. 
  In this work, we study a local, data-driven method for covariate selection based on conditional independence information. While this method is known to be sound and complete in acyclic 
  causal models, its validity in the presence of cycles has remained unclear. Our main contribution is to show that these guarantees extend to cyclic causal models. In particular, our result relies on the invariance of conditional independence assertions under $\sigma$-acyclification.
  These findings establish a unified, cycle-agnostic perspective on covariate selection and causal effect estimation, showing that the method applies across cyclic and acyclic settings without modification. 
  Empirically, we validate this on extensive synthetic data, showing reliable performance in cyclic causal models.
\end{abstract}

\section{Introduction}

Estimating causal effects between variables is a central objective across a wide range of scientific disciplines. In epidemiology and pharmacology, for example, understanding the effect of a treatment on an outcome is critical for public health and drug development \cite{hernan2006estimating}. In economics, causal questions arise when evaluating policy interventions or market mechanisms \cite{angrist2009mostly}. The standard for such tasks is the randomized controlled trial (RCT), where treatments are randomly assigned to eliminate confounding bias \cite{akobeng2005understanding}. However, RCTs are often infeasible due to ethical constraints, cost, or even the impossibility of manipulating certain variables \cite{rosenbaum2017observation}. 

As a result, there has been extensive interest in estimating causal effects from observational (non-experimental) data. A central challenge in this setting is confounding, i.e., spurious associations induced by common causes of treatment and outcome. A common approach to address confounding is covariate adjustment, where one conditions on a suitable set of variables, an adjustment set, to identify the causal effect \cite{bareinboim2016causal}. Many such approaches are formulated in a non-parametric framework, avoiding restrictive assumptions on functional forms and enabling broad applicability across data-generating processes. For instance, it allows the use of machine-learning-based estimators, which are popular in practice \cite{Talbot2025}.

Despite substantial progress \cite{li2024local,cheng2022local}, 
existing literature on adjustment set identification assumes that the underlying causal graph is acyclic, 
and does not extend to cyclic settings \cite{peters2017elements}. This assumption excludes a broad class of real-world systems that exhibit feedback loops and cyclic dependencies. Such cycles naturally arise in many domains, including biological regulatory networks (e.g., gene–protein feedback), economic systems (e.g., supply–demand interactions), and ecological systems, where variables influence each other over time.

\begin{wrapfigure}[16]{r}{0.3\linewidth}
    \includegraphics[width=\linewidth]{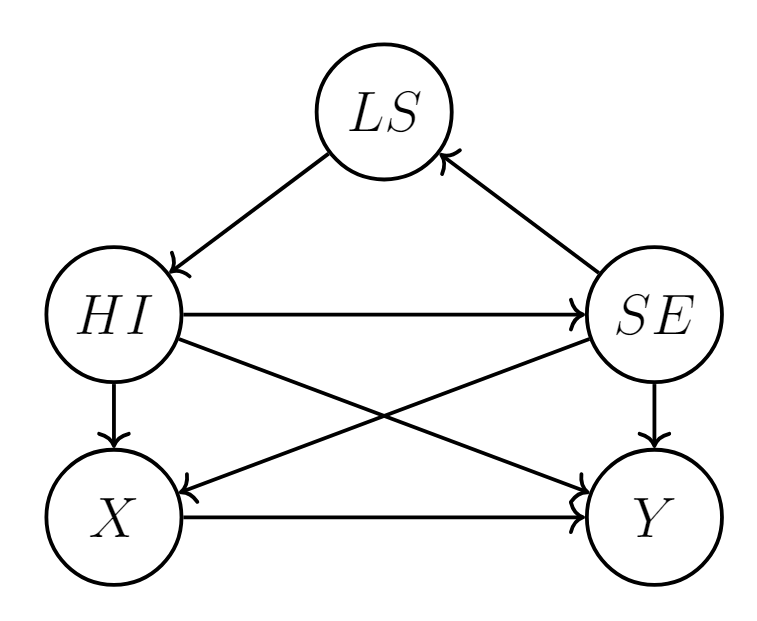}
    \caption{A causal graph of a health care setting. $LS$ represents lifestyle, $HI$ health indicators, $SE$ socioeconomic status, $X$ the treatment and $Y$ the outcome.}
    \label{fig:health_care_sketch}
\end{wrapfigure}

To provide a more detailed example, consider a health care setting. In cancer studies, baseline covariates often include socioeconomic status, lifestyle behaviors, and health indicators \cite{wild2020world}. These variables are not independent and may form feedback structures before treatment assignment. For example, consider the causal graph in Figure~\ref{fig:health_care_sketch}. Socioeconomic status can influence lifestyle. Lifestyle can shape health indicators, which may in turn influence socioeconomic status, such as employment. This creates feedback among covariates. At the same time, these covariates may affect both treatment assignment, such as access to screening or choice of therapy, and cancer recovery outcomes. Thus, the covariate set itself may form a cyclic structure that must be accounted for when selecting valid adjustment sets.

This mismatch between modeling assumptions and real-world structure exposes a key limitation of existing methods: they do not extend to cyclic settings and therefore do not provide guarantees for covariate adjustment in the presence of feedback loops. In this paper, we address this gap by studying covariate selection and causal effect estimation in causal models that allow for cyclic dependencies. Our main contributions are:
(i) we show that a local covariate selection and causal effect estimation method \cite{li2024local}, defined in a nonparametric setting, remains valid in both acyclic and cyclic systems without requiring prior knowledge of the underlying structure;
(ii) we extend theoretical guarantees of soundness and completeness for this method from acyclic to cyclic causal structures;
(iii) we empirically validate these results through extensive simulation on synthetic data, comparing cyclic and acyclic settings under varying structural conditions. 


\section{Preliminaries and problem formulation} \label{sec:preliminaries}

We begin by introducing structural causal models and their graphical representation. A structural causal model (SCM), defined as a tuple $\mathcal{M} = (\mathbf{V}, \mathbf{U}, F, P(\mathbf{U}))$, where $\mathbf{V}$ is the set of endogenous variables, $\mathbf{U}$ is the set of exogenous variables with joint distribution $P(\mathbf{U})$, and $F = \{f_i\}_{i=1}^{|\mathbf{V}|}$ is a collection of functions such that each endogenous variable satisfies a structural equation $X_i = f_i(Pa(X_i), U_i)$. We restrict attention to simple SCMs \cite{bongers2021foundations}, meaning that for every subset of variables, the corresponding subsystem of structural equations admits a unique solution. This ensures well-defined observational and interventional distributions even in the presence of cycles, and guarantees that standard graphical and probabilistic semantics extend to cyclic settings. It is noteworthy to mention that all acyclic SCMs are simple.

An SCM induces a directed graph (DG), which 
may contain directed cycles. Let $G=(\mathbf{V},\mathbf{E})$ be the directed graph associated with $\mathcal{M}$. We use the same symbol $\mathbf{V}$ to denote both the
set of endogenous variables in the SCM and the corresponding node set in
$G$, with each node representing one endogenous variable. The edge set
$\mathbf{E}\subseteq \mathbf{V}\times\mathbf{V}$ contains a directed edge
$X_i\to X_j$ whenever $X_i$ appears as an argument in the structural
function for $X_j$.
We partition the nodes as $\mathbf{V} =  \mathbf{O} \cup \mathbf{L}$,  where $\mathbf{O}$ denotes the set of observed variables, and $\mathbf{L}$ the set of latent (unobserved endogenous) variables. Moreover, we assume that the treatment and outcome are observable, i.e., $X,Y\in \mathbf{O}$. A directed graph is said to be acyclic if it contains no directed cycles, and it is called a directed acyclic graph (DAG); otherwise, it is cyclic.

A path between two nodes $X$ and $Y$ is a sequence $(X = Z_0, Z_1, \dots, Z_k = Y)$ such that each consecutive pair of nodes is adjacent in $G$, and it is called directed if all edges are oriented consistently along the sequence. For any node $X \in \mathbf{V}$, we denote by $Pa(X) = \{Z \in \mathbf{V} \mid Z \to X \in \mathbf{E}\}$ the set of its parents, by $\mathrm{Ch}(X) = \{Z \in \mathbf{V} \mid X \to Z \in \mathbf{E}\}$ the set of its children, and define its ancestors $\text{An}(X)$ and descendants $\mathrm{De}(X)$ as the sets of nodes that admit a directed path to $X$ and from $X$, respectively. A strongly connected component (SCC) is a maximal subset of nodes $\mathbf{C} \subseteq \mathbf{V}$ such that for every $X,Y \in  \mathbf{C}$ there exist directed paths from $X$ to $Y$ and from $Y$ to $X$.

To read off conditional independence from a possibly cyclic graph, we adopt $\sigma$-separation \cite{forre2017markov}. 

\begin{definition}[$\sigma$-separation]\label{def:sigma-sep}
Suppose $G = (\mathbf{V}, \mathbf{E})$ is a DG, $X$ and $Y$ are two distinct variables in $\mathbf{V}$, and $\mathbf{S} \subseteq \mathbf{V} \setminus \{X, Y\}$. A path $\mathcal{P} = (X = Z_0, Z_1, \cdots, Z_k, Z_{k+1} = Y)$ between $X$ and $Y$ in $G$ is $\sigma$-blocked by $\mathbf{S}$ if there exists $1 \leq i \leq k$ such that:
\begin{enumerate}
    \item $Z_i$ is a collider on $\mathcal{P}$ and $Z_i \notin \text{An}_{G}(\mathbf{S})$, or
    \item \begin{enumerate}
        \item $Z_i$ is not a collider on $\mathcal{P}$, $Z_i \in \mathbf{S}$, and
        \item either $Z_i \to Z_{i+1}$ and $Z_{i+1} \notin \mathrm{SCC}_{G}(Z_i)$, or $Z_{i-1} \leftarrow Z_i$ and $Z_{i-1} \notin \mathrm{SCC}_{G}(Z_i)$.
    \end{enumerate} 
\end{enumerate}

\end{definition}
Since condition $2. (b)$ is always satisfied in DAGs, this definition reduces to standard d-separation \cite{pearl2009causality} when $G$ is acyclic. We write $X \indep_{\sigma/d} Y \mid \mathbf{S}$ to denote $\sigma/d$-separation.
We write $X \indep Y \mid \mathbf{S}$ to denote conditional independence in the joint distribution $P(\mathbf{V})$, and assume the Markov and faithfulness properties in the following generalized form: for all disjoint sets $\mathbf{A},\mathbf{B},\mathbf{S} \subseteq \mathbf{V}$, $\mathbf{A} \indep_\sigma \mathbf{B} \mid \mathbf{S}$ if and only if $\mathbf{A} \indep \mathbf{B} \mid \mathbf{S}$. In cyclic graphs this corresponds to $\sigma$-faithfulness, while in DAGs it reduces to standard faithfulness.



The notion of a Markov blanket depends on the underlying graph and separation criterion.
Given a graph $G$, for a node $X \in \mathbf{O}$, the observed Markov Blanket, denoted by $\mathrm{MB}_\sigma^G(X)$, is the minimal set
$\mathbf{S} \subseteq \mathbf{O} \setminus \{X\}$ such that $X \indep_\sigma \mathbf{O} \setminus (\mathbf{S} \cup X) \mid \mathbf{S}$
. When $G$ is a DAG, this reduces to the standard (d-separation-based) Markov blanket, denoted by $\mathrm{MB}_d^G(X)$.


Throughout the paper, we impose the following assumptions: (i) We consider the pre-treatment assumption, which amounts to a 
treatment--outcome structure in which the outcome $Y$ has no descendants, i.e., $\mathrm{De}(Y)=\emptyset$, and the treatment $X$ has either no children or a single child equal to the outcome, i.e., $\mathrm{Ch}(X)=\emptyset$ or $\mathrm{Ch}(X)=\{Y\}$.
(ii) We allow for latent confounding represented by unobserved endogenous variables $\mathbf{L}$, and assume that 
$\sigma$-faithfulness holds for the induced observational distribution restricted to $\mathbf{O}$.

\begin{remark}
    The pre-treatment assumption is common in the literature, see for instance Remark 1 in \cite{li2024local}. In fact,  it is realistic in many application areas, such as economics and epidemiology, where covariates are often collected before treatment assignment and before the outcome is observed \citep{hill2011bayesian,imbens2015causal,wager2018estimation}.
\end{remark} 

\textbf{Example:}  In a health care setting (Figure~\ref{fig:health_care_sketch}), variables such as socioeconomic status (SE) (e.g., pre-treatment employment), health indicators (HI) (e.g., prior hospitalizations), and lifestyle (LS) (e.g., pre-treatment physical activity) can be modeled as covariates measured solely prior to treatment assignment. Although these variables may influence one another dynamically over time, they are represented here as a pre-treatment snapshot and hence by time ordering not affected by the treatment $X$ or outcome $Y$.

\section{Related works}

\paragraph{Adjustment sets and causal effect estimation} Graphical criteria provide necessary and sufficient conditions for covariate adjustment in causal graphs. In DAGs, the back-door criterion \cite{pearl1993bayesian, pearl2009causality} characterizes when a set of variables $\mathbf{Z}$ is a valid adjustment, by requiring that all non-causal 
paths between treatment $X$ and outcome $Y$ are blocked and $\mathbf{Z}$ does not contain descendants of $X$. This criterion has been generalized to characterize necessary and sufficient valid adjustment in equivalence classes of causal graphs with latent confounding, such as 
maximal ancestral graphs (MAGs) \cite{perkovic2018complete}.

In practice, these criteria assume that the causal graph or the equivalence class is available. When the structure is unknown, a common approach is to first estimate an equivalence class (i.e., global structure) using constraint-based algorithms such as PC \cite{spirtes1991algorithm}, and then perform adjustment. One such method is IDA \cite{maathuis2009estimating}, which enumerates all possible parent sets of treatment with the estimated graph and computes causal effect estimates. Building on this, LV-IDA 
\cite{malinsky2017estimating} generalizes the approach to PAGs to account for latent confounding. In contrast, CE-SAT \cite{hyttinen2015calculus} avoids enumerating all graphs in the equivalence class by encoding the problem as a constraint task.

Complementary approaches aim to identify adjustment sets directly from conditional independence (CI) tests without recovering the equivalence graph first. The EHS algorithm  \cite{entner2013data} applies inference rules over CI to determine whether a causal effect is identifiable and to construct a valid adjustment set when it exists. EHS is sound and complete under the pre-treatment assumption, i.e., considering that covariates are not affected by the treatment. However, it requires an exhaustive search over subsets of variables. More recent methods aim to improve efficiency by exploiting locality. CEELS 
\cite{cheng2022local} uses structural patterns and COSO variables to guide a local search for adjustment sets. 
LSAS 
\cite{li2024local} shows that EHS can be restricted to search over the Markov blanket while remaining sound and complete. This reduction in the search space results in shorter run times and substantially fewer independence tests compared to EHS. On the other hand, LDP relaxes the pre-treatment assumption and introduces sufficient conditions, but is no longer complete \cite{maasch2025local}

Overall, existing approaches either rely on global structure learning with high computational cost, or perform local search with some trade-offs between efficiency and completeness, but 
exclusively assume acyclicity.

\paragraph{Markov blankets} Local causal discovery methods aim to identify subsets of variables relevant to a target using CI tests, without recovering the full causal graph. Algorithms such as (Fast)-IAMB \cite{tsamardinos2003algorithms, yaramakala2005speculative}, HITON-MB \cite{aliferis2003hiton} and TC \cite{pellet2008using} 
identify the Markov blanket set 
via CI tests. 
From the perspective of finding valid adjustment sets, the Markov blankets efficiently provide supersets of the relevant variables (treatment and outcome), but do not directly compute valid adjustment sets, since they may include colliders on paths between the treatment and the outcome or descendants of the treatment.

\paragraph{Cyclic causal models} 
Results on covariate adjustment set 
assume acyclicity, where d-separation characterizes conditional independence. However, causal models with cycles require more general notions of separation. \cite{forre2017markov} introduced $\sigma$-separation, a generalization of d-separation for directed cyclic graphs, accounting for strongly connected components dependence. Building on this, \cite{bongers2021foundations} defined the class of simple SCMs and established equivalence notions for cyclic graphs. Furthermore, \cite{mooij2020constraint} introduces the notion of acyclicification, which maps a cyclic graph to an acyclic one while preserving relevant conditional independence relations 
. In addition, \cite{mooij2020constraint} shows that causal discovery algorithms like FCI work for simple cyclic SCMs. These developments provide a foundation for reasoning about conditional independence and equivalence in cyclic systems. However, to the best of our knowledge, they have not been utilized in designing covariate selection algorithms in cyclic models.

\section{Acyclification}\label{sec:acy}

To extend graphical criteria from acyclic to cyclic models, we rely on the notion of $\sigma$-acyclification \cite{mooij2020constraint}, which removes directed cycles while preserving separation properties. 

\begin{definition}[$\sigma$-acyclification]
    Let $G=(\mathbf{V},\mathbf{E})$ be a directed graph, possibly cyclic. An acyclification of $G$ is a directed acyclic graph $G^{\mathrm{acy}}=(\mathbf{V},\mathbf{E}^{\mathrm{acy}})$ defined on the same node set $\mathbf{V}$ and constructed as follows. For any pair of distinct nodes $X, Y \in \mathbf{V}$:
    \begin{enumerate}
        \item if $X \notin \mathrm{SCC}(Y)$, then 
        $X \to Y \in \mathbf{E}^{\mathrm{acy}}$ if and only if $\ \exists\, W \in \mathrm{SCC}(Y) \text{ such that } X \to W \in \mathbf{E}$;
        \item if $X \in \mathrm{SCC}(Y)$
        , then exactly one of the edges $X \to Y$ or $Y \to X$ is included in $\mathbf{E}^{\mathrm{acy}}$ so as to break all directed cycles.
    \end{enumerate}
\end{definition}
Intuitively, all nodes in a SCC are made adjacent in an acyclic manner, while edges between different components are preserved by connecting every node in the source component to every node in the target component. By construction, $G^{\mathrm{acy}}$ is a DAG, and the acyclification is not necessarily unique.

A key property of this construction is that it preserves separation relations. In particular, $\sigma$-separation in $G$ coincides with d-separation in any $\sigma$-acyclification $G^{\mathrm{acy}}$, allowing graphical arguments in cyclic models to be reduced to the acyclic case.

\begin{lemma}[Acyclification transfer of separation \cite{mooij2020constraint}]
\label{lem:acy-transfer-sep}
Let $G=(\mathbf{V},\mathbf{E})$ be a directed graph and let $G^{\mathrm{acy}}$ be any $\sigma$-acyclification of $G$. Then for all distinct $A,B\in \mathbf{V}$ and all $\mathbf{Z}\subseteq \mathbf{V}\setminus\{A,B\}$,
\[
(A \indep_d B \mid \mathbf{Z})_{G^{\mathrm{acy}}}
\quad\iff\quad
(A \indep_\sigma B \mid \mathbf{Z})_G.
\]
\end{lemma}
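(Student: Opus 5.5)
We argue in both directions by transforming paths between $G$ and $G^{\mathrm{acy}}$, in each case using contrapositives: an open path in one graph yields an open path in the other. First we record the structural facts we will use repeatedly. (a) $G$ and $G^{\mathrm{acy}}$ have the same SCC partition up to "collapsing": every SCC $\mathbf{C}$ of $G$ becomes a complete (tournament-like) acyclic subgraph of $G^{\mathrm{acy}}$. (b) Edges between distinct components of $G^{\mathrm{acy}}$ go from every node of a parent component to every node of the child component. (c) Hence for any node set $\mathbf{S}$, $\mathrm{An}_{G}(\mathbf{S})=\mathrm{An}_{G^{\mathrm{acy}}}(\mathbf{S})$, since ancestry only depends on the component-level DAG, which is identical in both graphs, together with reachability inside a component.

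($\Leftarrow$ contrapositive) Suppose a path $\pi$ from $A$ to $B$ in $G$ is $\sigma$-open given $\mathbf{Z}$. We build a d-open path in $G^{\mathrm{acy}}$ by walking component by component.
\begin{itemize}
\item Maximal segments of $\pi$ lying inside one SCC $\mathbf{C}$ are replaced by a single edge, or by a short route, of the tournament on $\mathbf{C}$ between the entry node and the exit node.
\item Edges between components are kept, since by (b) they exist in $G^{\mathrm{acy}}$, possibly with rerouted endpoints inside the same component.
\end{itemize}
The condition 2(b) in the definition of $\sigma$-separation is exactly what makes this work. A conditioned non-collider on $\pi$ may only sit where both neighbouring edges point into its own SCC. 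After shortcutting, such a node either disappears from the path or becomes an interior node of the tournament segment. There we can choose the entry or exit node to avoid conditioned nodes, or handle it by switching to a collider at a node in $\mathrm{An}(\mathbf{Z})$. Colliders on $\pi$ that are ancestors of $\mathbf{Z}$ stay ancestors by (c). We pass to a shortest open walk and use the standard walk-to-path reduction for d-connection.

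($\Rightarrow$ contrapositive) Suppose $\pi$ is d-open in $G^{\mathrm{acy}}$ given $\mathbf{Z}$. Each edge of $G^{\mathrm{acy}}$ between nodes $U,W$ corresponds in $G$ either to a directed path inside one SCC (if $U,W$ are in the same SCC) or to an edge $U\to W'$ followed by a directed path $W'\leadsto W$ within $\mathrm{SCC}(W)$. Substituting these yields a walk in $G$. Non-colliders introduced inside an SCC may be in $\mathbf{Z}$, but they are adjacent only to nodes of their own SCC, so they do not $\sigma$-block by 2(b). Original non-colliders keep their status. Original colliders remain colliders at the same node (the incoming edges' last steps point into it), and their ancestry of $\mathbf{Z}$ is preserved by (c). Finally, we convert the $\sigma$-open walk into a $\sigma$-open path, using the known equivalence of walk and path versions of $\sigma$-separation, which we would prove by shortcutting repeated nodes.

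\textbf{Main obstacle.} I expect the main obstacle to be the $\Leftarrow$ direction. The difficulty is ensuring that shortcutting inside a component never creates a conditioned non-collider blocking the path whose blocking is not permitted. The delicate case is an SCC entered and exited through nodes in $\mathbf{Z}$, because the tournament orientation is arbitrary. I would first try to handle it by the argument that any conditioned node at a component boundary with an outgoing inter-component edge already $\sigma$-blocks $\pi$. If that fails, I would fall back on working with walks throughout and invoking the walk characterization of both separations.
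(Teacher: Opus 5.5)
The paper proves nothing here itself; it cites Mooij and Claassen's equality $\operatorname{IM}_\sigma(G)=\operatorname{IM}_d(G^{\mathrm{acy}})$. Your from-scratch path translation is a legitimate alternative, and your $(\Rightarrow)$ direction works. It only needs $\mathrm{An}_{G^{\mathrm{acy}}}(\mathbf{Z})\subseteq\mathrm{An}_G(\mathbf{Z})$, which is true, plus the $\sigma$ walk-to-path reduction.

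The gap is in facts (b) and (c), on which $(\Leftarrow)$ rests. By rule 1 of the definition, $X\to Y\in\mathbf{E}^{\mathrm{acy}}$ iff $X$ itself has an edge into $\mathrm{SCC}(Y)$. So the head of a cross-component edge is spread over the target component, but its tail is not spread over the source component. You use exactly this correct form in $(\Rightarrow)$. If tails were spread, the lemma would be false: for $A\to B$, $B\to A$, $A\to R$ one has $R\indep_\sigma B\mid A$ in $G$, yet $B\to R$ would be added.

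Consequently (c) is false. Take $A\to B$, $B\to A$, $B\to D$, $P\to A$, $Q\to A$, choose the tournament edge $B\to A$, and let $\mathbf{Z}=\{D\}$. Then $A\in\mathrm{An}_G(D)$, but $\mathrm{An}_{G^{\mathrm{acy}}}(D)=\{B,D,P,Q\}$. The path $P\to A\leftarrow Q$ is $\sigma$-open in $G$. The path your construction outputs keeps the cross edges and keeps the collider at $A$, so it is $d$-blocked in $G^{\mathrm{acy}}$. The $d$-connecting path is $P\to B\leftarrow Q$, which requires moving the collider to a different node of the component.

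What is true, and suffices, is the following: $\mathrm{An}_{G^{\mathrm{acy}}}(\mathbf{S})\subseteq\mathrm{An}_G(\mathbf{S})$, and for every SCC $\mathbf{C}$, $\mathbf{C}\cap\mathrm{An}_G(\mathbf{S})\neq\emptyset$ iff $\mathbf{C}\cap\mathrm{An}_{G^{\mathrm{acy}}}(\mathbf{S})\neq\emptyset$. To see the second part, follow a directed $G$-path into $\mathbf{S}$: the last node of each component on it has a $G^{\mathrm{acy}}$-edge into every node of the next component.

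With this, handle each maximal segment of $\pi$ inside an SCC $\mathbf{C}$ according to its two boundary edges.
\begin{itemize}
\item A boundary edge pointing out of $\mathbf{C}$ has a tail that is a non-collider pointing outside its SCC, so by condition 2(b) and $\sigma$-openness that tail is not in $\mathbf{Z}$. Keep it.
\item A boundary edge pointing into $\mathbf{C}$ may be redirected to any node of $\mathbf{C}$.
\end{itemize}
The three cases are then:
\begin{itemize}
\item Out/out: join the two tails by their tournament edge.
\item In/out: redirect the in-edge to the out-edge's tail.
\item In/in: $\pi$ has a collider inside $\mathbf{C}$, so $\mathbf{C}\cap\mathrm{An}_G(\mathbf{Z})\neq\emptyset$. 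Redirect both edges to a common $c\in\mathbf{C}\cap\mathrm{An}_{G^{\mathrm{acy}}}(\mathbf{Z})$.
\end{itemize}
Segments containing an endpoint of $\pi$ are handled the same way, using that an endpoint is adjacent to every node of its component in the tournament.

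This also removes your ``delicate case'' entirely. A boundary node lying in $\mathbf{Z}$ necessarily has its boundary edge pointing into $\mathbf{C}$, and such nodes are bypassed rather than left as tournament non-colliders. The result is a walk whose colliders lie in $\mathrm{An}_{G^{\mathrm{acy}}}(\mathbf{Z})$ and whose non-colliders lie outside $\mathbf{Z}$, and the standard walk-to-path lemma for $d$-separation finishes the argument.
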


An immediate consequence of Lemma~\ref{lem:acy-transfer-sep} is that local separation-based objects are invariant under acyclification. In particular, Markov blankets coincide in the cyclic and acyclic representations. 

\begin{lemma}[Markov blanket invariance]
\label{lem:mb-acy}
Let $G$ be a directed graph and $G^{\mathrm{acy}}$ any $\sigma$-acyclification. Then for any node $Y$,
\[
\mathrm{MB}^G_{\sigma}(Y) = \mathrm{MB}^{G^{\mathrm{acy}}}_{d}(Y).
\]
\end{lemma}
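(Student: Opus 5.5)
The plan is to reduce the statement to Lemma~\ref{lem:acy-transfer-sep}. The Markov blanket is defined purely through separation statements, so it suffices to show that the family of sets $\mathbf{S}\subseteq \mathbf{O}\setminus\{Y\}$ satisfying
\[
Y \indep \mathbf{O}\setminus(\mathbf{S}\cup\{Y\}) \mid \mathbf{S}
\]
is the same whether the separation is read as $\sigma$-separation in $G$ or as d-separation in $G^{\mathrm{acy}}$. If the two families coincide, so do their minimal elements, and hence $\mathrm{MB}^G_{\sigma}(Y)=\mathrm{MB}^{G^{\mathrm{acy}}}_{d}(Y)$.

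The first step handles the fact that Lemma~\ref{lem:acy-transfer-sep} is stated for single nodes $A,B$, while the Markov blanket condition separates $Y$ from a \emph{set} $\mathbf{B}=\mathbf{O}\setminus(\mathbf{S}\cup\{Y\})$. For both criteria, set separation is defined pathwise: $Y$ is separated from $\mathbf{B}$ given $\mathbf{S}$ iff every path from $Y$ to any $B\in\mathbf{B}$ is blocked. Equivalently, $Y$ is separated from every single $B\in\mathbf{B}$ given $\mathbf{S}$. Since $\mathbf{S}\subseteq \mathbf{V}\setminus\{Y,B\}$ for each such $B$, Lemma~\ref{lem:acy-transfer-sep} applies to each pair $(Y,B)$. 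Taking the conjunction over $B\in\mathbf{B}$ gives the set-level equivalence
\[
(Y\indep_\sigma \mathbf{B}\mid \mathbf{S})_G \iff (Y\indep_d \mathbf{B}\mid\mathbf{S})_{G^{\mathrm{acy}}}.
\]
The edge case $\mathbf{B}=\emptyset$ makes both sides vacuously true.

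The second step applies this for every candidate $\mathbf{S}$ to conclude that the two families of separating sets are identical. Then $\mathbf{S}$ is minimal in one family iff it is minimal in the other.

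The main obstacle I expect is well-definedness: the text calls the Markov blanket ``the minimal set'', which presupposes uniqueness. I would sidestep this rather than prove it. The claimed equality holds at the level of the whole family of separating sets, so whatever selection rule defines the blanket (unique minimal element, or e.g.\ the intersection of minimal sets) yields the same object on both sides. If uniqueness were required, I would invoke the standard DAG result that the d-separation Markov blanket among observed variables is unique; the family equality then transfers uniqueness to the $\sigma$ side. The remaining checks, that the node sets and $\mathbf{O}$ coincide in $G$ and $G^{\mathrm{acy}}$, are immediate from the definition of acyclification.
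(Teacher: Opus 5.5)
Your proposal is correct and follows the paper's route: use Lemma~\ref{lem:acy-transfer-sep} to show that a set is separating for $Y$ in $G$ exactly when it is separating in $G^{\mathrm{acy}}$, then conclude by minimality. Your version is somewhat more careful than the paper's. You explicitly reduce set-level separation to the pairwise statements the lemma covers, and you flag that the paper's ``$\mathbf{S} \supseteq \mathrm{MB}$ by minimality'' step silently relies on uniqueness of the Markov boundary.
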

The proofs of all lemmas and theorems are given in the appendix. We next show that acyclification can be chosen so as to preserve specific local edge structures.

\begin{lemma}[Preservation of the edge $X \to Y$ under $\sigma$-acyclification]
\label{lem:edge-preserved}
    Let $G=(\mathbf{V},\mathbf{E})$ be a directed graph, possibly cyclic, and let $X,Y \in \mathbf{V}$ be distinct nodes. Assume that $\mathrm{De}(Y)=\emptyset$, and $\mathrm{De}(X) \subseteq \{Y\}$. Then there exists a $\sigma$-acyclification $\hat G^{\mathrm{acy}}=(\mathbf{V},\mathbf{E}^{\mathrm{acy}})$ such that $Y$ remains a sink node (has no children), no node other than $Y$ can be a child of $X$, i.e., $\mathrm{De}(X) \subseteq \{Y\}$, and the edge $X \to Y$ is present in $\mathbf{E}^{\mathrm{acy}}$ if and only if it is present in $\mathbf{E}$.
\end{lemma}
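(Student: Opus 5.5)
The approach is to first show that, under the hypotheses, $X$ and $Y$ are singleton strongly connected components, and then to check what the construction of $\sigma$-acyclification does to their incoming and outgoing edges. Since $\mathrm{De}(Y)=\emptyset$, $Y$ has no children in $G$, so no directed cycle passes through $Y$. Hence $\mathrm{SCC}_G(Y)=\{Y\}$. Similarly, every descendant of $X$ lies in $\{Y\}$, and $Y$ has no descendants. So any directed path leaving $X$ has length at most one and ends at $Y$, and it can never return to $X$. Hence $\mathrm{SCC}_G(X)=\{X\}$ as well.

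Next we choose the acyclification. Rule 2 of the definition only affects pairs inside a common SCC, so we orient those pairs by any topological order compatible with breaking cycles. This is always possible: within each SCC fix an arbitrary linear order and orient edges forward. None of these choices involve $X$ or $Y$, since their SCCs are singletons. Call the result $\hat G^{\mathrm{acy}}$.

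Edges incident to $X$ and $Y$ are then governed entirely by rule 1.
\begin{itemize}
\item \textbf{Children of $Y$.} For any $W\neq Y$, the edge $Y\to W$ is in $\mathbf{E}^{\mathrm{acy}}$ only if $Y\to W'\in\mathbf{E}$ for some $W'\in\mathrm{SCC}(W)$. No such edge exists because $\mathrm{Ch}_G(Y)=\emptyset$. So $Y$ is a sink in $\hat G^{\mathrm{acy}}$.
\item \textbf{Children of $X$.} The edge $X\to W$ is in $\mathbf{E}^{\mathrm{acy}}$ only if $X$ has a child in $\mathrm{SCC}_G(W)$, and the only possible child is $Y$. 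So $W\in\mathrm{SCC}(Y)=\{Y\}$, and $\mathrm{Ch}_{\hat G^{\mathrm{acy}}}(X)\subseteq\{Y\}$. Combined with $Y$ being a sink, this gives $\mathrm{De}_{\hat G^{\mathrm{acy}}}(X)\subseteq\{Y\}$.
\item \textbf{The edge $X\to Y$.} Rule 1 with $\mathrm{SCC}(Y)=\{Y\}$ states that $X\to Y\in\mathbf{E}^{\mathrm{acy}}$ iff $X\to Y\in\mathbf{E}$.
\end{itemize}

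The only step needing care is the first one: verifying the SCC claims, in particular that $X$ cannot lie on a cycle. The argument is that a cycle through $X$ would force $X\in\mathrm{De}(X)$ as a nontrivial descendant, or would pass through $Y$, which has no children. One should also note that the existence of a valid rule-2 orientation is standard, since it is just a linear order on each SCC. Everything else follows directly from the definition of $\sigma$-acyclification.
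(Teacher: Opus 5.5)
Your proof is correct and follows the same route as the paper's: show $\mathrm{SCC}_G(Y)=\{Y\}$ and $\mathrm{SCC}_G(X)=\{X\}$, then read off the edges incident to $X$ and $Y$ from the between-component rule of $\sigma$-acyclification. Your version is slightly more explicit than the paper's. It checks the children of $X$ directly via rule 1 and notes that a valid within-SCC orientation exists, but the argument is essentially the same.
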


In the proofs of the remaining theorems and lemmas, we repeatedly use the same strategy: map a possibly cyclic directed graph $G$ to an acyclification $G^{\mathrm{acy}}$ using results from this section, apply existing results for DAGs, and then transfer the conclusions back to $G$.

\section{Adjustment set}

A common way to identify the causal effect of $X$ on $Y$ using solely observational data is based on using a valid adjustment set \cite{pearl2009causality,peters2017elements}.

\begin{definition}[Valid adjustment set]
Let \(G=(\mathbf{V},\mathbf{E})\) be a directed graph, and \(\{X\},\{Y\},\mathbf{Z} \subseteq \mathbf{O}\) be pairwise disjoint node sets. The set \(\mathbf{Z}\) is called an \emph{adjustment set} for estimating the causal effect of \(X\) on \(Y\) if, for all probability distributions \(P\) Markov with respect to \(G\), the following holds:
\[
P(Y \mid \operatorname{do}(X)) \;=\; \sum_{z\in \mathbf{Z}} P(Y \mid X, \mathbf{Z}=z)\, P(\mathbf{Z}=z).
\]
\end{definition}

The invariance results in Section~\ref{sec:acy} allow us to reduce valid adjustment questions in cyclic graphs to the acyclic case. For DAGs, there exists a valid adjustment set known as the backdoor adjustment set formulated in terms of $d$-separation \cite{pearl2009causality,peters2017elements}. Replacing $d$-separation by $\sigma$-separation, we get a backdoor adjustment set in possibly cyclic SCMs.

\begin{definition}[Backdoor adjustment set in possibly cyclic SCMs]
\label{def:backdoor-adjustment}
Let $G=(\mathbf{V},\mathbf{E})$ be a (possibly cyclic) directed graph satisfying the assumptions of Section~\ref{sec:preliminaries}, and let $X,Y \in \mathbf{O}$ be treatment and outcome variables. A set $\mathbf{Z} \subseteq \mathbf{O} \setminus \{X,Y\}$ is a backdoor  adjustment set for the effect of $X$ on $Y$ if the following conditions hold:
\begin{enumerate}
    \item $\mathbf{Z} \cap \mathrm{De}(X) = \emptyset$;
    \item $\mathbf{Z}$ $\sigma$-blocks every backdoor path from $X$ to $Y$ (i.e., every path between $X$ and $Y$ whose first edge on the path has an arrowhead into $X$).
\end{enumerate}

\end{definition}




If the SCM is acyclic, this reduces to the standard definition. Therefore, in DAGs, Definition~\ref{def:backdoor-adjustment} of backdoor adjustment provides valid adjustment sets. To show that the definition gives a valid adjustment set in simple cyclic SCMs, we can use results from \cite{forre2020causal}. To use their results, we model interventions by augmenting the graph with an intervention node \cite{pearl2009causality,peters2017elements,forre2020causal}. For a treatment variable $X$, we define $G^{I_X}$ as the graph obtained by adding a new node $I_X$ and a single directed edge $I_X \to X$. The node $I_X$ encodes the external intervention $\operatorname{do}(X)$ and has no parents. With this notation, we can show the equivalence of their formulation and Definition~\ref{def:backdoor-adjustment}.

\begin{lemma}[Equivalence with the intervention-node criterion]
\label{lem:equiv-backdoor}
Let $G$ be a (possibly cyclic) SCM and $\mathbf{Z} \subseteq \mathbf{V} \setminus \{X,Y\}$. Let $G^{I_X}$ denote the graph obtained by adding an intervention node $I_X$ with the single directed edge $I_X \to X$. Then $\mathbf{Z}$ is a valid backdoor adjustment set in the sense of Definition~\ref{def:backdoor-adjustment} if and only if
\begin{enumerate}
    \item $\mathbf{Z} \perp_\sigma I_X$ in $G^{I_X}$, and
    \item $Y \perp_\sigma I_X \mid X,\mathbf{Z}$ in $G^{I_X}$.
\end{enumerate}
\end{lemma}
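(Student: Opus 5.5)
Because $I_X$ has no parents and exactly one child $X$, every path in $G^{I_X}$ that starts at $I_X$ begins with the edge $I_X \to X$. Adding $I_X$ creates no new cycles, so $\mathrm{SCC}_{G^{I_X}}(I_X)=\{I_X\}$, and the SCCs and ancestor relations among the nodes of $G$ are unchanged. The plan is to translate each of the two $\sigma$-separation statements into a statement about paths in $G$ that leave $X$, and to match them with conditions 1 and 2 of Definition~\ref{def:backdoor-adjustment}.

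\emph{Condition 1.} Take $Z\in\mathbf{Z}$ and a path $I_X \to X \cdots Z$ in $G^{I_X}$, with empty conditioning set. With nothing conditioned on, the non-collider clause of Definition~\ref{def:sigma-sep} never applies, so the path is $\sigma$-open exactly when it has no collider. A collider-free path from $I_X$ whose first edge points into $X$ must continue as a directed path $X \to \cdots \to Z$: once an edge points forward, a later edge pointing back would create a collider. Hence $\mathbf{Z} \perp_\sigma I_X$ if and only if no element of $\mathbf{Z}$ is a descendant of $X$, which is condition 1 of Definition~\ref{def:backdoor-adjustment}.

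\emph{Condition 2.} Now condition on $\{X\}\cup\mathbf{Z}$. On any path $I_X \to X \ast\!\!-\!\!\ast W \cdots Y$, the node $X$ is conditioned on.
\begin{itemize}
\item If the next edge is $X \leftarrow W$, then $X$ is a collider and lies in $\mathrm{An}(\{X\}\cup\mathbf{Z})$, so $X$ does not block. The remainder $X \leftarrow W\cdots Y$ is precisely a backdoor path of $G$. Colliders on that remainder are judged by ancestry of $\{X\}\cup \mathbf{Z}$ rather than of $\mathbf{Z}$, and I use the pre-treatment assumption to control this difference. Since $\mathrm{De}(X)\subseteq\{Y\}$ and $\mathrm{De}(Y)=\emptyset$, we have $\mathrm{An}(X)=\mathrm{An}(\{X\})$ with nothing cyclic through $X$. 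A collider $C$ that lies in $\mathrm{An}(X)$ but not in $\mathrm{An}(\mathbf{Z})$ gives a directed path $C\to\cdots\to X$. Splicing this path into the backdoor path at $C$ yields another backdoor path (or an open path via $X$), which is the standard d-separation argument lifted to $\sigma$-separation. Non-collider blocking by $\mathbf{Z}$ is identical in both formulations, since $X$ is a singleton SCC: $\mathrm{De}(X)\subseteq\{Y\}$ and $Y$ is a sink.
\item If the next edge is $X \to W$, then $X$ is a non-collider in the conditioning set, and $W\notin \mathrm{SCC}(X)$, so the path is $\sigma$-blocked at $X$ by clause 2(b). This is exactly why only backdoor paths matter.
\end{itemize}
Consequently, $Y \perp_\sigma I_X \mid X,\mathbf{Z}$ if and only if every backdoor path from $X$ to $Y$ is $\sigma$-blocked by $\mathbf{Z}$, or equivalently by $\{X\}\cup\mathbf{Z}$, with the splicing argument showing that the two choices agree.

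\emph{Alternative route.} One could instead invoke Lemma~\ref{lem:edge-preserved} and Lemma~\ref{lem:acy-transfer-sep}. Acyclify $G^{I_X}$ so that $I_X\to X$ and $X$'s local structure are preserved, then apply the known DAG equivalence between the backdoor criterion and the intervention-node criterion. This requires checking that the descendant sets and backdoor blocking agree across acyclification, which holds because $X$ and $Y$ are singleton SCCs. I expect the main obstacle to be the collider-ancestry subtlety in condition 2: colliders that are ancestors of $X$ but not of $\mathbf{Z}$. For this I would first try the splicing argument sketched above, and otherwise fall back on the DAG result through acyclification.
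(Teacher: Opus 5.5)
Your proposal is correct and follows essentially the same route as the paper. Every path from $I_X$ begins with $I_X\to X$, so the unconditioned case reduces condition 1 to $\mathbf{Z}\cap\mathrm{De}(X)=\emptyset$. For condition 2, paths leaving $X$ through an outgoing edge are $\sigma$-blocked because $\mathrm{SCC}(X)=\{X\}$, and paths entering $X$ reduce to backdoor blocking.

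Your splicing step for colliders in $\mathrm{An}(X)\setminus\mathrm{An}(\mathbf{Z})$ covers a point that the paper's ``if'' direction skips: it passes directly from ``blocked by $\mathbf{Z}$'' to ``blocked given $X,\mathbf{Z}$''. When you write the step out, splice at the offending collider $C$ closest to $Y$. Then shortcut at the node of the directed path $C\to\cdots\to X$ closest to $X$ that also lies on the segment from $C$ to $Y$. This guarantees the result is a simple backdoor path that is $\sigma$-open given $\mathbf{Z}$.
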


Using Lemma~\ref{lem:equiv-backdoor}, we can claim that, by Corollary~8.4 from \cite{forre2020causal}, Definition~\ref {def:backdoor-adjustment} is indeed a valid adjustment set for simple cyclic SCMs. Furthermore, with this result, we can show a useful property of backdoor adjustment sets with respect to Markov blankets, which was shown for acyclic SCMs by \cite{li2024local}.

\begin{lemma}[Existence of an adjustment set]
\label{lem:existence-adjustement}
Let \(G=(\mathbf{V},\mathbf{E})\) be a simple (possibly cyclic) directed graph satisfying the assumptions of Section \ref{sec:preliminaries}.
Then there exists a subset \(\mathbf{Z}\subseteq \mathbf{O}\) that is a backdoor adjustment set for \((X, Y)\) if and only if there exists a subset \(\mathbf{Z} \subseteq \mathrm{MB}^G_{\sigma}(Y)\setminus\{X\}\) that is an adjustment set for \((X,Y)\).
\end{lemma}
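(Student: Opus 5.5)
The strategy is the one announced at the end of Section~\ref{sec:acy}: move to a suitable acyclification, invoke the acyclic result of \cite{li2024local}, and transfer the conclusion back to $G$. The direction ``$\Leftarrow$'' needs no work if ``adjustment set'' on the right is read as backdoor adjustment set. A subset of $\mathrm{MB}^G_\sigma(Y)\setminus\{X\}\subseteq\mathbf{O}$ that is a backdoor adjustment set is in particular a subset of $\mathbf{O}$. If the right-hand side is instead read as a valid adjustment set, we simply reuse the sets produced in the other direction.

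For ``$\Rightarrow$'', we fix the acyclification $\hat G^{\mathrm{acy}}$ given by Lemma~\ref{lem:edge-preserved}. In it $Y$ is a sink, $\mathrm{De}(X)\subseteq\{Y\}$, and the edge $X\to Y$ is present exactly when it is present in $G$, so $\hat G^{\mathrm{acy}}$ satisfies the pre-treatment assumption of Section~\ref{sec:preliminaries}. The first task is to translate the backdoor property. By Lemma~\ref{lem:equiv-backdoor}, $\mathbf{Z}$ is a backdoor set in $G$ iff $\mathbf{Z}\perp_\sigma I_X$ and $Y\perp_\sigma I_X\mid X,\mathbf{Z}$ in $G^{I_X}$. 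Since $I_X$ is a parentless singleton SCC with a single edge into $X$, the graph $(\hat G^{\mathrm{acy}})^{I_X}$ is an acyclification of $G^{I_X}$. Lemma~\ref{lem:acy-transfer-sep}, extended to sets by the usual union over pairs, then turns both conditions into the corresponding $d$-separations. Applying the DAG version of Lemma~\ref{lem:equiv-backdoor}, which is the standard intervention-node characterization of the backdoor criterion, shows that $\mathbf{Z}$ is a backdoor set in $G$ iff it is one in $\hat G^{\mathrm{acy}}$.

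Next we apply Theorem/Lemma of \cite{li2024local} to the DAG $\hat G^{\mathrm{acy}}$ with the induced distribution over $\mathbf{O}$. Their result says that a backdoor set exists iff one exists inside $\mathrm{MB}^{\hat G^{\mathrm{acy}}}_d(Y)\setminus\{X\}$. Lemma~\ref{lem:mb-acy} gives $\mathrm{MB}^{\hat G^{\mathrm{acy}}}_d(Y)=\mathrm{MB}^G_\sigma(Y)$. Transferring back through the equivalence just established, we obtain $\mathbf{Z}'\subseteq \mathrm{MB}^G_\sigma(Y)\setminus\{X\}$ that is a backdoor set in $G$. Corollary~8.4 of \cite{forre2020causal} then makes $\mathbf{Z}'$ a valid adjustment set.

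The main obstacle is checking that the hypotheses of \cite{li2024local} hold for $\hat G^{\mathrm{acy}}$, especially latent variables and observed Markov blankets. Their setting is a DAG with latents, and the Markov blanket is defined relative to $\mathbf{O}$. Since $\hat G^{\mathrm{acy}}$ has the same node set and partition $\mathbf{O}\cup\mathbf{L}$, and $d$-separation among observed sets matches $\sigma$-separation in $G$, the observed independence model is identical. I would argue that their proof uses only graphical separation statements over $\mathbf{O}$ together with the pre-treatment structure, both of which are preserved by Lemma~\ref{lem:edge-preserved}. A secondary point is that the set-valued form of Lemma~\ref{lem:acy-transfer-sep} is needed for $\mathbf{Z}\perp I_X$, which follows because separation of sets is equivalent to separation of all pairs for both $d$- and $\sigma$-separation.
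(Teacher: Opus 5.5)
Your argument is correct at the paper's level of rigor. It follows the paper's skeleton: acyclify, transfer backdoor sets via Lemma~\ref{lem:equiv-backdoor} and the $\sigma$/$d$ equivalence, apply \cite{li2024local} in the acyclic graph, and pull the blanket back with Lemma~\ref{lem:mb-acy}. The intermediate steps differ in a few ways.

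\textbf{Where you are more careful.} The paper cites Lemma~\ref{lem:equiv-backdoor} plus separation equivalence without addressing the intervention node. You check that $(\hat G^{\mathrm{acy}})^{I_X}$ is an acyclification of $G^{I_X}$. This check also needs $\mathrm{SCC}(X)=\{X\}$, not only that $I_X$ is a singleton SCC, since otherwise the acyclification joins $I_X$ to all of $\mathrm{SCC}(X)$. It holds under the pre-treatment assumption.

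\textbf{Where your choice is unnecessary.} Using Lemma~\ref{lem:edge-preserved} to pick a special acyclification is harmless but not needed. Because $X$ and $Y$ are singleton SCCs, every $\sigma$-acyclification keeps $\mathrm{Ch}(X)\subseteq\{Y\}$ and keeps $Y$ a sink.

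\textbf{Steps the paper adds that you skip.} The paper uses \cite{perkovic2018complete} to identify backdoor sets with adjustment sets in the DAG (singleton $X$, pre-treatment). It then passes to the latent-projection MAG $\mathcal{M}_{\mathbf{O}}$, because the \cite{li2024local} result it invokes is stated for MAGs over $\mathbf{O}$. Lemma~\ref{lem:eqmbmag} then identifies $\mathrm{MB}^{\mathcal{M}_{\mathbf{O}}}_m(Y)$ with $\mathrm{MB}^{G^{\mathrm{acy}}}_d(Y)$. This detour is what discharges the ``main obstacle'' you flag. You do not need to re-read their proof to confirm it only uses observed separation statements. The hypotheses match because the MAG has the same observed independence model and the same Markov blanket as the DAG.

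\textbf{Where your ending adds something.} Closing with Corollary~8.4 of \cite{forre2020causal} makes explicit that the set you produce is a valid adjustment set in $G$ itself. The paper's chain of equivalences leaves this implicit.

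\textbf{One remark is not an argument.} Suppose the right-hand side is read as ``valid adjustment set''. Then ``reusing the sets produced in the other direction'' is circular for $\Leftarrow$: those sets were built from a backdoor set, which is exactly what must be produced. That reading needs the completeness direction, namely that under the pre-treatment assumption every adjustment set is a backdoor set. The paper supplies this at the level of $G^{\mathrm{acy}}$ via \cite{perkovic2018complete}. Under the backdoor reading, which is effectively what the paper's chain establishes, your trivial $\Leftarrow$ is fine.
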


\section{Algorithm}

We now introduce two rules, $R1$ and $R2$, that characterize the presence or absence of a direct causal effect from $X$ to $Y$ using only conditional independence relations restricted to the Markov blanket of $Y$, given the assumptions in Section \ref{sec:preliminaries}. 
The rules are related to the causal estimation criteria proposed in \cite{entner2013data, li2024local} for acyclic models. Our contribution extends these works to possibly cyclic SCMs.

\begin{definition}[R1]\label{def:R1}
Let $G=(\mathbf{V},\mathbf{E})$ satisfy the assumptions of Section \ref{sec:preliminaries}. Suppose there exists a variable $W \in \mathrm{MB}^G_{\sigma}(X)\setminus\{Y\}$ and a set $\mathbf{Z} \subseteq \mathrm{MB}^G_{\sigma}(Y)\setminus\{X,W\}$ such that
\[
W \not\indep Y \mid \mathbf{Z}
\quad\text{and}\quad
W \indep Y \mid \mathbf{Z} \cup \{X\}.
\]
Then $\mathbf{Z}$ is a valid backdoor adjustment set for $(X,Y)$ and $X \to Y \in \mathbf{E}$.
\end{definition}

To provide some insight regarding Definition~\ref {def:R1}, since $Y$ has no descendants, the first condition ($W \not\indep Y \mid \mathbf{Z}$) states that there is some open path from $W$ to $Y$, despite the presence of $\mathbf{Z}$. At the same time, this path can only go through $X$ to $Y$, since adding $X$ to the conditioning set closes the path by the second condition ($W \indep Y \mid \mathbf{Z} \cup \{X\}$). Therefore, $\mathbf{Z}$ closes the backdoor of $X$, and there has to be a causal effect from $X$ to $Y$. Restricting \(\mathbf{Z}\) and $W$ to be inside the Markov blankets is mainly for reducing the search space. 

\begin{definition}[R2]\label{def:R2}
Let $G=(\mathbf{V},\mathbf{E})$ satisfy the assumptions of Section \ref{sec:preliminaries}. If either of the following holds:
\begin{enumerate}
    \item $ \exists \ \mathbf{Z} \subseteq \mathrm{MB}^G_{\sigma}(Y)\setminus\{X\}$ such that $X \indep Y \mid \mathbf{Z}$,
    \item $\exists \ W \in \mathrm{MB}^G_{\sigma}(X)\setminus\{Y\}$ and $\mathbf{Z} \subseteq \mathrm{MB}^G_{\sigma}(Y)\setminus\{X,W\}$ such that $W \not\indep X \mid \mathbf{Z}
    \quad\text{and}\quad
    W \indep Y \mid \mathbf{Z}$,
\end{enumerate}
then $X \to Y \notin \mathbf{E}$.
\end{definition}

For Definition~\ref{def:R2}, $(1)$ is trivial in the sense that if there exists a set that separates $X$ and $Y$, then by $\sigma$-faithfulness, there is no causal effect. In $(2)$, the first condition ($W \not\indep X \mid \mathbf{Z}$) ensures that there is an active path from $W$ to $X$ despite the presence of $\mathbf{Z}$. The second condition ($ W \indep Y \mid \mathbf{Z}$) ensures there is no active path from $W$ to $Y$ given $\mathbf{Z}$. If there were an edge from $X$ to $Y$, one of these statements would be false. The Markov blankets are again used for shrinking the search space for $\mathbf{Z}$ and $W$.

Note that the formulations and explanations for both $R1$ and $R2$ are already agnostic to the nature of the underlying paths; without further specification, this reasoning seems to allow both cyclic and acyclic causal structures. To prove this, we need the following lemma.

\begin{lemma}[Invariance of witness variables under $\sigma$-acyclification]
\label{lem:witness-invariance}
Let $G$ be a (possibly cyclic) directed graph over $\mathbf{V}$ satisfying $\sigma$-faithfulness
, and let $G^{\mathrm{acy}}$ be a $\sigma$-acyclification of $G$. Let $X,Y \in \mathbf{O}$ and $W \in \mathbf{O} \setminus \{X,Y\}$. Then, there exists $W \in \mathrm{MB}^G_\sigma(X)\setminus\{Y\}$ satisfying the R1/R2 conditions in $G$ if and only if such a $W$ exists in $G^{\mathrm{acy}}$.
\end{lemma}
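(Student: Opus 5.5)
The plan is to observe that R1 and R2 are built from two kinds of ingredients, and to show each is invariant under passage from $G$ to $G^{\mathrm{acy}}$. The first ingredient is the candidate search spaces, namely $\mathrm{MB}^G_\sigma(X)\setminus\{Y\}$ for $W$ and $\mathrm{MB}^G_\sigma(Y)\setminus\{X,W\}$ for $\mathbf{Z}$. The second is a finite list of (in)dependence statements of the form $A \indep B \mid \mathbf{S}$ or $A \not\indep B \mid \mathbf{S}$ with $A,B\in\{W,X,Y\}$ and $\mathbf{S}\subseteq\mathbf{O}$.

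\emph{Search spaces.} I would invoke Lemma~\ref{lem:mb-acy} twice, once for $X$ and once for $Y$, to get $\mathrm{MB}^G_\sigma(X)=\mathrm{MB}^{G^{\mathrm{acy}}}_d(X)$ and $\mathrm{MB}^G_\sigma(Y)=\mathrm{MB}^{G^{\mathrm{acy}}}_d(Y)$. Removing the same elements $\{Y\}$ or $\{X,W\}$ from both sides then shows the candidate sets for $W$ and for $\mathbf{Z}$ coincide in $G$ and in $G^{\mathrm{acy}}$.

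\emph{Individual statements.} Under $\sigma$-faithfulness and the Markov property, each statement $A \indep B \mid \mathbf{S}$ in $P$ is equivalent to $(A\indep_\sigma B\mid \mathbf{S})_G$. By Lemma~\ref{lem:acy-transfer-sep}, applied with $\mathbf{S}\subseteq \mathbf{V}\setminus\{A,B\}$, this is equivalent to $(A\indep_d B\mid\mathbf{S})_{G^{\mathrm{acy}}}$. Dependence statements transfer by negating both sides. So, read graphically, each conjunct of R1, namely $W\not\indep Y\mid\mathbf{Z}$ and $W\indep Y\mid \mathbf{Z}\cup\{X\}$, has the same truth value in both graphs. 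The same holds for each clause of R2, namely $X\indep Y\mid \mathbf{Z}$, and $W\not\indep X\mid\mathbf{Z}$ together with $W\indep Y\mid\mathbf{Z}$. A consequence is that $P$ is Markov and faithful to $G^{\mathrm{acy}}$ under d-separation, so "satisfying the conditions in $G^{\mathrm{acy}}$" has an unambiguous meaning.

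\emph{Assembling the equivalence.} For the forward direction, fix a witness pair $(W,\mathbf{Z})$ for R1 (respectively R2) in $G$. By the search-space step it is admissible in $G^{\mathrm{acy}}$, and by the statement step every condition transfers. For the converse, run the same argument with the roles of $G$ and $G^{\mathrm{acy}}$ swapped; both lemmas are biconditionals, so this is immediate. In fact the same $W$ and the same $\mathbf{Z}$ serve as witnesses in both graphs.

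The main obstacle is bookkeeping about observed versus latent variables. The Markov blanket is defined relative to $\mathbf{O}$, and Lemma~\ref{lem:mb-acy} is stated for a node of $G$ without reference to $\mathbf{O}$. So I would check that its proof goes through verbatim with the observed restriction: it uses only pointwise transfer of separations whose arguments lie in $\mathbf{O}$, and the sets $\mathbf{O}\setminus(\mathbf{S}\cup\{X\})$ are handled by the set version of Lemma~\ref{lem:acy-transfer-sep}, which follows from the pointwise version because a set separation holds exactly when every pairwise path is blocked. The other point to check is that $\mathbf{S}$ never contains $A$ or $B$, which holds since $W\neq X,Y$ and $\mathbf{Z}$ excludes $X$ and $W$. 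I would also remark that the result holds for any choice of $\sigma$-acyclification, since neither lemma depends on that choice.
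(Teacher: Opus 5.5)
Your proposal is correct and takes essentially the same route as the paper. Lemma~\ref{lem:mb-acy} makes the candidate sets coincide in $G$ and $G^{\mathrm{acy}}$, and Lemma~\ref{lem:acy-transfer-sep}, together with negation for the dependence statements, transfers each R1/R2 condition, so the same pair $(W,\mathbf{Z})$ serves as a witness in both graphs. Your version is in fact more complete than the paper's, which writes out only $\mathrm{MB}(X)$ and the two R1 conditions; you also handle $\mathrm{MB}(Y)$ as the search space for $\mathbf{Z}$, the R2 clauses, and the check that no conditioning set contains either of the separated nodes.
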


With the help of Lemma~\ref{lem:witness-invariance}, our main result is that the rules remain sound and complete in possibly cyclic SCMs.

\begin{theorem}[Soundness and completeness in possibly cyclic SCMs]
\label{theom:soundcomp}
    Under our assumptions in Section \ref{sec:preliminaries},
    Let $G$ be a simple SCM with treatment-outcome pair $(X, Y)$. Then R1 and R2 are sound. Moreover, if neither R1 nor R2 applies, the causal effect of $X$ on $Y$ is not identifiable from the conditional independence relations.
\end{theorem}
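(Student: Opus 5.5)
The plan is to reduce everything to the acyclic case via $\sigma$-acyclification and then invoke the soundness and completeness results of \cite{entner2013data, li2024local} for DAGs. Let $G$ be the (possibly cyclic) graph of the simple SCM. By Lemma~\ref{lem:edge-preserved}, we fix a $\sigma$-acyclification $\hat G^{\mathrm{acy}}$ in which $Y$ is still a sink, $\mathrm{Ch}(X)\subseteq\{Y\}$, and $X\to Y$ is present in $\hat G^{\mathrm{acy}}$ exactly when it is present in $G$. The pre-treatment assumption of Section~\ref{sec:preliminaries} therefore holds in $\hat G^{\mathrm{acy}}$, so the DAG results apply there. Lemma~\ref{lem:acy-transfer-sep} shows that $\hat G^{\mathrm{acy}}$ and $G$ induce the same separation statements over $\mathbf{O}$. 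Since $\sigma$-faithfulness holds for $P$ with respect to $G$, it follows that $P$ is d-Markov and d-faithful with respect to $\hat G^{\mathrm{acy}}$. Lemma~\ref{lem:mb-acy} gives $\mathrm{MB}^G_\sigma(X)=\mathrm{MB}^{\hat G^{\mathrm{acy}}}_d(X)$ and likewise for $Y$, so the search spaces of R1 and R2 coincide in the two graphs. Lemma~\ref{lem:witness-invariance} then says that a pair $(W,\mathbf{Z})$ witnesses R1 (resp.\ R2) in $G$ iff it does so in $\hat G^{\mathrm{acy}}$.

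\emph{Soundness.} Suppose R1 fires in $G$ with witness $(W,\mathbf{Z})$. Then it fires in $\hat G^{\mathrm{acy}}$, and by the acyclic soundness of LSAS \cite{li2024local} two conclusions follow: $\mathbf{Z}$ d-blocks all backdoor paths from $X$ to $Y$ in $\hat G^{\mathrm{acy}}$, and $X\to Y\in\hat{\mathbf{E}}^{\mathrm{acy}}$. By the choice of acyclification, $X\to Y\in\mathbf{E}$.

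We next transfer the backdoor property back to $G$. For this we use Lemma~\ref{lem:equiv-backdoor}, which replaces the path condition by the separation statements $\mathbf{Z}\perp I_X$ and $Y\perp I_X\mid X,\mathbf{Z}$. These statements are invariant under acyclification provided that acyclifying $G^{I_X}$ yields $\hat G^{\mathrm{acy}}$ augmented with $I_X\to X$. That holds because $I_X$ is a singleton SCC; the only point to check is that $X$'s SCC is $\{X\}$, which is immediate since $\mathrm{De}(X)\subseteq\{Y\}$ and $Y$ is a sink. Condition 1 of Definition~\ref{def:backdoor-adjustment} also holds, because $\mathbf{Z}\subseteq\mathbf{O}\setminus\{X,Y\}$ and $\mathrm{De}(X)\subseteq\{Y\}$. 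Hence $\mathbf{Z}$ is a backdoor adjustment set in $G$, and it is valid there by Corollary~8.4 of \cite{forre2020causal}.

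For R2, the argument is analogous. If R2 fires in $G$, it fires in $\hat G^{\mathrm{acy}}$, so acyclic soundness gives $X\to Y\notin\hat{\mathbf{E}}^{\mathrm{acy}}$, and edge preservation gives $X\to Y\notin\mathbf{E}$.

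\emph{Completeness.} Suppose neither rule applies in $G$. By Lemma~\ref{lem:witness-invariance}, neither applies in $\hat G^{\mathrm{acy}}$. The acyclic completeness result then supplies two DAGs $D_1,D_2$ with the same d-separation relations over $\mathbf{O}$ as $\hat G^{\mathrm{acy}}$, both satisfying the pre-treatment assumption, but with different causal effects of $X$ on $Y$ (for instance, one has a valid adjustment set giving a nonzero effect and the other has no edge $X\to Y$, or an unblockable backdoor). Every DAG is a simple SCM, and its $\sigma$-separation equals its d-separation. Moreover, the observed CI relations of $D_1$ and $D_2$ coincide with those of $G$ by Lemma~\ref{lem:acy-transfer-sep}. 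So $D_1$ and $D_2$ are simple SCMs, compatible with our assumptions, that are indistinguishable from $G$ by CI information but disagree on the effect. Hence the effect is not identifiable from conditional independences in the cyclic class either.

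\emph{Expected obstacle.} The hardest step should be the transfer of the adjustment-set conclusion from $\hat G^{\mathrm{acy}}$ back to $G$. Backdoor paths are path-level objects, and acyclification adds and removes edges, so the set of backdoor paths changes. Routing the argument through the intervention-node separation statements of Lemma~\ref{lem:equiv-backdoor} avoids comparing paths directly. It does require checking that $(G^{I_X})^{\mathrm{acy}}=(\hat G^{\mathrm{acy}})^{I_X}$, and that the latent nodes $\mathbf{L}$ do not break the invariance of the Markov blanket and the rule conditions when restricted to $\mathbf{O}$.
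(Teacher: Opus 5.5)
Your proposal is correct and takes essentially the same route as the paper. You pass to a $\sigma$-acyclification using Lemmas~\ref{lem:acy-transfer-sep}, \ref{lem:mb-acy} and~\ref{lem:witness-invariance}, invoke the acyclic soundness and completeness results of \cite{entner2013data,li2024local}, and pull the edge and adjustment-set conclusions back to $G$ via Lemmas~\ref{lem:edge-preserved} and~\ref{lem:equiv-backdoor} together with Corollary~8.4 of \cite{forre2020causal}. Your extra checks make explicit steps the paper leaves implicit: fixing an acyclification in which the pre-treatment structure survives, verifying that adding $I_X$ commutes with acyclification because $\mathrm{SCC}(X)=\{X\}$, and noting that the acyclic non-identifiability witnesses are themselves simple SCMs. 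The only cosmetic difference is that the paper phrases completeness through the latent-projection MAG rather than directly through DAGs.
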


\begin{remark}  
 If neither $R_1$ nor $R_2$ applies, then the available CI relations are compatible with models that disagree on the interventional distribution of $Y$ under $do(X=x)$. In one model, $X$ may have a causal effect on $Y$, in this case the estimand may be an adjustment functional of the form $P(Y\mid do(X=x))=\sum_z P(Y\mid X=x,Z=z)P(Z=z)$. In another compatible model, $X$ may have no causal effect on $Y$, i.e. $P(Y\mid do(X=x))=P(Y)$. Therefore, from CI information alone, one cannot decide which expression correctly represents the interventional distribution. This is the sense in which the rules are complete: failure of both rules means that no CI-based criterion can determine the causal effect.
 \end{remark}

\subsection{Local search of adjacent nodes in cyclic models}

The local component of this method is the identification of Markov blankets using conditional independence tests. Existing Markov blanket discovery algorithms, are usually defined under the assumption of acyclic graphs. At first glance, this raises the question of whether these algorithms remain valid in cyclic settings. Our result addresses this concern by showing that Markov blanket identification depends only on the underlying conditional independence structure, which is invariant under $\sigma$-acyclification.

\begin{theorem}[Validity of DAG Markov blanket algorithms in cyclic SCMs]
\label{thm:mb-algorithms-cyclic}
Let $G$ be a (possibly cyclic) directed graph over $\mathbf{V}$. 
Assuming $\sigma$-faithfulness, any sound
Markov blanket discovery algorithm for DAGs 
that relies on observed conditional independence tests returns the observed Markov blanket $\mathrm{MB}^G_\sigma(X)$ when applied to a distribution induced by $G$.
\end{theorem}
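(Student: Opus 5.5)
The plan is to reduce the cyclic case to the acyclic one through the conditional independence oracle, following the strategy announced at the end of Section~\ref{sec:acy}. Fix a $\sigma$-acyclification $G^{\mathrm{acy}}$ of $G$. By Lemma~\ref{lem:acy-transfer-sep}, for all distinct $A,B$ and $\mathbf{Z}\subseteq\mathbf{V}\setminus\{A,B\}$ we have $(A\indep_\sigma B\mid\mathbf{Z})_G$ iff $(A\indep_d B\mid\mathbf{Z})_{G^{\mathrm{acy}}}$. I would first lift this from pairs to disjoint sets $\mathbf{A},\mathbf{B}$. Separation of sets is equivalent to separation of every pair $A\in\mathbf{A}$, $B\in\mathbf{B}$ given the same $\mathbf{Z}$, both for $\sigma$- and for $d$-separation. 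Hence the two graphs induce identical separation statements, and in particular identical statements restricted to $\mathbf{O}$. Combined with $\sigma$-faithfulness and the Markov property, the observed CI relations of the distribution $P$ induced by $G$ are exactly the $d$-separation relations among $\mathbf{O}$ in $G^{\mathrm{acy}}$. So $P$, restricted to $\mathbf{O}$, behaves like an oracle that is Markov and faithful to the DAG $G^{\mathrm{acy}}$ with latents $\mathbf{L}$.

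Next I would make precise what a ``sound Markov blanket discovery algorithm for DAGs relying on observed CI tests'' means. It is a procedure whose output depends only on the answers to queries of the form $A\indep B\mid\mathbf{S}$ with $A,B,\mathbf{S}\subseteq\mathbf{O}$. Its soundness guarantee is that, whenever these answers coincide with the $d$-separation statements of some DAG $D$ (with the given observed/latent split), it returns $\mathrm{MB}^D_d(X)$. I would argue by induction over the sequence of queries the algorithm issues, handling adaptive query choices. At each step, the answer from $P$ equals the answer that a faithful distribution for $G^{\mathrm{acy}}$ would give. Therefore the run on $P$ is step-for-step identical to a run on a distribution faithful to $G^{\mathrm{acy}}$, and the output is $\mathrm{MB}^{G^{\mathrm{acy}}}_d(X)$.

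Finally, Lemma~\ref{lem:mb-acy} gives $\mathrm{MB}^{G^{\mathrm{acy}}}_d(X)=\mathrm{MB}^G_\sigma(X)$, which concludes the argument.

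The main obstacle is the soundness hypothesis itself. Many DAG Markov blanket algorithms, such as IAMB, are proved correct under faithfulness of the full distribution to a DAG, and sometimes assume causal sufficiency. Here only the CI pattern on $\mathbf{O}$ is faithful to $G^{\mathrm{acy}}$ with latents, and $P$ need not be generated by any SCM over $G^{\mathrm{acy}}$. I would handle this by stating soundness explicitly as a property of the CI oracle rather than of the distribution. This is legitimate because the algorithm only accesses CI answers. I would also check that $\mathrm{MB}$ is defined via observed separation, so that the latent-variable setting matches the paper's definition of the observed Markov blanket. If some algorithm additionally requires properties such as the intersection property or composition, I would note that the observed CI relations inherit these from graphical separation in $G^{\mathrm{acy}}$.
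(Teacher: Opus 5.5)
Your proposal is correct and follows essentially the same route as the paper. The argument has four steps: the algorithm's output depends only on observed CI answers; these answers coincide with $d$-separation in $G^{\mathrm{acy}}$ by Lemma~\ref{lem:acy-transfer-sep}; soundness then yields the acyclic Markov blanket; and Lemma~\ref{lem:mb-acy} transfers it back to $G$. The one difference is that the paper phrases soundness through the latent-projection MAG $\mathcal M_{\mathbf O}$ and uses Lemma~\ref{lem:eqmbmag} to identify $\mathrm{MB}^{\mathcal M_{\mathbf O}}_m(X)$ with $\mathrm{MB}^{G^{\mathrm{acy}}}_d(X)$, whereas you state soundness directly as a CI-oracle property for DAGs with latents, which makes that intermediate step unnecessary.
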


\begin{remark}
    The theorem does not assert that the underlying causal graph is acyclic. Rather, it shows that Markov blanket discovery depends only on the conditional independence assertions, which is preserved under $\sigma$-acyclification. Consequently, DAG-based algorithms recover the correct Markov blanket even when the true data-generating process contains cycles.
\end{remark}

\section{Experimental results}\label{sec:experimental_results}

In this section, we empirically evaluate the accuracy, robustness, and efficiency of the local adjustment method on simple (possibly cyclic) SCMs. Our source code is available via this \href{https://anonymous.4open.science/r/NeurIPS2026_Data-Driven_Cycle-Agnostic_Causal_Effect_Estimation-E233/README.md}{link}.

\paragraph{Data generation} We generate synthetic datasets from randomly sampled SCMs with latent confounding and directed cycles, and parameterize using both linear and nonlinear SCMs. We consider: linear SCMs: $\mathbf{V}:= W\mathbf{V} + \mathbf{U}$, and nonlinear SCMs: $\mathbf{V} := tanh(W\mathbf{V}) + \mathbf{U}$, where $W$ is the weighted adjacency matrix and $\mathbf{U}$ is the vector of exogenous noises. All models are restricted to simple SCMs, ensuring well-defined solutions despite the presence of cycles. 
Additional details can be found in Appendix~\ref{ap:datagen}. 

We vary the structural properties of the SCMs. We generate six graph sizes with $8$, $15$, $25$, $50$, $100$, and $250$ nodes, and $12$, $19$, $40$, $78$, $150$, and $360$ edges, respectively. Moreover, the number of samples is $1k$, $5k$, $10k$, and $15k$ per graph. Latent confounding is introduced by randomly hiding a subset of variables (respectively, by the number of nodes, we make the following number of variables latent per graph size: $2$, $3$, $4$, $10$, $10$, and $30$), always excluding the treatment and outcome.  Exogenous noises are sampled independently with standard deviations drawn from $\mathcal{U}(0.5,1.0)$, under three settings: a Gaussian noise, uniform noise (i.e., a non-Gaussian noise), or mixed (node-wise random assignment to either Gaussian or uniform noise).

To probe different regimes, 
we generate datasets by combining four factors: graph structure (acyclic vs.\ cyclic), functional form (linear vs.\ nonlinear), edge condition (presence or absence of $X \to Y$), and noise type (Gaussian, non-Gaussian, or mixed). 

For each configuration, we sample 25 independent graphs per noise type, resulting in 75 graphs per (graph structure, functional form, edge condition) setting. Overall, this yields 600 graphs per graph size. This total decomposes as follows: 300 graphs per functional form; within each functional form, 150 graphs per graph structure; and within each graph structure, 75 graphs per edge condition. We repeat this procedure for each graph size.

\paragraph{Markov blankets}  The Markov blankets are estimated using the TC \cite{pellet2008using} 
for graphs with up to 50 nodes due to the limited scalability, and Fast-IAMB for larger graphs, which we select based on its favorable trade-off between CI tests and computational time. 
We also study the impact of using other Markov blanket discovery algorithms, such as IAMB \cite{tsamardinos2003algorithms} and HITON-MB \cite{aliferis2003hiton} (see Appendix \ref{ap:mb} for a detailed comparison).

\paragraph{Evaluation metrics} 

Let $\mathbf{Z}$ be the set of all valid adjustment sets (ground truth), and $\mathbf{\hat Z}$ the set of all adjustment sets returned by the method. Let $e \in \{0,1\}$ indicate the presence of an edge $X \to Y$ in the true graph, and $\hat e$ the output of the method. Let $n$ be the number of graphs where $R1$ or $R2$ applies. We evaluate performance using:
Relative Error (RE), defined as
$RE=\frac{|\hat{CE} - CE|}{|CE|}$; Precision, defined as
$Pre=\frac{|\mathbf{\hat Z} \cap \mathbf{Z}|}{|\mathbf{\hat Z}|}$; and Edge Fraction accuracy (EF), defined as
$EF= \frac{1}{n}\sum_{i=1}^{n}\mathbf{1}[e_i = \hat e_i]$.



Relative error quantifies the accuracy of the estimated causal effect ($ \hat{CE}$) and is reported only when $R1$ applies in linear SCMs (see more details in Appendix~\ref{ap:ce}). 
Precision, i.e., the proportion of valid adjustment sets returned, is also computed under $R1$. 
The edge fraction accuracy evaluates performance across both $R1$ and $R2$ by measuring agreement on the presence of a causal edge. We emphasize that precision and edge fraction accuracy are not standard metrics in this context. However, we include them to enable evaluations in nonlinear settings, where the RE does not apply. The non-linear causal effect depends on the specific treatment value and, therefore, cannot be quantified by a single scalar. Finally, we report the fraction of empty instances (neither $R1$ nor $R2$ applies), which reflects cases where the method is unable to return a definitive output (see Appendix~\ref{ap:emptyfrac}).

\paragraph{Results}

Since existing local adjustment methods are derived under acyclicity assumptions, they do not provide guarantees in cyclic SCMs. We therefore study 
how the method behaves when extended to cyclic settings by comparing to the acyclic case as a baseline.

All the graphs discussed in this section are constructed such that they satisfy the pre-treatment assumption (see Appendix \ref{ap:nonpre} for the setting where this assumption is violated). 



Figure \ref{fig:RE_smallgraphs} reports RE for linear SCMs across cyclic and acyclic settings. Increasing the sample size improves estimation accuracy in both settings. More interestingly, RE is lower in cyclic graphs compared to acyclic ones, often with smaller variance. This is notable because the original local adjustment method was developed for acyclic models. Moreover, the results empirically support our theoretical claim that the method extends naturally to cyclic SCMs.

\begin{figure}[h]
    \centering
    \begin{subfigure}{0.24\textwidth}
        \centering
        \includegraphics[width=\linewidth]{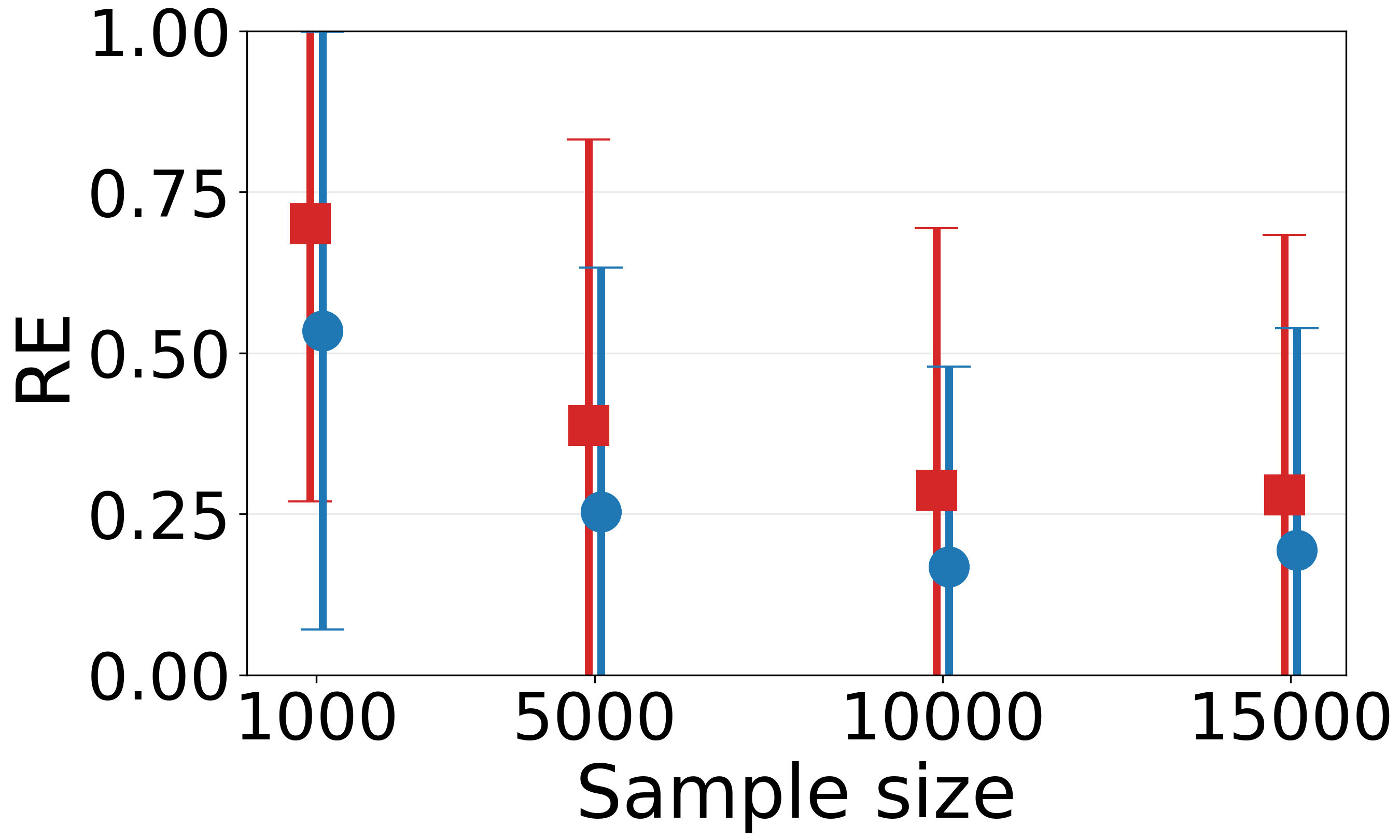}
        \caption{8 nodes}
    \end{subfigure}
    \begin{subfigure}{0.24\textwidth}
        \centering
        \includegraphics[width=\linewidth]{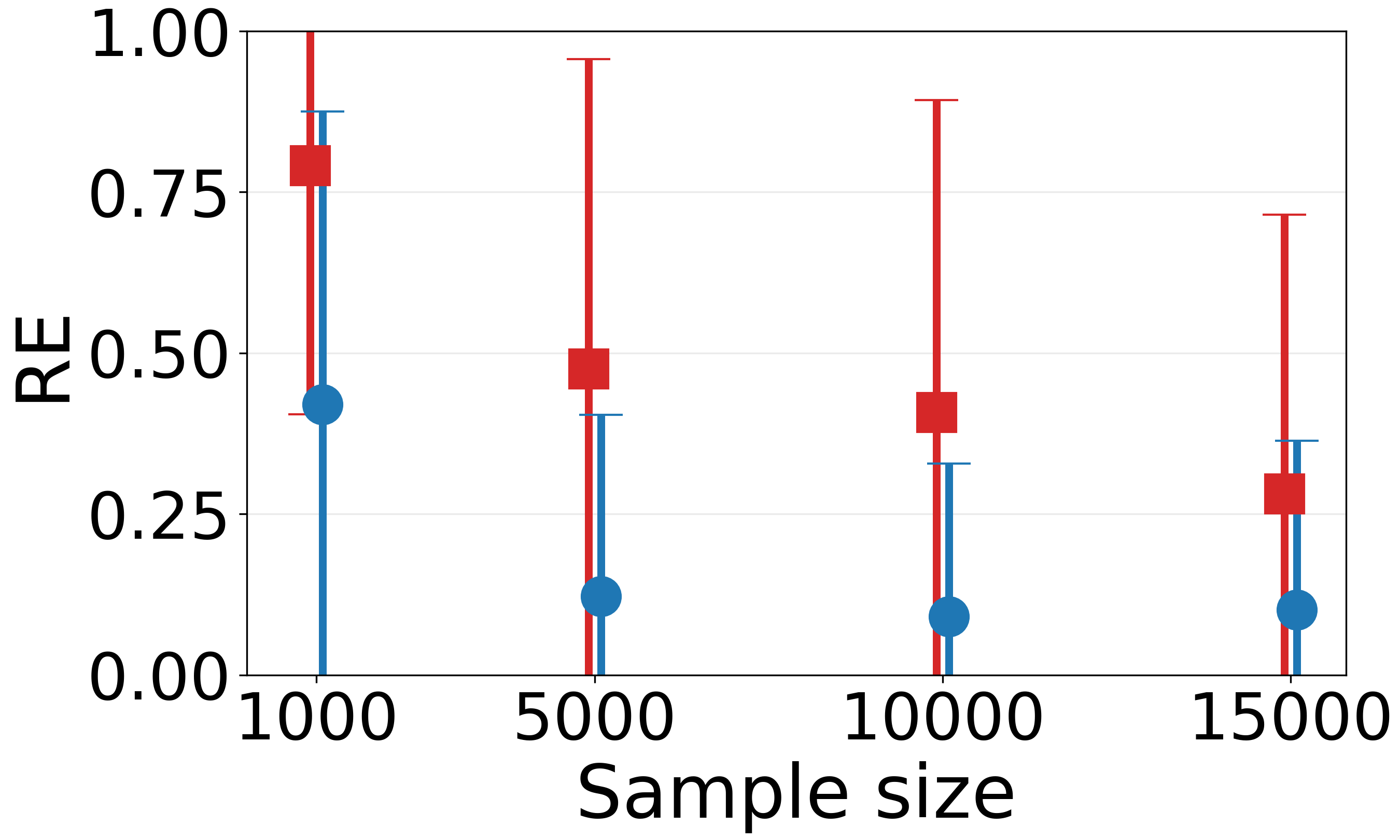}
        \caption{15 nodes}
    \end{subfigure}
    \begin{subfigure}{0.24\textwidth}
        \centering
        \includegraphics[width=\linewidth]{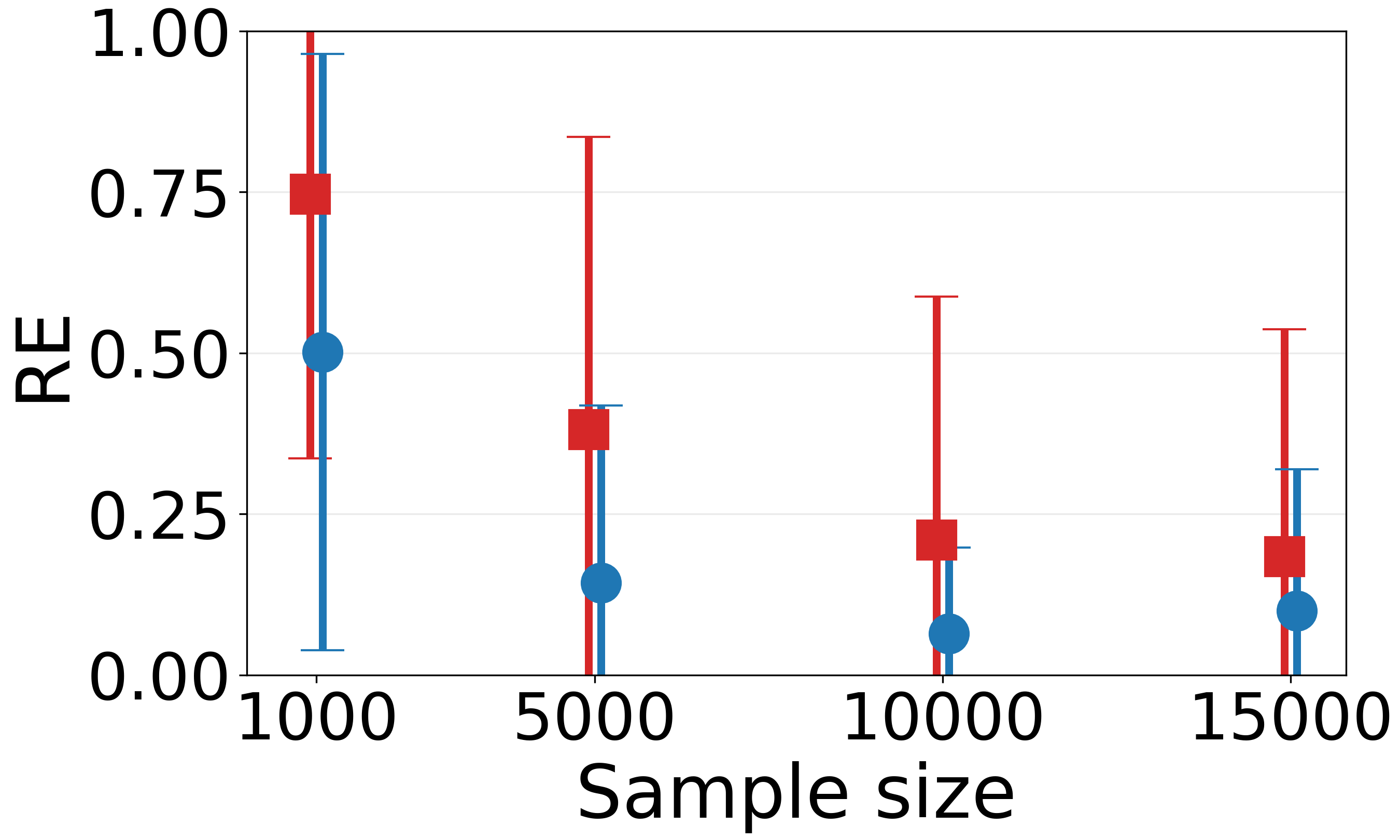}
        \caption{25 nodes}
    \end{subfigure}
    \begin{subfigure}{0.24\textwidth}
        \centering
        \includegraphics[width=\linewidth]{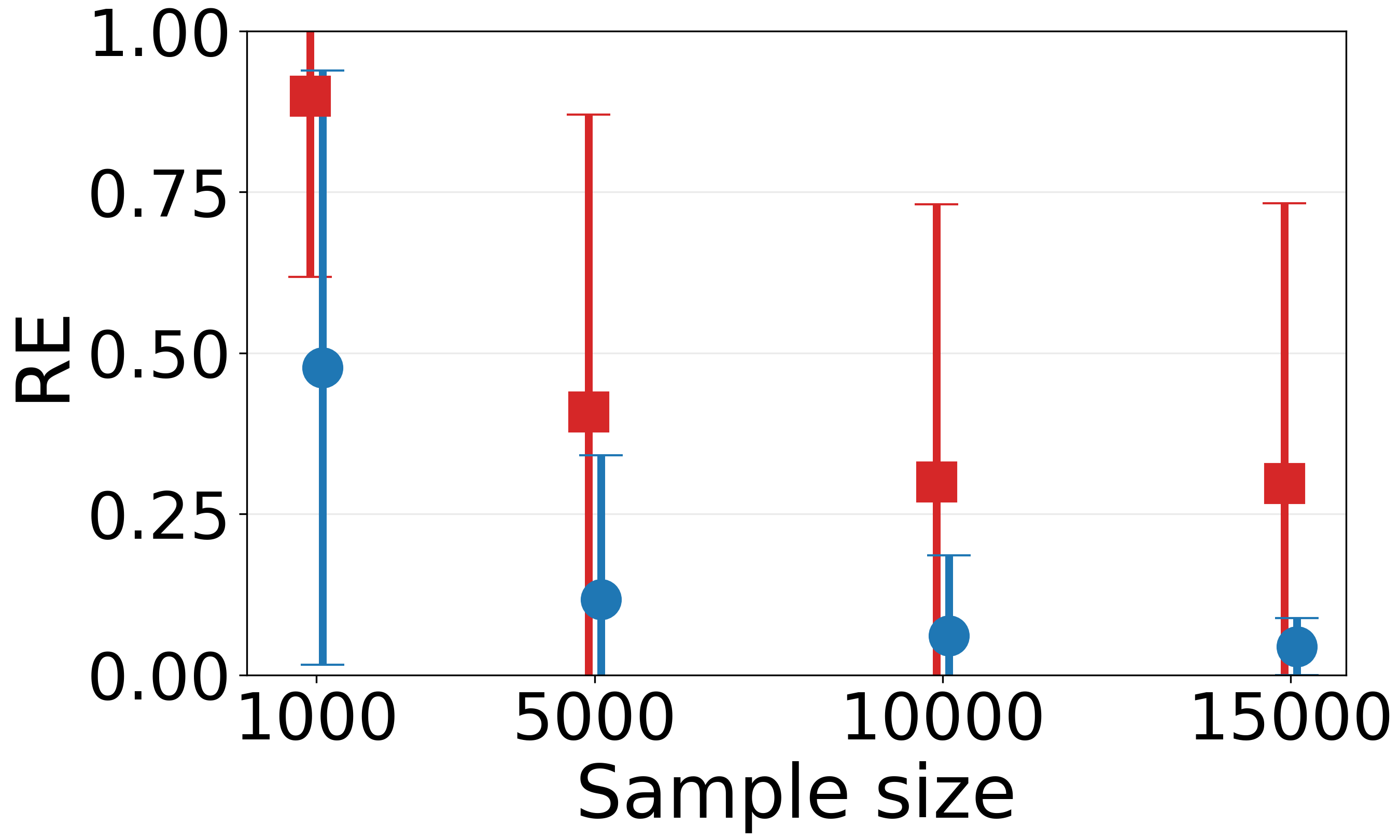}
        \caption{50 nodes}
    \end{subfigure}
    
    \caption{Relative error of linear models in acyclic (red square) and cyclic (blue circle) settings across sample sizes. Markers show the mean, with error bars indicating one standard deviation. Subfigures correspond to graph sets with different numbers of nodes.}
    \label{fig:RE_smallgraphs}
\end{figure}

\begin{figure}[h]
    \centering
    \begin{subfigure}{0.24\textwidth}
        \centering
        \includegraphics[width=\linewidth]{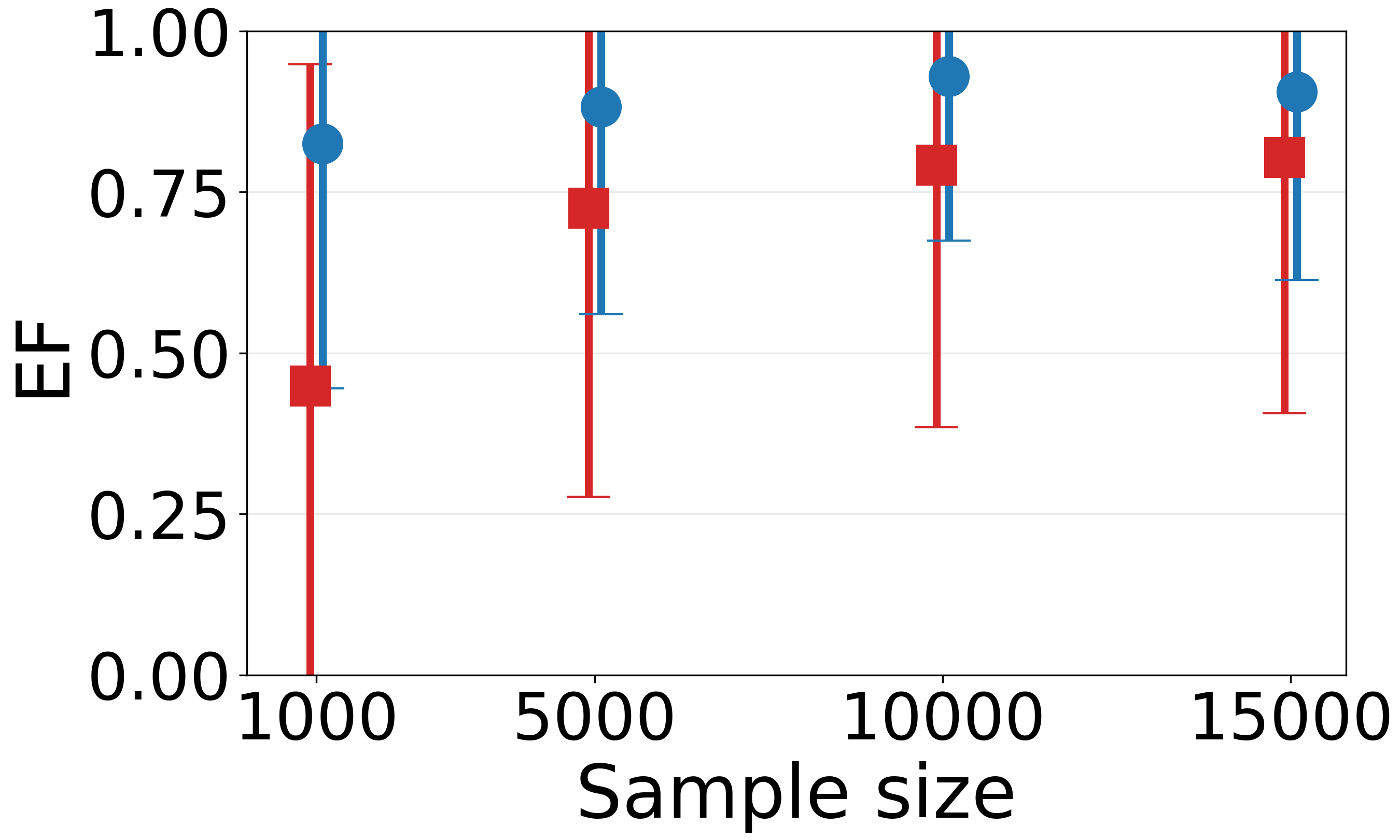}
        \caption{8 nodes}
    \end{subfigure}
    \begin{subfigure}{0.24\textwidth}
        \centering
        \includegraphics[width=\linewidth]{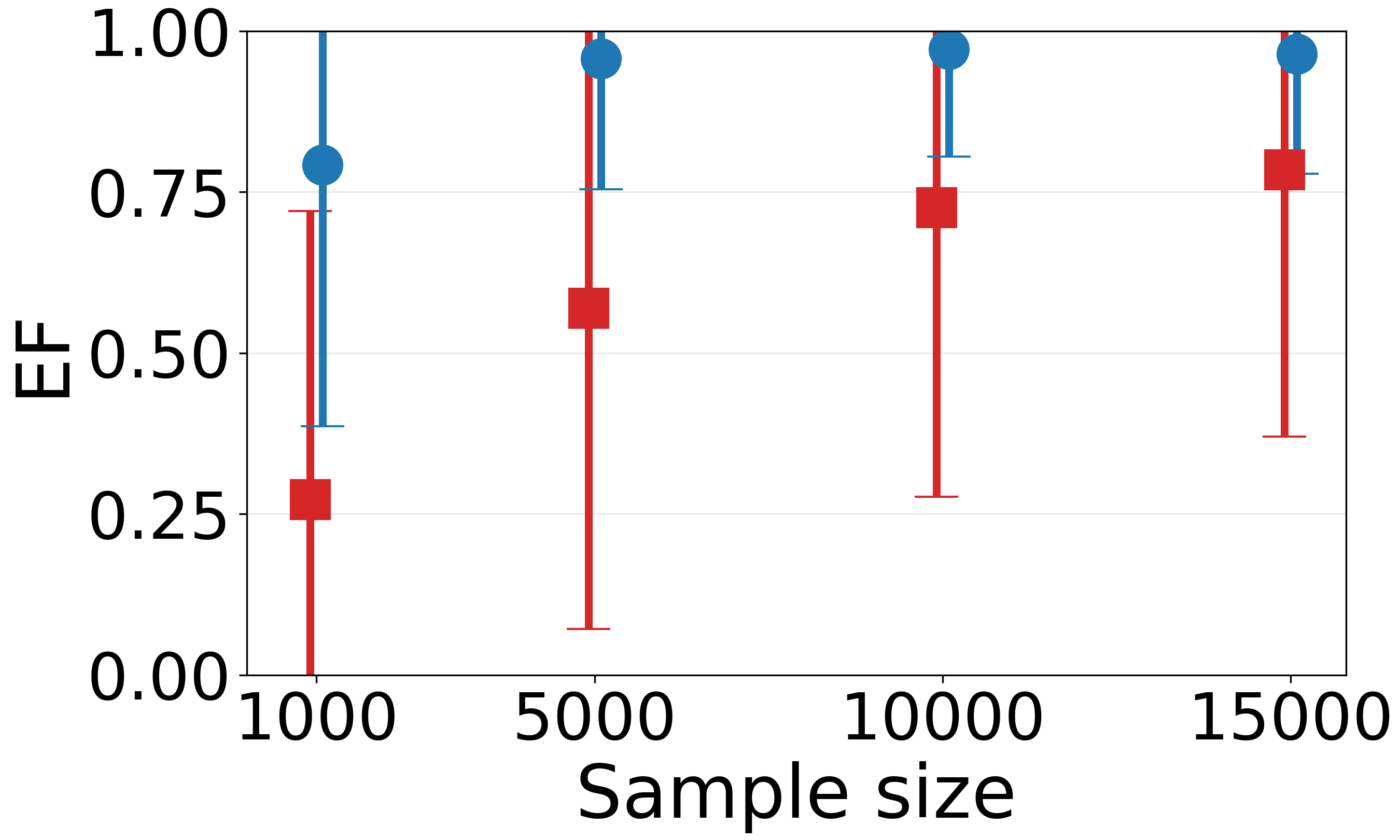}
        \caption{15 nodes}
    \end{subfigure}
    \begin{subfigure}{0.24\textwidth}
        \centering
        \includegraphics[width=\linewidth]{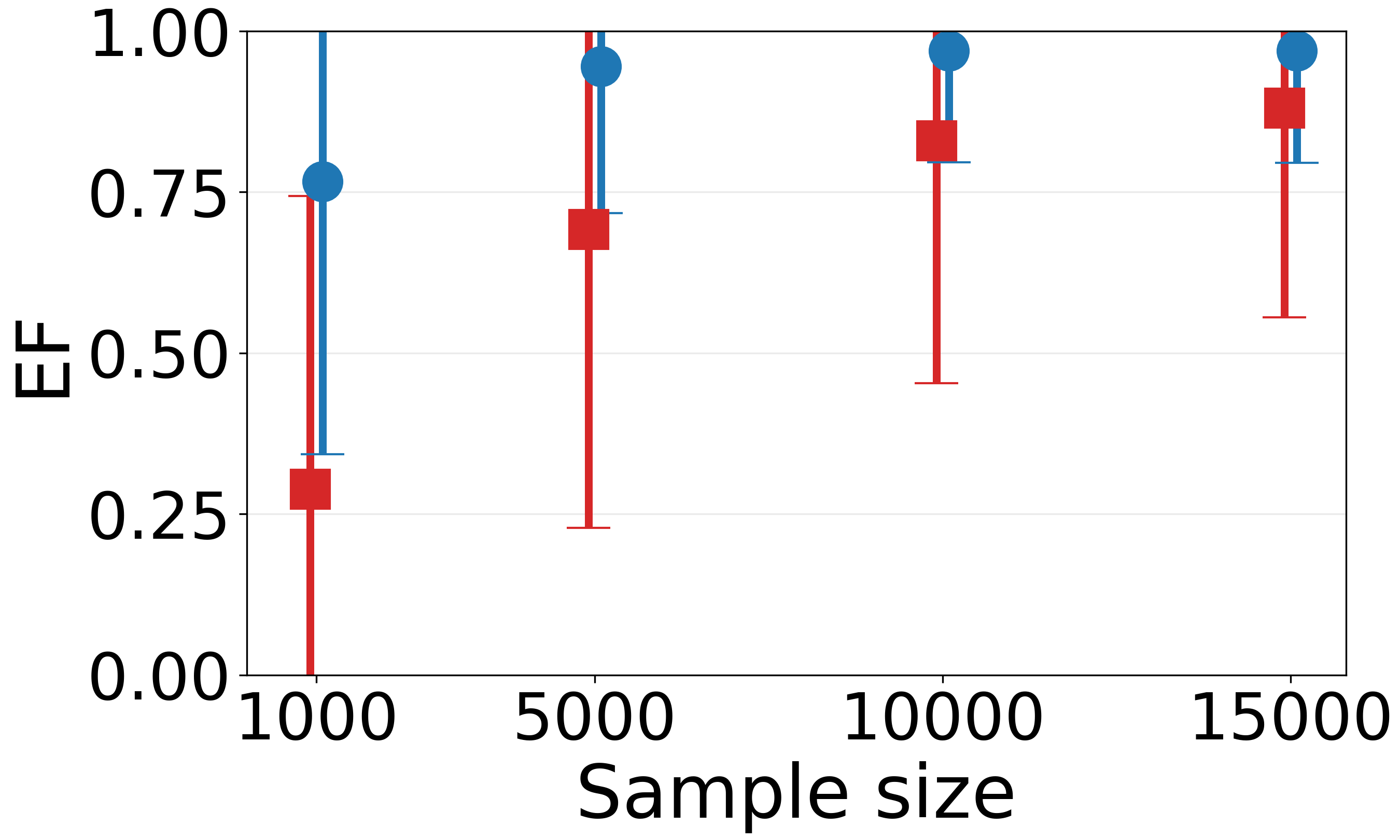}
        \caption{25 nodes}
    \end{subfigure}
    \begin{subfigure}{0.24\textwidth}
        \centering
        \includegraphics[width=\linewidth]{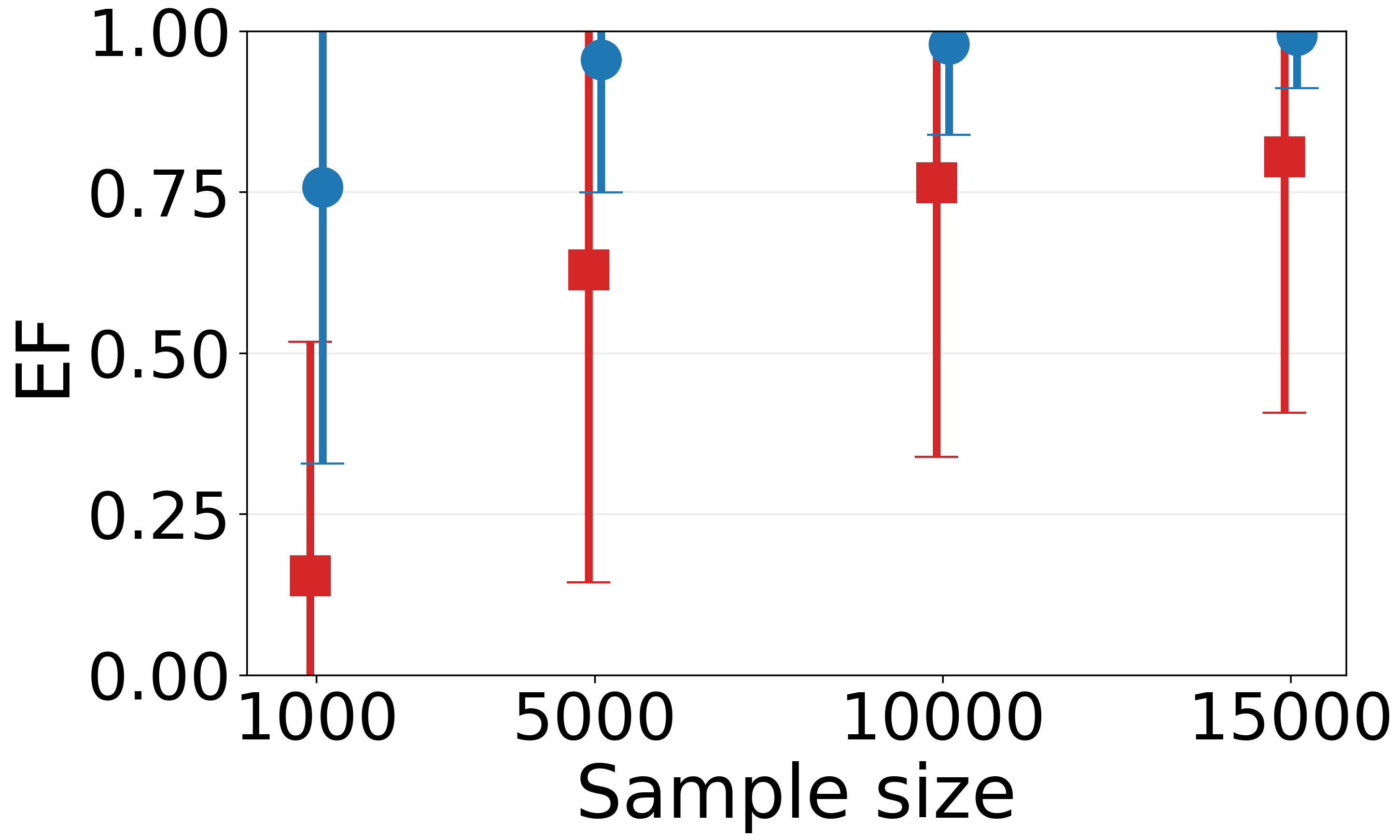}
        \caption{50 nodes}
    \end{subfigure}

        \par\medskip

    \begin{subfigure}{0.24\textwidth}
        \centering
        \includegraphics[width=\linewidth]{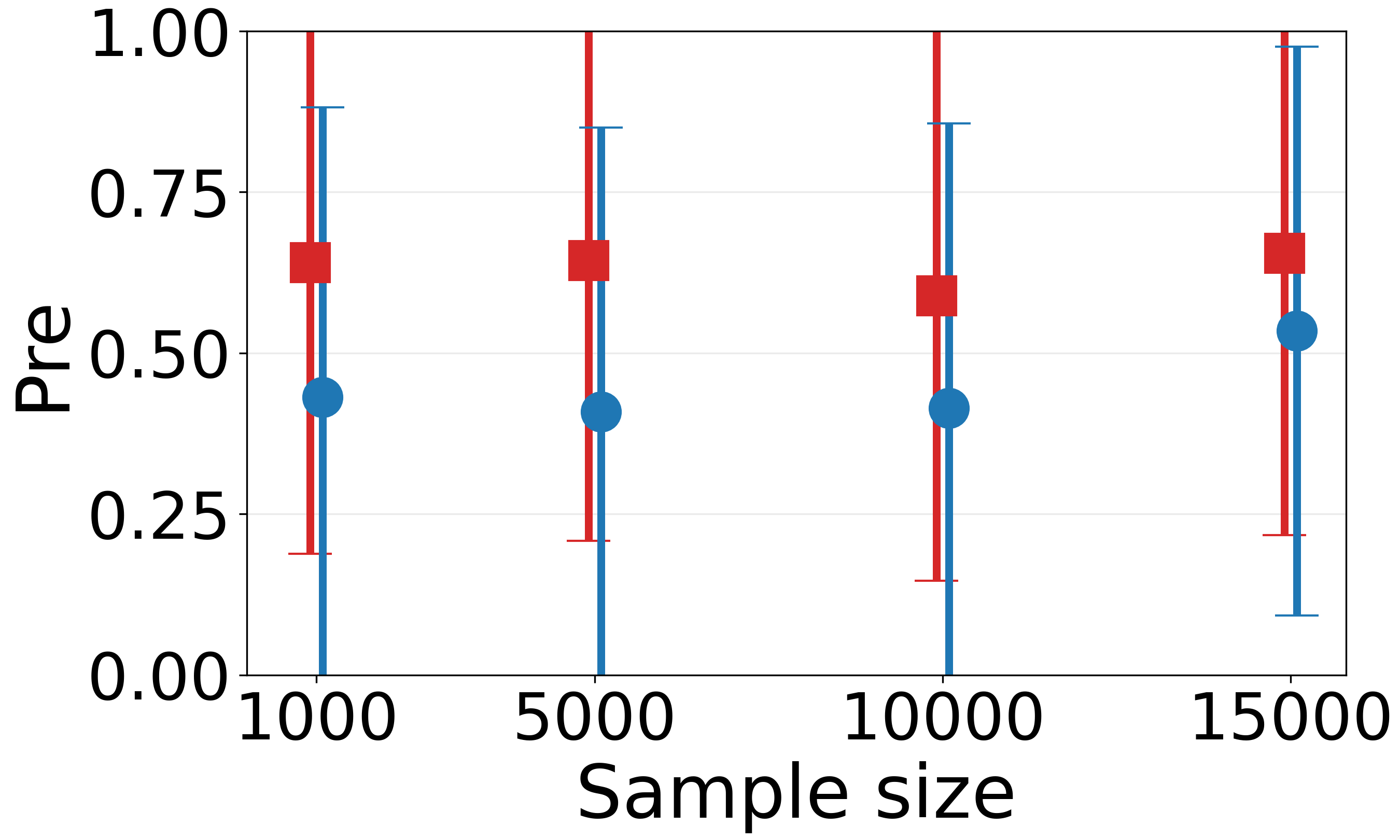}
        \caption{8 nodes}
    \end{subfigure}
    \begin{subfigure}{0.24\textwidth}
        \centering
        \includegraphics[width=\linewidth]{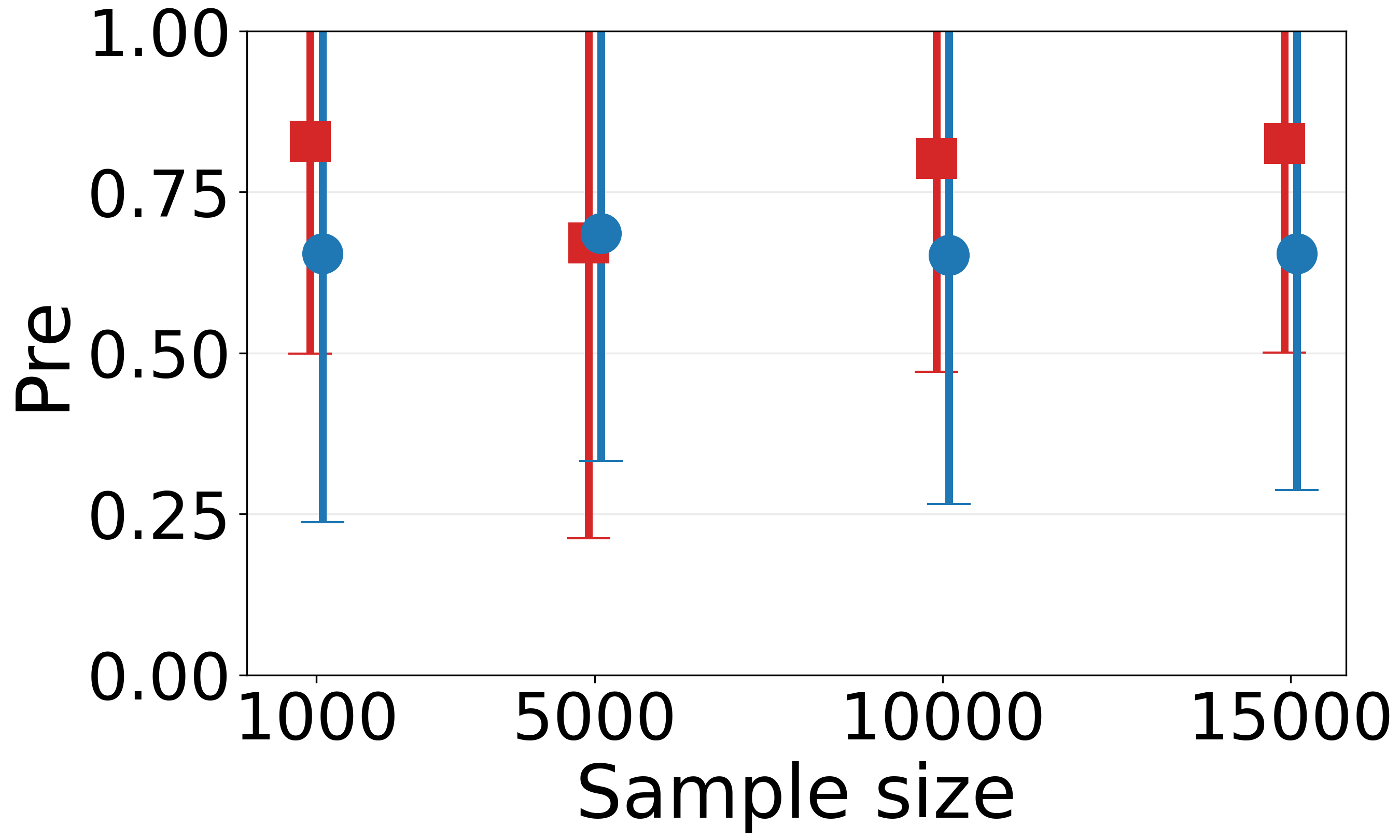}
        \caption{15 nodes}
    \end{subfigure}
    \begin{subfigure}{0.24\textwidth}
        \centering
        \includegraphics[width=\linewidth]{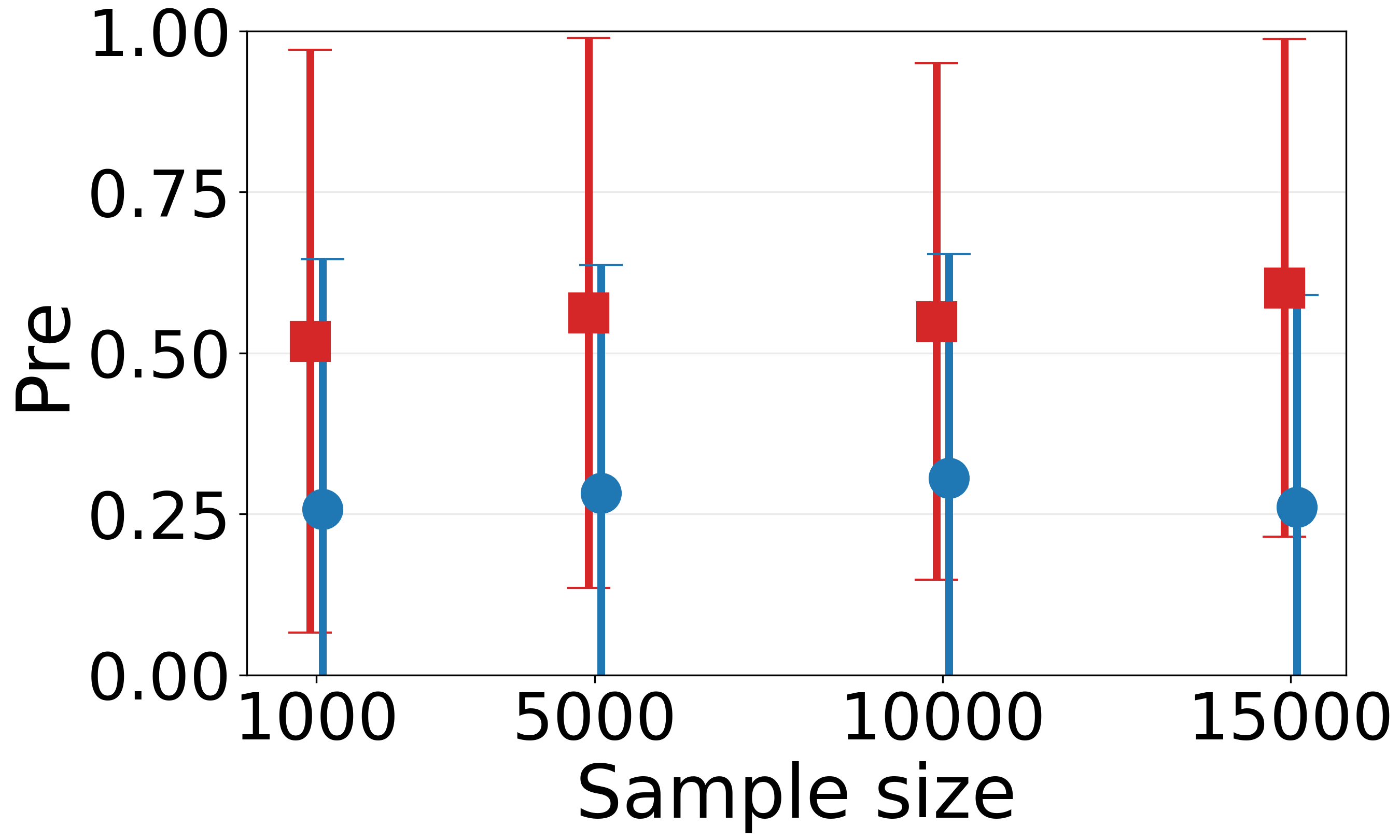}
        \caption{25 nodes}
    \end{subfigure}
    \begin{subfigure}{0.24\textwidth}
        \centering
        \includegraphics[width=\linewidth]{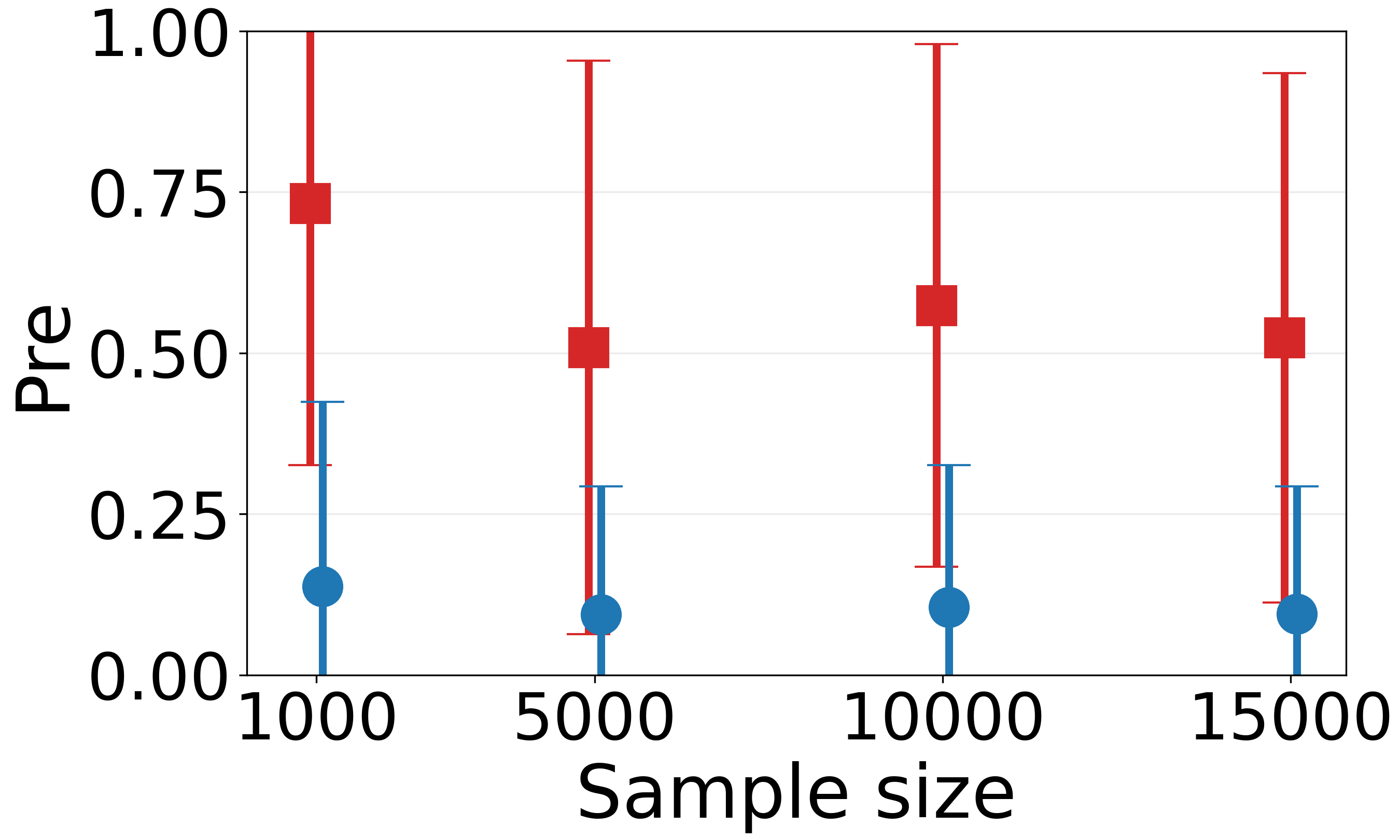}
        \caption{50 nodes}
    \end{subfigure}
    \caption{Edge fraction accuracy (first row) and precision (second row) of linear and non-linear models in acyclic (red square) and cyclic (blue circles) settings across sample sizes. Markers show the mean, with error bars indicating one standard deviation. Subfigures correspond to graph sets with different numbers of nodes.}
    \label{fig:nonlinear_smallgraphs}
\end{figure}

Figure \ref{fig:nonlinear_smallgraphs} reports the edge fraction accuracy across both linear and nonlinear SCMs. Similar to the RE results, performance improves monotonically with increasing sample size. Cyclic graphs consistently achieve higher EF compared to acyclic graphs, often approaching near-perfect recovery in larger sample regimes.

The precision results reveal a different behavior. While cyclic settings achieve lower RE and EF, acyclic graphs in general obtain higher precision. Notable, this reduction in precision does not translate into worse causal estimation or edge detection; cyclic configurations still achieve better RE and EF overall. Furthermore, the sample size seems to have little effect on the precision.



Finally, the empty fraction, the proportion of graphs for which the algorithm does not apply either rule, remains in general below $20\%$ for the cyclic setting, while in the acyclic case, it usually (with an exception for the 15 nodes configuration) stays around $25\%$ (see more details in Appendix \ref{ap:emptyfrac}).

Additional experiments on larger graphs reported in Appendix \ref{ap:largegraphs} show the same qualitative trends, suggesting that the observed behavior is stable across graph scales.

\paragraph{Discussion}

Our experiments reveal two important phenomena. First, the proposed local adjustment method remains effective in cyclic SCMs in terms of the relative error, the standard metric in the literature. This empirically supports the theoretical results established in this work. Second, cyclic settings often exhibit lower relative error and improved edge recovery compared to acyclic settings, while simultaneously having lower precision. The mechanism underlying this trade-off is unclear. Since our goal is to show that existing methods can provide cycle-agnostic causal effect estimation, we leave a theoretical explanation of this phenomenon to future work.

Regarding the consistency of our implementation’s performance in acyclic settings relative to prior work, we make two observations. To begin with, the experiments exhibit relatively large standard deviations. While this variability may initially appear concerning, it is consistent with prior observations reported by \cite{entner2013data}, who similarly reported substantial variance. This suggests that the instability is not unique to our implementation, but may instead reflect inherent variance in the problem. Moreover, for acyclic graphs, the average relative error aligns with the results reported by \cite{li2024local}, supporting the validity of our implementation and evaluation protocol. In contrast, for cyclic graphs, direct comparisons with other works and methods are not possible due to the absence of prior work.

\section{Conclusion}

We addressed the problem of covariate selection for nonparametric causal effect estimation in the presence of cycles and latent variables. While existing approaches rely on acyclicity assumptions, many real-world scenarios exhibit feedback loops that violate the requirement. In this work, we show that a local, data-driven covariate selection and causal effect estimation method \cite{li2024local} remains valid in cyclic settings.

Our main theoretical contribution establishes that the soundness and completeness guarantees of this method continue to hold in simple structural causal models. Notably, this result holds in a nonparametric setting, requiring no assumptions about the presence or absence of cycles. As a consequence, it is cycle-agnostic and applies uniformly to both acyclic and cyclic systems.

Empirically, we validated our approach on a wide range of synthetic data, including linear and nonlinear models with varying graph sizes, noise levels, and cyclic and acyclic cases. The results demonstrate consistent performance across settings, with notably strong accuracy and stability in cyclic regimes. 

An important direction for future work is to better understand the empirical and theoretical differences between cyclic and acyclic settings, particularly the apparent trade-off between estimation accuracy and edge recovery versus adjustment-set precision. Extending local adjustment methods beyond the pre-treatment assumption while preserving soundness and completeness also remains an important open problem in both acyclic and cyclic settings.

\small

\bibliographystyle{unsrt}
\bibliography{references}


\appendix

\section{Markov blanket discovery algorithms} \label{ap:mb}

We considered four standard Markov blankets discovery algorithms: IAMB \cite{tsamardinos2003algorithms}, Fast-IAMB \cite{yaramakala2005speculative}, HITON-MB \cite{aliferis2003hiton} and TC (Total Conditioning) \cite{pellet2008using}. All methods aim to recover the Markov blanket $\mathrm{MB}(X)$ of a target variable $X$ using conditional independence (CI) tests.

The TC algorithm relies on an exhaustive characterization: a variable $Y$ belongs to $\mathrm{MB}(X)$ iff $X$ and $Y$ are dependent conditional on all remaining variables. This characterization is sound and complete under faithfulness, however it requires CI tests with conditioning sets of size $|\mathbf{V}|−2$, which can be computationally expensive and statistically unreliable under large graphs regimes.

In contrast, the other methods adopt a grow-shrink strategy that incrementally constructs the Markov blanket. The IAMB algorithm alternates between the forward phase, where variables associated with $X$ are added, and a backward phase, where it removes the false positives via CI tests. HITON-MB  refines the strategy by enforcing stricter inclusion criteria, performing more tests before including a new candidate. Lastly, Fast-IAMB prioritizes variables with the strongest conditional association in the forward phase.

\begin{figure}[h]
    \centering
    \includegraphics[width=0.7\linewidth]{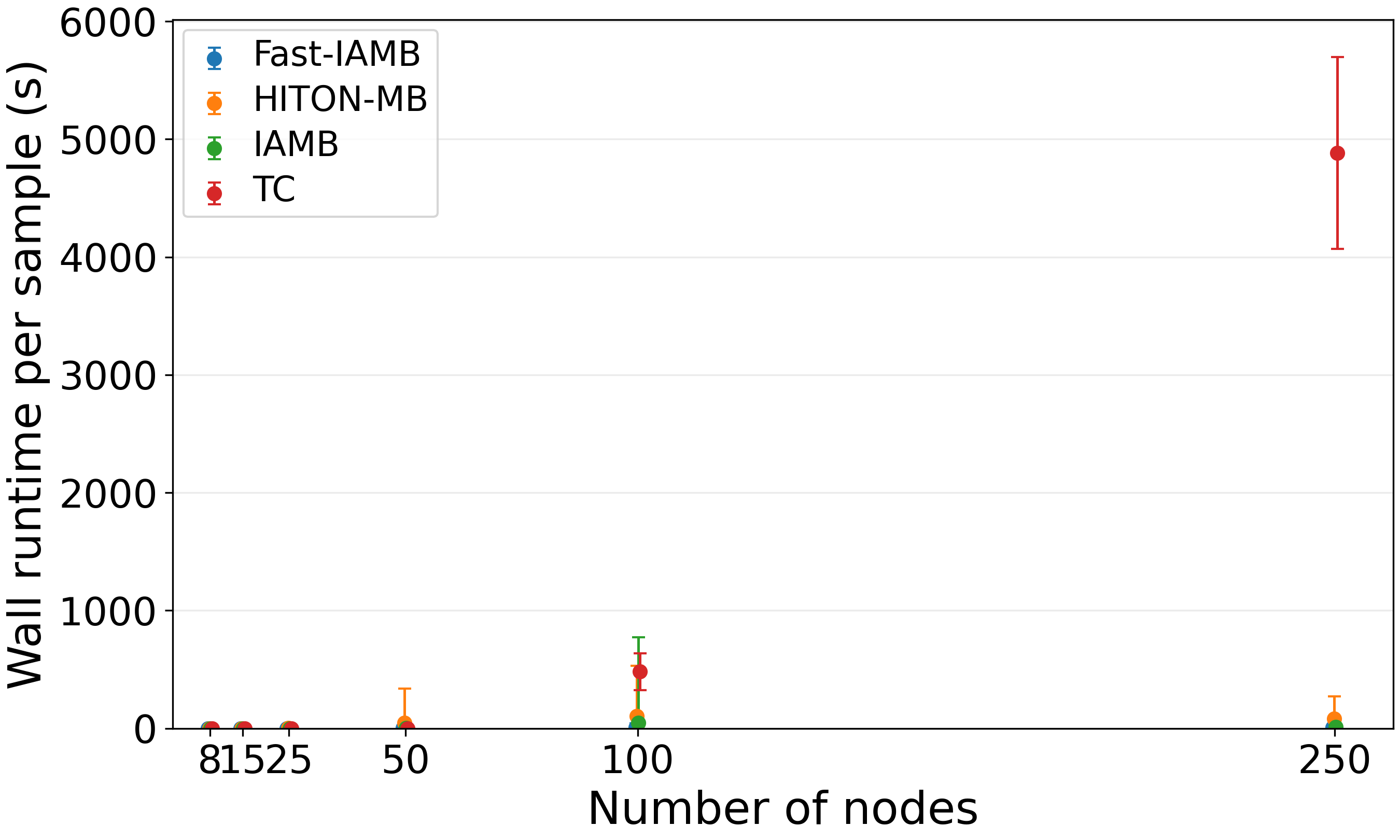}
    \caption{The y-axis shows the wall run time in seconds, while the x-axis is the number of nodes. The figure shows the mean runtime with one standard deviation for Fast-IAMB (blue), HITON-MB (orange), IAMB (green) and TC (red). For the smaller number of nodes, up to 50, the algorithms run in similar times, with HITON-MB sometimes taking the longest. However, as the number of nodes increases, TC starts to become very slow compared to the other algorithms.}
    \label{fig:app_runtime_by_nodes_mb_comparison}
\end{figure}

Empirical results in Figure \ref{fig:app_runtime_by_nodes_mb_comparison}
show that for small graphs ($|\mathbf{V}| \leq 50$), all algorithms exhibit comparable runtimes, with HITON-MB occasionally being slower due to its conservative inclusion strategy. However, as the number of nodes increases, TC becomes significantly slower than the other methods due to the large conditioning sets involved. Fast-IAMB shows better performance in the larger settings, combining a moderate number of CI tests with consistently small conditioning sets. Based on these observations, we use TC 
for small graphs, where its exhaustive characterization remains tractable, and Fast-IAMB 
for larger graphs.


\begin{remark}
    All conditional independence statements in the algorithms are evaluated using a statistical test at significance level $\alpha$. The CI test $\mathrm{CI}(X,Y \mid \mathbf{S}; \alpha)$ returns true if $X$ and $Y$ are judged conditionally independent given $\mathbf{S}$, i.e. $X \indep Y \mid \mathbf{S}$.
\end{remark}

Furthermore, Figure~\ref{fig:comparing_mb} shows a comparison of the Markov blanket discovery algorithms on the relative error and the edge fraction accuracy. The experimental setup is the same as for the smaller graphs discussed in Section~\ref{sec:experimental_results}. The figure shows that all Markov blanket discovery algorithms have approximately the same performance per (a)cyclic configuration. Based on this, we assume that all algorithms seem to perform equally well on our relevant metrics, making it possible to pick whichever is preferred based on other considerations.

\begin{figure}[h]
    \centering
    \begin{subfigure}{0.45\textwidth}
        \centering
        \includegraphics[width=\linewidth]{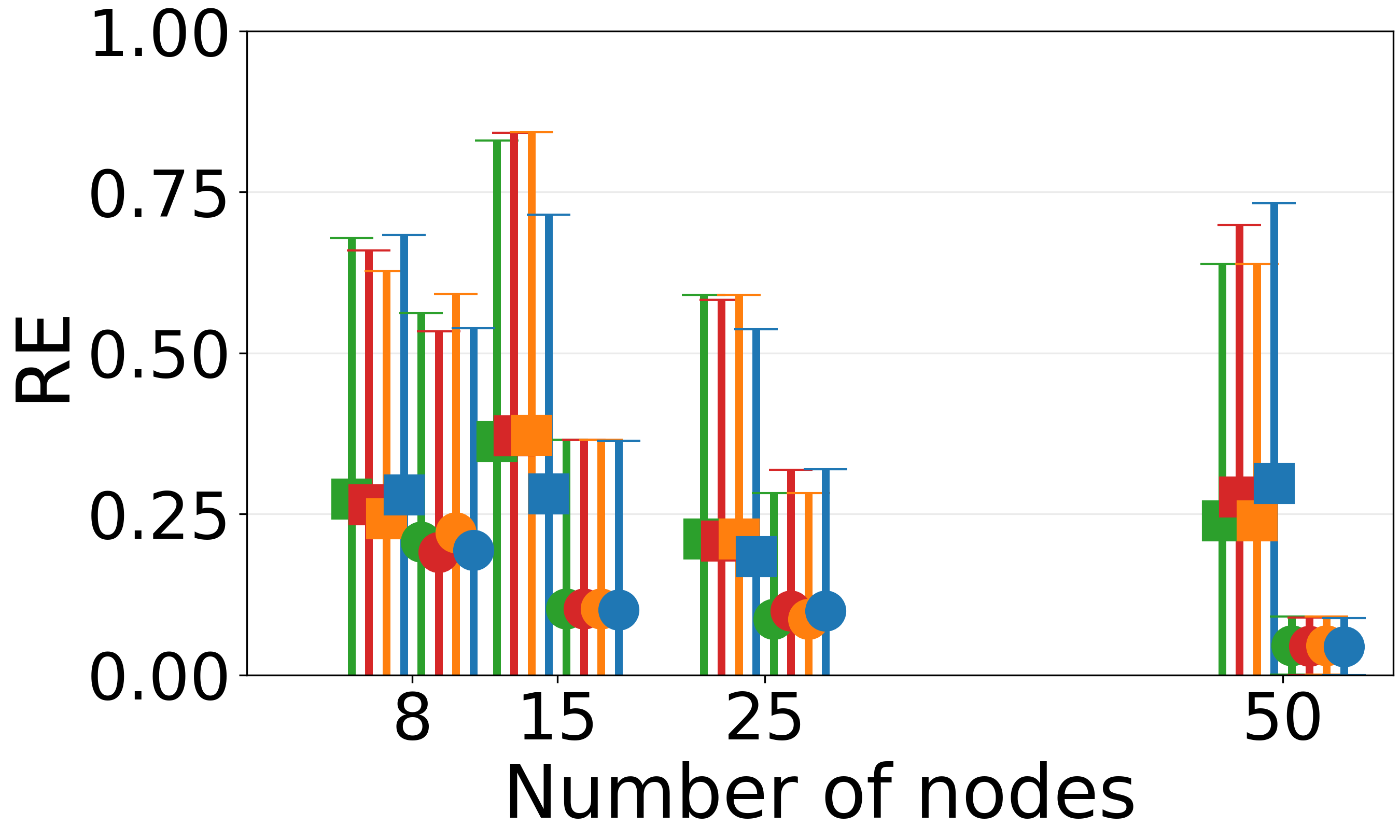}
        \caption{Relative error for linear models}
    \end{subfigure}
    \begin{subfigure}{0.45\textwidth}
        \centering
        \includegraphics[width=\linewidth]{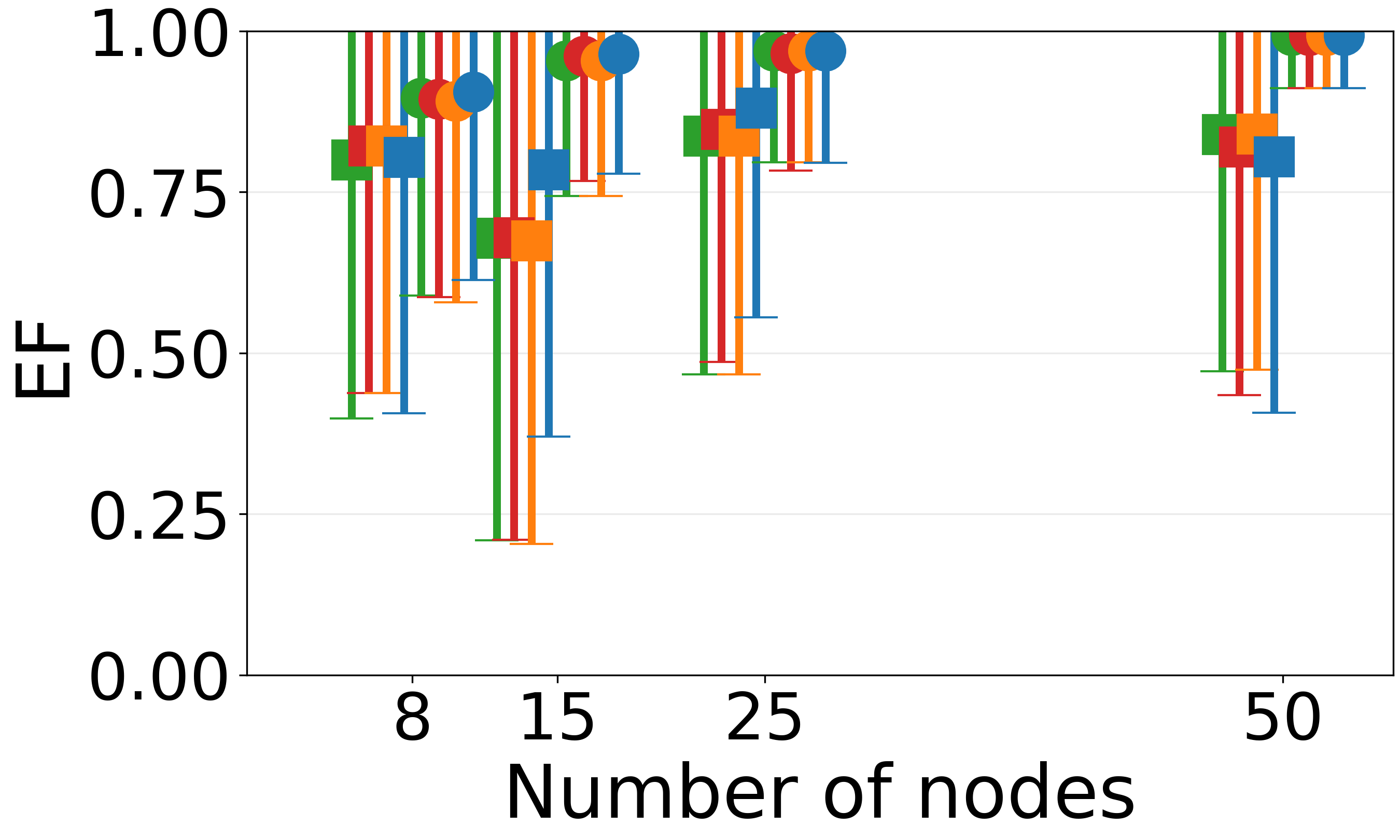}
        \caption{Edge fraction acc. for (non-)linear models}
    \end{subfigure}
    \caption{Performance across varying graph sizes of the different Markov blanket discovery algorithms. The x-axis shows the number of nodes, while the y-axis shows the evaluated metric. Results are reported for acyclic (squares) and cyclic (circles) configurations, where the offset from left to right corresponds to Fast-IAMB (green), HITON-MB (red), IAMB (orange) and TC (blue). Markers indicate mean values, and error bars denote one standard deviation. Each subfigure corresponds to a different performance metric.}
    \label{fig:comparing_mb}
\end{figure}

Finally, Figure~\ref{fig:comparing_mb_ntest} shows the number of independence tests across varying graph sizes of the different Markov blanket discovery algorithms for a sample size of $15K$. In both graph structures, HITON-MB (third in the offset) is considerably worse than Fast-IAMB, IAMB and TC. The latter are quite similar.  Comparing the cyclic and acyclic cases, the cyclic configuration has a larger number of independence tests and variance compared to the acyclic configuration across the board.

\begin{figure}[h]
    \centering
    \begin{subfigure}{0.45\textwidth}
        \centering
        \includegraphics[width=\linewidth]{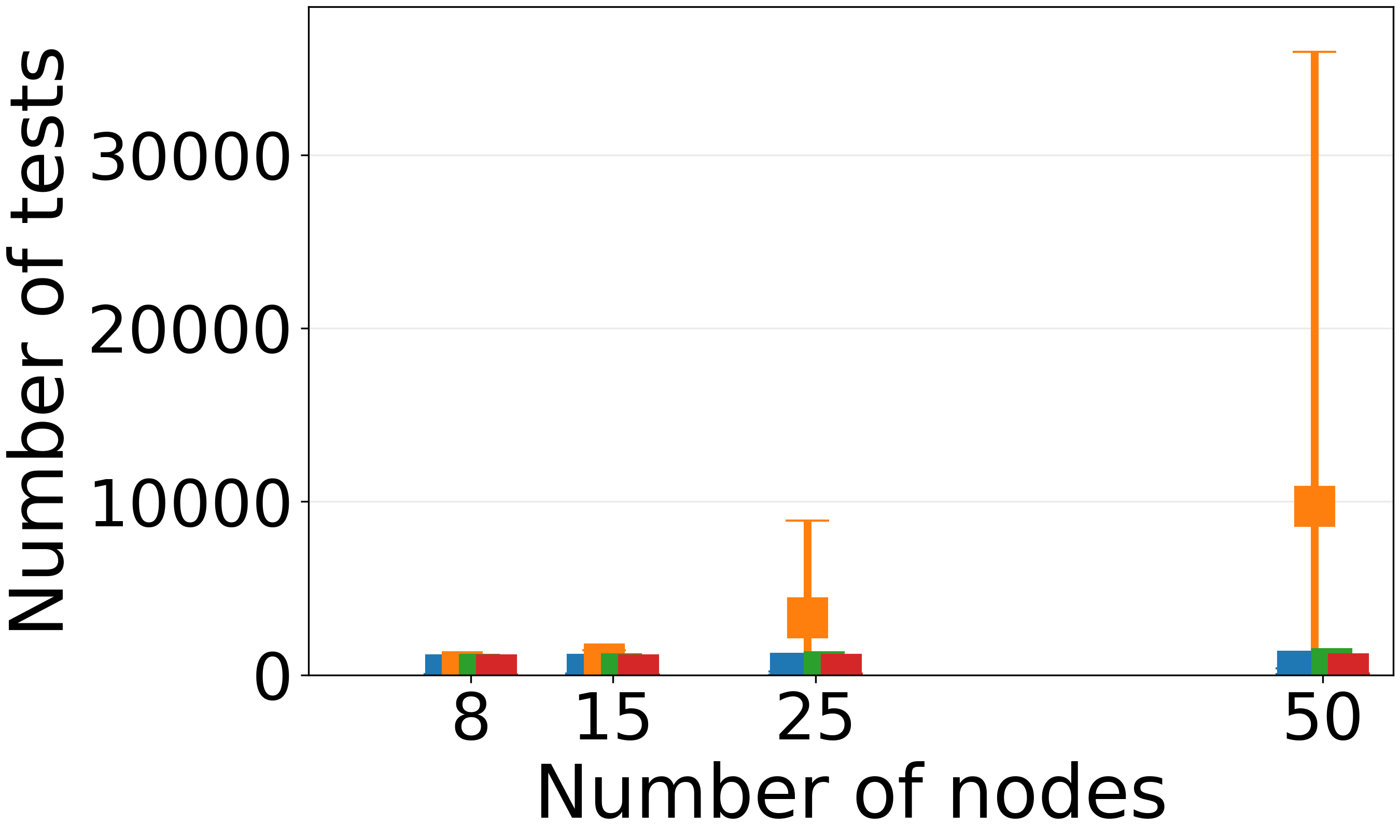}
        \caption{Acyclic}
    \end{subfigure}
    \begin{subfigure}{0.45\textwidth}
        \centering
        \includegraphics[width=\linewidth]{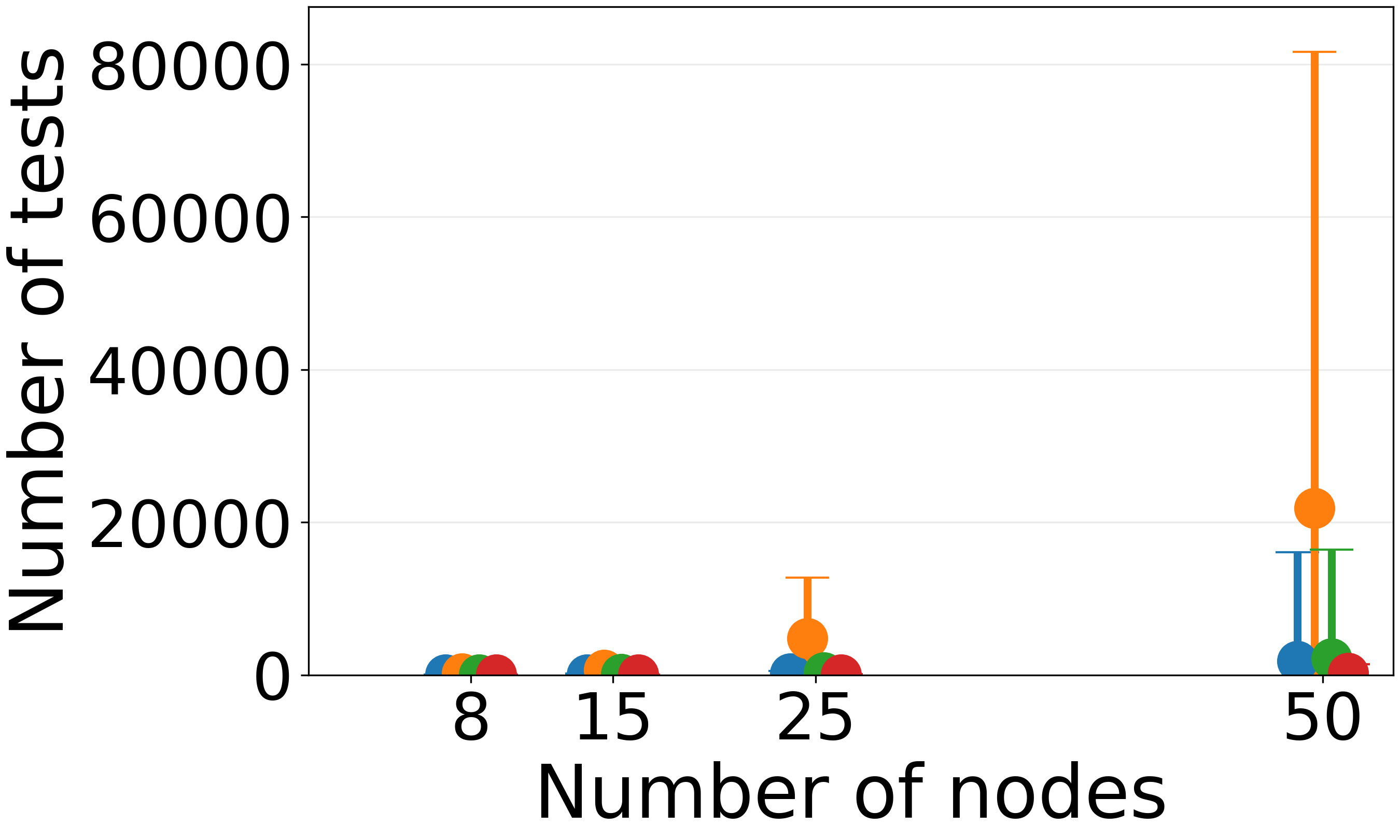}
        \caption{Cyclic}
    \end{subfigure}
    \caption{Number of independence tests across varying graph sizes of the different Markov blanket discovery algorithms for a sample size of $15K$. The x-axis shows the number of nodes, while the y-axis shows the number of independence tests. Results are reported for acyclic (squares) and cyclic (circles) configurations, where the offset from left to right corresponds to Fast-IAMB (green), HITON-MB (red), IAMB (orange) and TC (blue). Markers indicate mean values, and error bars denote one standard deviation. Each subfigure corresponds to a different graph structure.}
    \label{fig:comparing_mb_ntest}
\end{figure}

\begin{algorithm}[h]
\label{alg:fast-iamb}
\caption{Fast-IAMB Markov Blanket Discovery \cite{yaramakala2005speculative}}
\KwIn{Data $D$, variables $\mathbf{V}$, target $X$, significance level $\alpha$}
\KwOut{$\widehat{MB}(X)$}

$\widehat{MB}(X) \leftarrow \emptyset$\;
$\mathbf{A} \leftarrow \{Y \in \mathbf{V} \setminus \{X\} : X \not\indep Y \mid \emptyset\}$\;

\While{$\mathbf{A} \neq \emptyset$}{
    $\mathrm{MB}_{\text{old}} \leftarrow \widehat{MB}(X)$\;
    Sort $\mathbf{A}$ by decreasing association with $X$ given $\widehat{MB}(X)$\;

    \ForEach{$Y \in \mathbf{A}$}{
        \If{$Y \in \widehat{MB}(X)$}{
            continue\;
        }
        \If{$X \not\indep Y \mid \widehat{MB}(X)$}{
            $\widehat{MB}(X) \leftarrow \widehat{MB}(X) \cup \{Y\}$\;
        }
    }

    \ForEach{$Y \in \widehat{MB}(X)$}{
        \If{$X \indep Y \mid \widehat{MB}(X) \setminus \{Y\}$}{
            $\widehat{MB}(X) \leftarrow \widehat{MB}(X) \setminus \{Y\}$\;
        }
    }

    $\mathbf{A} \leftarrow \{Y \in \mathbf{V} \setminus (\{X\} \cup \widehat{MB}(X)) : X \not\indep Y \mid \widehat{MB}(X)\}$\;

    \If{$\widehat{MB}(X) = MB_{\text{old}}$}{
        break\;
    }
}

\Return $\widehat{MB}(X)$\;
\end{algorithm}

\begin{algorithm}[h]
\label{alg:tc}
\caption{Total Conditioning Markov Blanket Discovery \cite{pellet2008using}}
\KwIn{Data $D$, variables $\mathbf{V}$, target $X$, significance level $\alpha$}
\KwOut{$\widehat{MB}(X)$}

$\widehat{MB}(X) \leftarrow \emptyset$\;
$n \leftarrow |D|$\;
$q \leftarrow \left\lfloor n / 10 \right\rfloor$\;

\If{$q > 0$}{
    $\alpha' \leftarrow \alpha / (10q)$\;
}
\Else{
    $\alpha' \leftarrow \alpha$\;
}

\ForEach{$Y \in \mathbf{V} \setminus \{X\}$}{
    $\mathbf{S} \leftarrow \mathbf{V} \setminus \{X, Y\}$\;

    \If{$X \not\indep Y \mid \mathbf{S}$ at significance level $\alpha'$}{
        $\widehat{MB}(X) \leftarrow \widehat{MB}(X) \cup \{Y\}$\;
    }
}

\textbf{TODO:} add the visual comparison

\Return $\widehat{MB}(X)$\;
\end{algorithm}

\section{Local adjustment set search}

In this section, we outline the procedure of the local causal adjustment set search. The algorithm LSAS \cite{li2024local} is shown in Algorithm \ref{alg:lsas} and uses $\Theta$ to store the estimated causal effect of $X$ on $Y$. If $\Theta = \emptyset$, the procedure is unable to identify a valid adjustment set. If $\Theta = 0$, the algorithm concludes that there is no causal effect. Otherwise, $\Theta$ contains an estimate of the causal effect.

\begin{algorithm}[h]
\caption{Local Search Adjustment Sets (LSAS) \cite{li2024local}}
\label{alg:lsas}
\KwIn{Observational data $D$, treatment $X$, outcome $Y$}
\KwOut{Estimated causal effect $\Theta$}

$\widehat{MB}(X), \widehat{MB}(Y) \leftarrow$ Markov Blanket Discovery$(X, Y, D)$\;

$\Theta \leftarrow \emptyset$\;

\ForEach{$W \in \widehat{MB}(X) \setminus \{Y\}$}{
    \ForEach{$\mathbf{Z} \subseteq \widehat{MB}(Y) \setminus \{X\}$}{
        
        \If{$(W,Z)$ satisfies R1 \ref{def:R1}}{
            Estimate $\theta = \mathbb{E}[Y \mid X, \mathbf{Z}]$\;
            $\Theta \leftarrow \theta$\;
        }
        
        \If{$(W,\mathbf{Z})$ satisfies R2 \ref{def:R2}}{
            \Return $\Theta \leftarrow 0$\;
        }
    }
}

\Return $\Theta$\;
\end{algorithm}

\section{Proofs}\label{ap:proofs}

We first introduce maximal ancestral graphs (MAGs), which generalize DAGs to settings with latent variables.

\begin{definition}
    A maximal ancestral graph (MAG) $\mathcal{M}_{\mathbf{O}} = (\mathbf{O}, \mathbf{E})$ over observed variables $\mathbf{O}$ is a mixed graph containing directed edges ($\rightarrow$), representing direct causal relations between observed variables, and bidirected edges ($\leftrightarrow$), encoding the presence of latent common causes. A MAG satisfies that for every pair of non-adjacent nodes, there exists a conditioning set that $m$-separates them. 

    Conditional independence in MAGs is characterized through $m$-separation, which generalizes $d$-separation in DAGs.
\end{definition}

\begin{lemma}[Invariance of Markov blanket between DAGs and MAGs]
\label{lem:eqmbmag}
    Let $G$ be a DAG possibly containing latent variable, and let
    $\mathcal M_{\mathbf O}$ be the MAG over $\mathbf O$ obtained by latent
    projection of $G$. Therefore:
    \[
    \mathrm{MB}^\mathcal{M_{\mathbf O}}_{m}(Y) = \mathrm{MB}^{G}_{d}(Y)
    \]
\end{lemma}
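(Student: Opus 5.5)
The plan is to reduce both sides to statements about $d$-separation among observed variables, and then use the fact that latent projection to a MAG preserves exactly these statements. The Markov blanket is defined purely through separation statements among observed variables. For $\mathrm{MB}^{G}_{d}(Y)$ this means the minimal $\mathbf S \subseteq \mathbf O\setminus\{Y\}$ with $Y \indep_d \mathbf O\setminus(\mathbf S\cup\{Y\}) \mid \mathbf S$ in $G$. For $\mathrm{MB}^{\mathcal M_{\mathbf O}}_{m}(Y)$ it is the same with $m$-separation in $\mathcal M_{\mathbf O}$. It therefore suffices to show two things:
\[
(\mathbf A \indep_d \mathbf B \mid \mathbf S)_G \iff (\mathbf A \indep_m \mathbf B \mid \mathbf S)_{\mathcal M_{\mathbf O}}
\]
for all disjoint $\mathbf A,\mathbf B,\mathbf S\subseteq \mathbf O$; and that the minimal set in the definition is uniquely determined by this collection of separation statements.

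The first point is the defining property of the latent projection (Richardson and Spirtes' MAG construction). The MAG $\mathcal M_{\mathbf O}$ is built so that, for distinct $A,B\in\mathbf O$ and $\mathbf S\subseteq \mathbf O\setminus\{A,B\}$, $A$ and $B$ are $m$-separated by $\mathbf S$ in $\mathcal M_{\mathbf O}$ if and only if they are $d$-separated by $\mathbf S$ in $G$. I would cite this as the standard theorem rather than reprove it. I would then lift it from singletons to sets. Both $d$- and $m$-separation of sets hold iff every pair $A\in\mathbf A$, $B\in\mathbf B$ is separated, because a set is separated exactly when no element-to-element path is open. Hence the set-valued equivalence follows.

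With this, for every candidate $\mathbf S\subseteq\mathbf O\setminus\{Y\}$, $\mathbf S$ satisfies the blanket condition in $G$ iff it satisfies it in $\mathcal M_{\mathbf O}$. The two families of admissible sets coincide, and so their minimal elements coincide.

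The main obstacle is well-definedness of ``the minimal set''. If minimality were only inclusion-minimal and not unique, equality of the families would still give equality of the collections of minimal sets, but the statement speaks of a single set. I would handle this by noting that under the faithfulness assumption of Section~\ref{sec:preliminaries} the separation statements form a compositional graphoid. In particular the intersection property holds, so the admissible sets are closed under intersection and the minimal one is unique. Since the families agree, this unique element is the same in both graphs. Alternatively, one can note that both blankets equal the set of $V\in\mathbf O$ with $Y \not\indep V\mid \mathbf O\setminus\{Y,V\}$ (the TC characterization), which is manifestly determined by the shared separation statements.
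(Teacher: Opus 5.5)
Your proposal is correct and follows essentially the same route as the paper: the latent-projection equivalence between $d$-separation in $G$ and $m$-separation in $\mathcal M_{\mathbf O}$ for observed sets makes the two families of blanket-separating sets identical, so their minimal elements coincide. Your additional argument for uniqueness of the minimal set (closure under intersection, or the total-conditioning characterization) makes explicit a point the paper leaves implicit; note that the intersection property holds for $d$- and $m$-separation purely graphically, so faithfulness is not actually needed for that step.
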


\begin{proof}
    By \cite{zhang2008causal}, the observed conditional independence assertions induced by $G^{\mathrm{acy}}$ is exactly
    the $m$-separation model of $\mathcal M_{\mathbf O}$.
    By the latent projection \cite{zhang2008causal}, for all disjoint
    $A,B,S \subseteq \mathbf O$,
    \[
    A \indep_d B \mid \mathbf{S} \text{ in } G^{\mathrm{acy}}
    \quad \Longleftrightarrow \quad
    A \indep_m B \mid \mathbf{S} \text{ in } \mathcal M_{\mathbf O}.
    \]
    Therefore, a set $\mathbf S \subseteq \mathbf O \setminus \{X\}$ separates
    $X$ from all observed variables outside $\mathbf S\cup\{X\}$ in
    $G^{\mathrm{acy}}$ if and only if it does so in $\mathcal{M}_{\mathbf O}$.
    Since the collection of separating sets is identical in both graphs, their
    minimal sets coincide.
\end{proof}

\subsection{Lemma \ref{lem:acy-transfer-sep}}
\begin{proof}
The independence models induced by $\sigma$-separation in $G$ and d-separation in $G^{\mathrm{acy}}$ coincide, i.e., $\operatorname{IM}_\sigma(G)=\operatorname{IM}_d(G^{\mathrm{acy}})$ \cite{mooij2020constraint}. The result follows immediately from the definition of these models.



\end{proof}

\subsection{Lemma \ref{lem:mb-acy}}
\begin{proof}
Let $\mathbf{S} := \mathrm{MB}^G_{\sigma}(Y)$. By definition,
$Y \indep_{\sigma} \mathbf{O} \setminus (\mathbf{S} \cup \{Y\}) \mid \mathbf{S}$ in $G$. By Lemma~\ref{lem:acy-transfer-sep}, this is equivalent to $Y \indep_{d} \mathbf{O} \setminus (\mathbf{S} \cup \{Y\}) \mid \mathbf{S}$ in $G^{\mathrm{acy}}$, so $\mathbf{S}$ is a separating set in $G^{\mathrm{acy}}$, implying $\mathbf{S} \supseteq MB^{G^{\mathrm{acy}}}_d(Y)$ by minimality. The reverse inclusion follows symmetrically, yielding equality.
\end{proof}

\subsection{Lemma \ref{lem:edge-preserved}}
\begin{proof}
    Since $\mathrm{De}(Y) = \emptyset$, node $Y$ has no outgoing edges in $G$. Therefore, no directed cycles can involve $Y$. Hence $\mathrm{SCC}(Y)=\{Y\}$ and no acyclification step introduces edges out of $Y$. This implies $\mathrm{Ch}(Y)=\emptyset$ in any acyclification.

    Now consider $X$. The assumption $\mathrm{De}(X)\subseteq\{Y\}$ implies that $X$ cannot lie on a directed cycle (as $\mathrm{Ch}(Y) = \emptyset$). Hence, $\mathrm{SCC}_{G}(X)=\{X\}$.
    It follows that no acyclification step modifies edges incident to $X$ inside a SCC. Therefore, the only possible child of $X$ in $\hat G^{\mathrm{acy}}$ is $Y$, and we have $ \mathrm{Ch}_{\hat G^{\mathrm{acy}}}(X)\subseteq\{Y\}$.
    Finally, consider the edge $X\to Y$. Since both $x$ and $y$ are singleton SSCs, the $\sigma$-acyclification rule for edges between distinct components preserves the edge $X\to Y$ if and only if it is present in $G$, and cannot create it otherwise.
\end{proof}

\subsection{Lemma \ref{lem:equiv-backdoor}}
\begin{proof}
(``if'' part) Assume that $\mathbf{Z}$ satisfies Definition~\ref{def:backdoor-adjustment}. Hence: (i) $\mathbf{Z} \cap \mathrm{De}(X)=\emptyset$, and (ii) $\mathbf{Z}$ $\sigma$-blocks every backdoor path from $X$ to $Y$.

We first show condition (1) from Lemma \ref{lem:equiv-backdoor}: since the intervention node $I_X$ has no incoming edges and a single outgoing edge  $I_X \to X$, every path from $I_X$ to an observed variable must first pass through $X$, Hence, any $\sigma$-open path from $I_X$ to a node $Z_i \in \mathbf{Z}$ would imply a $\sigma$-open path from $X$ to $Z_i$, which in particular requires that $\mathbf{Z}$ lies in the descendant region of $X$. This contradicts (i). Therefore, $\mathbf{Z} \perp_\sigma I_X.$

Condition (2) from Lemma \ref{lem:equiv-backdoor}: Any path from $I_X$ to $Y$ must begin with $I_X \to X$. After reaching $X$, either the path leaves $X$ via an outgoing edge, in which case it is blocked by conditioning on $X$ (Note $\mathrm{SCC}(X)=\{X\}$, since for $X$ the only possible descendant $Y$ has no descendants), or it reaches $X$ via an incoming edge when viewed as a path from $X$ to $Y$, corresponding to a backdoor path. All such backdoor paths are $\sigma$-blocked by $\mathbf{Z}$ by (ii). Hence, all paths from $I_X$ to $Y$ are $\sigma$-blocked given $X$,$\mathbf{Z}$, and thus $Y \perp_\sigma I_X \mid X,\mathbf{Z}$.


(``only if'' part) Assume that $\mathbf{Z} \perp_\sigma I_X  \text{ and }  Y \perp_\sigma I_X \mid X,\mathbf{Z}$. We first show that $\mathbf{Z}$ contains no descendants of $X$. Suppose, for contradiction, that there exists $Z_i \in \mathbf{Z} \cap \mathrm{De}(X)$. Since $I_X$ has directed edges into $X$, and $Z_i$ is a descendant of $X$, there exists a directed path from $I_X$ to $Z_i$. Such a directed path is $\sigma$-open without conditioning, contradicting $\mathbf{Z} \perp_\sigma I_X$. Hence
$\mathbf{Z} \cap \mathrm{De}(X)=\emptyset$.

It remains to show that $\mathbf{Z}$ $\sigma$-blocks every backdoor path from $X$ to $Y$. Suppose that it is not the case. Then there exists a backdoor path from $X$ to $Y$ that is $\sigma$-open given $\mathbf{Z}$. Prepending the edge $I_X \to X$ to this path gives a path from $I_X$ to $Y$. Since the path enters $X$ through a backdoor edge, conditioning on $X$ does not block this path; rather, the path remains $\sigma$-open given $X,\mathbf{Z}$. Therefore,
$Y \not\perp_\sigma I_X \mid X,\mathbf{Z},$ contradicting the second assumed condition. Thus $\mathbf{Z}$ contains no descendants of $X$ and blocks every backdoor path from $X$ to $Y$. Therefore, $\mathbf{Z}$ satisfies Definition~\ref{def:backdoor-adjustment}. This proves the equivalence.
\end{proof}

\subsection{Lemma \ref{lem:existence-adjustement}}

\begin{proof}
Let $G^{\mathrm{acy}}$ be a $\sigma$-acyclification of $G$. 
By Lemma~\ref{lem:equiv-backdoor} and equivalence of $\sigma$-separation under $\sigma$-acyclification, a set $\mathbf{Z}\subseteq \mathbf{O}$ is a backdoor adjustment set for $(X, Y)$ in $G$ if and only if it is a backdoor adjustment set for $(X, Y)$ in $G^{\mathrm{acy}}$. Hence

\[
\exists\, \mathbf{Z} \subseteq \mathbf{O} \text{ backdoor adjustment in } G
\;\Longleftrightarrow\;
\exists\, \mathbf{Z} \subseteq \mathbf{O} \text{ backdoor adjustment in } G^{\mathrm{acy}}.
\]

Using the fact that $X$ is a singleton and $G^{\mathrm{acy}}$ is a DAG, we have that
\[
\mathbf{Z} \text{ backdoor adjustment in } G^{\mathrm{acy}}
\;\Longleftrightarrow\;
\mathbf{Z} \text{ adjustment in } G^{\mathrm{acy}},
\]
by \cite{perkovic2018complete}.

In addition, since \(G^{\mathrm{acy}}\) is a DAG, let $\mathcal{M_\mathbf{O}}$ be the MAG induced by $G^{\mathrm{acy}}$ over the observed variables $\mathbf{O}$. By \cite{zhang2008causal}, $\mathcal{M_\mathbf{O}}$  preserves all conditional independence relations among observed variables from $G^{\mathrm{acy}}$: for any $\mathbf{Z} \subseteq \mathbf{O}$, $X$, $Y$ are $m$-separated given $\mathbf{Z}$ in $\mathcal{M_\mathbf{O}}$ if and only if they are d-separated given $\mathbf{Z}$ in $G^{\mathrm{acy}}$. Therefore, for observed adjustment sets, validity in the MAG is equivalent to validity in the DAG. 

\[
\exists\, \mathbf{Z} \subseteq \mathbf{O} \text{ adjustment in } \mathcal{M_\mathbf{O}}
\;\Longleftrightarrow\;
\exists\, \mathbf{Z} \subseteq \mathbf{O} \text{ adjustment in } G^{\mathrm{acy}}.
\]

Previous results for MAG $\mathcal{M_\mathbf{O}}$ \cite{li2024local} state that a valid set exists if and only if there exists one contained in the Markov blanket of $Y$ (excluding $X$) in $\mathcal{M_\mathbf{O}}$ 

\[
\exists\, \mathbf{Z} \subseteq \mathbf{O} \text{ adjustment in } \mathcal{M_\mathbf{O}}
\;\Longleftrightarrow\;
\exists\, \mathbf{Z} \subseteq \mathrm{MB}_m^{\mathcal{M_\mathbf{O}}}(Y)\setminus\{X\}.
\]

By the Markov blanket equivalence between DAGs and MAGs from Lemma \ref{lem:eqmbmag}:

\[
\exists\, \mathbf{Z} \subseteq \mathbf{O} \text{ adjustment in } G^{\mathrm{acy}}
\;\Longleftrightarrow\;
\exists\, \mathbf{Z} \subseteq \mathrm{MB}_d^{G^{\mathrm{acy}}}(Y)\setminus\{X\}.
\]

By the equivalence of Markov blankets 
from Lemma \ref{lem:mb-acy}, we have $
\mathrm{MB}^{G}_{\sigma}(Y) = \mathrm{MB}^{G^{\mathrm{acy}}}_d(Y)$.
Substituting yields
\[
\exists\, \mathbf{Z} \subseteq \mathrm{MB}^{G^{\mathrm{acy}}}_d(Y)\setminus\{X\}
\;\Longleftrightarrow\;
\exists\, \mathbf{Z} \subseteq \mathrm{MB}^G_{\sigma}(Y)\setminus\{X\}.
\]

Combining the above equivalences completes the proof.
\end{proof}

\subsection{Lemma \ref{lem:witness-invariance}}
\begin{proof}
We show that the following two statements hold:
\begin{enumerate}
    \item $W \in \mathrm{MB}^G_\sigma(X)$ in $G$ if and only if $W \in MB^{G^{\mathrm{acy}}}_d(X)$ in $G^{\mathrm{acy}}$.
    \item For every $\mathbf{Z} \subseteq \mathbf{O} \setminus \{X,Y\}$,
    \[
    \begin{aligned}
    W \not\!\indep_\sigma Y \mid \mathbf{Z}
    &\Longleftrightarrow
    W \not\!\indep_d Y \mid \mathbf{Z}, \\
    W \indep_\sigma Y \mid \mathbf{Z} \cup \{X\}
    &\Longleftrightarrow
    W \indep_d Y \mid \mathbf{Z} \cup \{X\},
    \end{aligned}
    \]
    where the right-hand side is evaluated in $G^{\mathrm{acy}}$.
\end{enumerate}

Statement (1) is exactly the Markov blanket equivalence under $\sigma$-acyclification (Lemma \ref{lem:mb-acy}).

For (2), let $W \in \mathbf{O} \setminus \{X,Y\}$ and $\mathbf{Z} \subseteq \mathbf{O} \setminus \{X,Y\}$. By the equivalence of $\sigma$-separation in $G$ and $d$-separation in $G^{\mathrm{acy}}$, we have
\[
W \indep_\sigma Y \mid \mathbf{Z}
\;\Longleftrightarrow\;
W \indep_d Y \mid \mathbf{Z},
\qquad
W \indep_\sigma Y \mid \mathbf{Z} \cup \{X\}
\;\Longleftrightarrow\;
W \indep_d Y \mid \mathbf{Z} \cup \{X\}.
\]
Taking negations yields the corresponding dependence statements, proving (2).

Then the claim follows immediately: by (1), the candidate witnesses $W$ lie in the same Markov blanket in both graphs, and by (2), the R1/R2 conditions are preserved. Hence existence is invariant under $\sigma$-acyclification.
\end{proof}

\subsection{Lemma \ref{theom:soundcomp}}
\begin{proof}
    Soundness: Assume that the conditions of R1 or R2 hold in $G$. By Lemma \ref{lem:witness-invariance}, the same conditions hold in any $G^{\mathrm{acy}}$. Since $G^{\mathrm{acy}}$ is acyclic, and a candidate adjustment set $\mathbf{Z}$ is contained in the observed $\mathrm{MB}^{G^{\mathrm{acy}}}_d(Y) \setminus \{X\}$, Theorem 2 in \cite{entner2013data} implies that R1 and R2 are sound in $G^{\mathrm{acy}}$.
    
    If R1 applies in $G^{\mathrm{acy}}$, it concludes that: (i) there is a direct edge $X \rightarrow Y$ in $G^{\mathrm{acy}}$; (ii) $\mathbf{Z}$ is a valid adjustment set for estimating the effect of $X$ on $Y$ in $G^{\mathrm{acy}}$. By Lemma \ref{lem:edge-preserved}, the edge $X \rightarrow Y$ in $G^{\mathrm{acy}}$ implies the presence 
    in $G$. By Lemma \ref{lem:equiv-backdoor} and \ref{lem:acy-transfer-sep}, the set $Z$ is also a valid adjustment set in $G$. Therefore, both conclusions of R1 are sound in the original cyclic graph $G$.
    
    If R2 applies in $G^{\mathrm{acy}}$, it concludes that there is no direct edge in $G^{\mathrm{acy}}$. By Lemma \ref{lem:edge-preserved}, this absence is preserved in $G$. Hence, R2 is also sound in $G$.
    
    Therefore, R1 and R2 are sound in cyclic SCMs.

    Completeness: Suppose that neither R1 nor R2 applies in $G$. By Lemma \ref{lem:witness-invariance}, the same conditions over $\mathbf O$ fail in $G^{\mathrm{acy}}$.  Let $\mathcal M_{\mathbf O}$ be the MAG over $\mathbf O$ obtained by latent projection of
    $G^{\mathrm{acy}}$. By Lemma~\ref{lem:eqmbmag}, these observed conditional independence relations are exactly the $m$-separation relations in
    \(\mathcal M_{\mathbf O}\).

    Therefore, neither R1 nor R2 applies in $\mathcal M_{\mathbf O}$. By the
    completeness result from \cite{li2024local}, no criterion based only on
    the observed conditional independence relations can determine whether the
    effect of $X$ on $Y$ is identifiable. Since $G^{\mathrm{acy}}$ and
    $\mathcal M_{\mathbf O}$ induce the same observed conditional independence
    assertions, by \cite{zhang2008causal}, the same holds for $G^{\mathrm{acy}}$ and $G$ by Lemma~\ref{lem:acy-transfer-sep}. Therefore, the same non-identifiabilityfrom conditional independence information holds for the original possibly cyclic SCM $G$.  In conclusion, any ambiguity in determining the presence or absence of a direct causal effect from $X$ to $Y$ in $G^{\mathrm{acy}}$ based solely on conditional independencies also holds in $G$.


\end{proof}

\subsection{Lemma \ref{thm:mb-algorithms-cyclic}}
\begin{proof}
Let $\mathcal{A}$ be any sound Markov blanket algorithm. Since $\mathcal{A}$ only apply conditional tests under observed variables, the behavior is determined entirely by the observed independence assertions over $\mathbf{O}$. By Lemma \ref{lem:acy-transfer-sep}, every conditional independence query has the same truth value under the observational distribution generated by $G$ as implied by the d-separation in $G^{\mathrm{acy}}$. Since $G^{\mathrm{acy}}$ may contain latent variables, let $\mathcal M_{\mathbf O}$ be its latent
projection MAG over $\mathbf{O}$. Thus by soundness and completeness of $\mathcal A$ for the observed CI relations, $\mathcal A$ returns $\mathrm{MB}^{\mathcal M_{\mathbf O}}_m(X)$.
By Lemma \ref{lem:eqmbmag}, the Markov blanket in $G^{\mathrm{acy}}$, i.e., $\mathrm{MB}^{G^{\mathrm{acy}}}_d(X)$ coincides with the $m$-Markov blanket in $\mathcal M_{\mathbf O}$, i.e., $\mathrm{MB}^{\mathcal M_{\mathbf O}}_m(X)$. 
By Lemma~\ref{lem:mb-acy}, $\mathrm{MB}^{G^{\mathrm{acy}}}_d(X) = \mathrm{MB}^G_\sigma(X)$, which completes the proof.

\end{proof}

\section{Data generation}\label{ap:datagen}

Let's consider simple SCMs where $\mathbf{V}$ are random variables, $W$ is the weighted adjacency matrix consistent with the underlying graph structure and $\mathbf{U}$ is the vector of independent noise variables. Interventions are modeled using the do-operator. 

\subsection{Linear SCMs}

We define a linear SCM:

\[
\mathbf{V} := W\mathbf{V} + \mathbf{U}.
\]

In acyclic graphs, $W$ is strictly triangular, and the system admits a unique solution. In the presence of cycles, we need to enforce that $(I − W)$ is invertible during data generation, which guarantees a unique solution (simple SCMs), s.t. $\mathbf{V} = (I - W)^{-1} \mathbf{U}$. In the code, we verify that $I - W$ is invertible to ensure the existence of the matrix $(I - W)^{-1}$.

\subsubsection{Interventions}

Let $X$ be the treatment variable and $Y$ the outcome variable. Under an intervention $do(X = x)$, the structural equation for $X$ is replaced by a constant:

\[
\begin{split}
    \mathbf{V}_K &= W_{KK}\mathbf{V}_K + xW_{KX} + \mathbf{U}_K\\
    (I-W_{KK})\mathbf{V}_K &= xW_{KX} + \mathbf{U}_K\\
    \mathbf{V}_K &= (I-W_{KK})^{-1}(xW_{KX} + \mathbf{U}_K)
\end{split}
\]

where $\mathbf{K} = \mathbf{V} \setminus \{X\}$.

\subsubsection{Causal effect} \label{ap:ce}

The total causal effect (CE) from $X$ on $Y$ can be defined in both a binary and a continuous setting. In the setting where the treatment is continuous:
\[
    CE=\frac{\partial}{\partial x}E^{\operatorname{do}(X:=x)}[Y],
\]
$E^{\operatorname{do}(\mathbf{X}:=x)}$ indicates the distribution where the do action assignment has been performed.

\paragraph{Acyclic case} In the acyclicity setting, when considering the pretreatment assumption, the causal effect of $X$ on $Y$ is:

\[
\begin{split}
        CE&:=\frac{\partial}{\partial x}E^{\operatorname{do}(X:=x)}[Y]\\
        &\overset{}{=}\frac{\partial}{\partial x}E[(W_{KK}\mathbf{V}_K+xW_{KX}+\mathbf{U}_K)e_Y]\\
        &\overset{}{=}\frac{\partial}{\partial x}E[xW_{KX}e_Y]+\frac{\partial}{\partial x}E[(W_{KK}\mathbf{V}_K+\mathbf{U}_K)e_Y]\\
        &\overset{}{=}\frac{\partial}{\partial x}xW_{KX}e_Y\\
        &=W_{KX}e_Y=W_{YX},
    \end{split}
\]

Here we use linearity of expectation and the fact that, in DAGs, $\mathbf{V}_K$ does not depend on $x$ through feedback and in the pre-treatment setting

\paragraph{Cyclic case} In the cyclic setting, feedback induces dependence on $x$ through $\mathbf{V}_k$. Using the same form above, we obtain:

\[
    \begin{split}
        CE&=\frac{\partial}{\partial x}E^{\operatorname{do}(X:=x)}[Y]\\
        &\overset{}{=}\frac{\partial}{\partial x}E[(I-W_{KK})^{-1}(xW_{KX}+\mathbf{U})e_Y]\\
        &\overset{}{=}\frac{\partial}{\partial x}E[x(I-W_{KK})^{-1}W_{KX} e_y]+\frac{\partial}{\partial x}E[(I-W_{KK})^{-1}\mathbf{U}_Y]\\
        &\overset{}{=}\frac{\partial}{\partial x}x(I-W_{KK})^{-1}W_{KX} e_y\\
        &=(I-W_{KK})^{-1}W_{KX} e_y,
    \end{split}
\]

\subsection{Nonlinear SCMs}

 We define a nonlinear SCM as:
 
 \[
 \mathbf{V} := tanh(W\mathbf{V}) + \mathbf{U}
 \]

where $tanh$ is applied componentwise. In acyclic graphs, the system can be solved by forward substitution. In cyclic graphs, we enforce a condition on the operator norm of W.

\begin{lemma}
If $\|W\| < 1$, where $\|\cdot\|$ is an induced operator norm, then the equation
\begin{equation}
    \mathbf{V} = \tanh(W\mathbf{V}) + \mathbf{U}
\end{equation}
admits a unique solution.
\end{lemma}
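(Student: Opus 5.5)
The plan is to apply the Banach fixed-point theorem to the map $T(\mathbf{v}) := \tanh(W\mathbf{v}) + \mathbf{U}$ on $\mathbb{R}^{|\mathbf{V}|}$. We fix a realization of the exogenous vector $\mathbf{U}$ and work pointwise in $\mathbf{U}$, so $\mathbf{U}$ is a constant vector. Solutions of $\mathbf{V} = \tanh(W\mathbf{V}) + \mathbf{U}$ are exactly the fixed points of $T$. Since $\mathbb{R}^{|\mathbf{V}|}$ with the vector norm $\|\cdot\|$ inducing the operator norm on $W$ is complete, it suffices to show that $T$ is a contraction in that norm.

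The key step is the Lipschitz estimate for the componentwise $\tanh$. The scalar function $\tanh$ is $1$-Lipschitz, because $\tanh'(s) = 1 - \tanh^2(s) \in (0,1]$. Hence for vectors $\mathbf{a},\mathbf{b}$ we have $|\tanh(a_i) - \tanh(b_i)| \le |a_i - b_i|$ coordinatewise.

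To turn this coordinatewise bound into a norm bound, I would assume the vector norm is monotone (absolute), meaning $|x_i| \le |y_i|$ for all $i$ implies $\|\mathbf{x}\| \le \|\mathbf{y}\|$. This holds for every $\ell_p$ norm, which are the norms one uses in practice. It gives $\|\tanh(\mathbf{a}) - \tanh(\mathbf{b})\| \le \|\mathbf{a}-\mathbf{b}\|$, and therefore
\[
\|T(\mathbf{v}) - T(\mathbf{w})\| \le \|W(\mathbf{v}-\mathbf{w})\| \le \|W\|\,\|\mathbf{v}-\mathbf{w}\|.
\]
With $\|W\|<1$, $T$ is a contraction. Banach's theorem then yields existence and uniqueness of the fixed point, and shows it is the limit of the iteration $\mathbf{v}_{k+1} = T(\mathbf{v}_k)$, which also justifies the data-generation procedure.

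The main obstacle is the step from the coordinatewise inequality to the norm inequality, which needs a monotone norm. I would state this restriction explicitly, noting that it is satisfied by all $\ell_p$-induced norms. For a general norm I would instead use the mean value theorem in integral form: $T(\mathbf{v}) - T(\mathbf{w}) = \int_0^1 D(t)\,W(\mathbf{v}-\mathbf{w})\,dt$, where $D(t)$ is diagonal with entries in $(0,1]$. This requires $\|D(t)\| \le 1$, which again holds for absolute norms. Finally, to match the simplicity requirement of Section~\ref{sec:preliminaries}, I would remark that the same argument applies to every subsystem. For a subset $\mathbf{S}$, the principal submatrix satisfies $\|W_{\mathbf{SS}}\| \le \|W\| < 1$ for $\ell_p$-induced norms, with the other variables treated as fixed inputs absorbed into $\mathbf{U}$, so every subsystem also has a unique solution and the SCM is simple.
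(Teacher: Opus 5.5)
Your argument is the same as the paper's: Banach's fixed-point theorem applied to $F(\mathbf{V})=\tanh(W\mathbf{V})+\mathbf{U}$, using that componentwise $\tanh$ is $1$-Lipschitz and then $\|W(\mathbf{v}-\mathbf{w})\|\le\|W\|\,\|\mathbf{v}-\mathbf{w}\|$. Your monotone-norm caveat is exactly the hypothesis that the paper's step $\|\tanh(W\mathbf{V})-\tanh(WY)\|\le\|W\mathbf{V}-WY\|$ uses without comment, and it is genuinely needed rather than a proof artifact, while it still covers the Euclidean norm the paper enforces in practice (e.g.\ $W$ with rows $(1.8,\,2.7)$ and $(-2.7,\,-3.6)$ has spectral radius $0.9$, hence $\|W\|<1$ in some induced norm; yet at points with $W\mathbf{v}=(0,T)^\top$ and $T$ large, the Jacobian $I-\mathrm{diag}(1,\mathrm{sech}^2T)\,W$ of $\mathbf{v}\mapsto\mathbf{v}-\tanh(W\mathbf{v})$ has negative determinant, and since this proper map has degree $1$, some $\mathbf{U}$ admit at least three solutions).
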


\begin{proof}
    Define $F:\mathbb{R}^d \to \mathbb{R}^d$ as
    \[
    F(\mathbf{V}) := \tanh(W\mathbf{V}) + \mathbf{U}.
    \]
    For any $\mathbf{V}, Y \in \mathbb{R}^d$,
    \[
    \|F(\mathbf{V}) - F(Y)\|
    = \|\tanh(W\mathbf{V}) - \tanh(WY)\|.
    \]
    Since $\tanh$ is componentwise $1$-Lipschitz,
    \[
    \|\tanh(W\mathbf{V}) - \tanh(WY)\| \le \|W\mathbf{V} - WY\|.
    \]
    Using the definition of the induced operator norm,
    \[
    \|W\mathbf{V} - WY\| \le \|W\| \, \|\mathbf{V} - Y\|.
    \]
    Therefore,
    \[
    \|F(\mathbf{V}) - F(Y)\| \le \|W\| \, \|\mathbf{V} - Y\|.
    \]
    If $\|W\| < 1$, then $F$ is a contraction. Since $\mathbb{R}^d$ is complete, the Banach fixed-point theorem guarantees the existence and uniqueness of a fixed point $\mathbf{V}^\ast$ such that
    \[
    \mathbf{V}^\ast = \tanh(W\mathbf{V}^\ast) + \mathbf{U}.
    \]
\end{proof}

\paragraph{Implementation details.}
In practice, we enforce the condition $\|W\| < 1$ using the Euclidean operator norm, i.e., the largest singular value of $W$.

\subsubsection{Causal Effect}

In contrast to the linear case, the causal effect in non-linear SCMs does not admit a simple closed-form expression. In particular, the quantity $\frac{\partial}{\partial x} \mathbb{E}^{\operatorname{do}(X := x)}[Y]$ generally depends on the intervention value $x$, and therefore does not reduce to a constant effect.

\subsection{Generation and assumptions translations}

\paragraph{Latent and observed variables} To simulate partial observability, we randomly select a subset of nodes as observed variables. The final dataset contains only samples corresponding to these observed nodes, while always including the designated treatment and outcome variables. This ensures that causal queries remain well-defined.

\paragraph{Treatment and outcome} For each generated SCM, we fix a treatment variable $X$ and an outcome variable $Y$. We enforce the presence (or absence) of a direct edge from $X$ to $Y$.

\section{Experimental setup}

\paragraph{Conditional independence tests}
In all experiments, conditional independence tests were performed using Fisher’s Z-test with significance level $\alpha = 0.01$.

\paragraph{Hardware}
We ran all experiments on a machine with 16 Intel Xeon E5-2630 v3 cores at 2.40 GHz, supporting 32 hardware threads, and 512 GB of RAM available. No GPU was used.

\subsection{Compute time}

\begin{figure}[h]
    \centering
    \begin{subfigure}{0.3\textwidth}
        \centering
        \includegraphics[width = \linewidth]{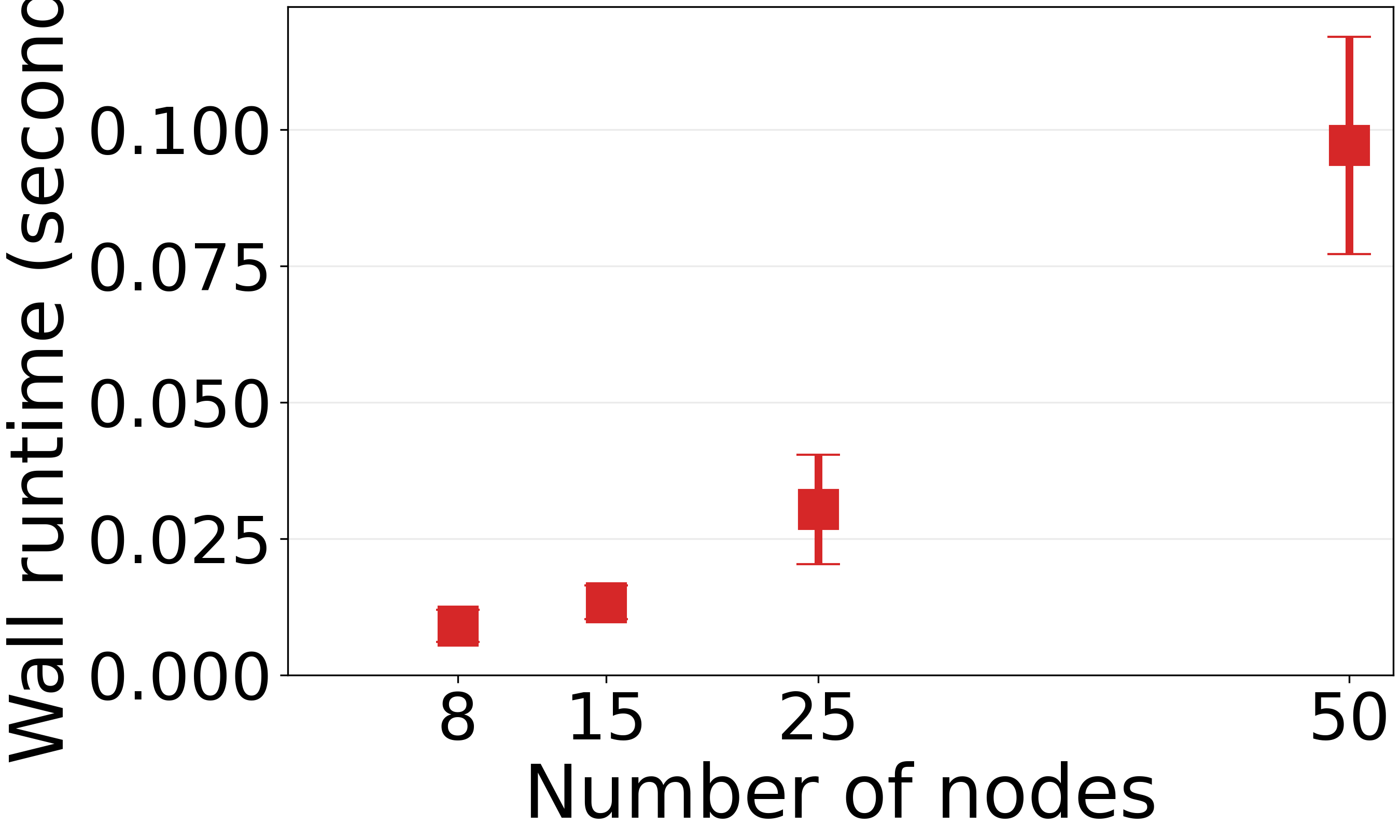}
        \caption{Acyclic graphs}
        \label{acy_runtime_precision}
    \end{subfigure}
    \begin{subfigure}{0.3\textwidth}
        \centering
        \includegraphics[width = \linewidth]{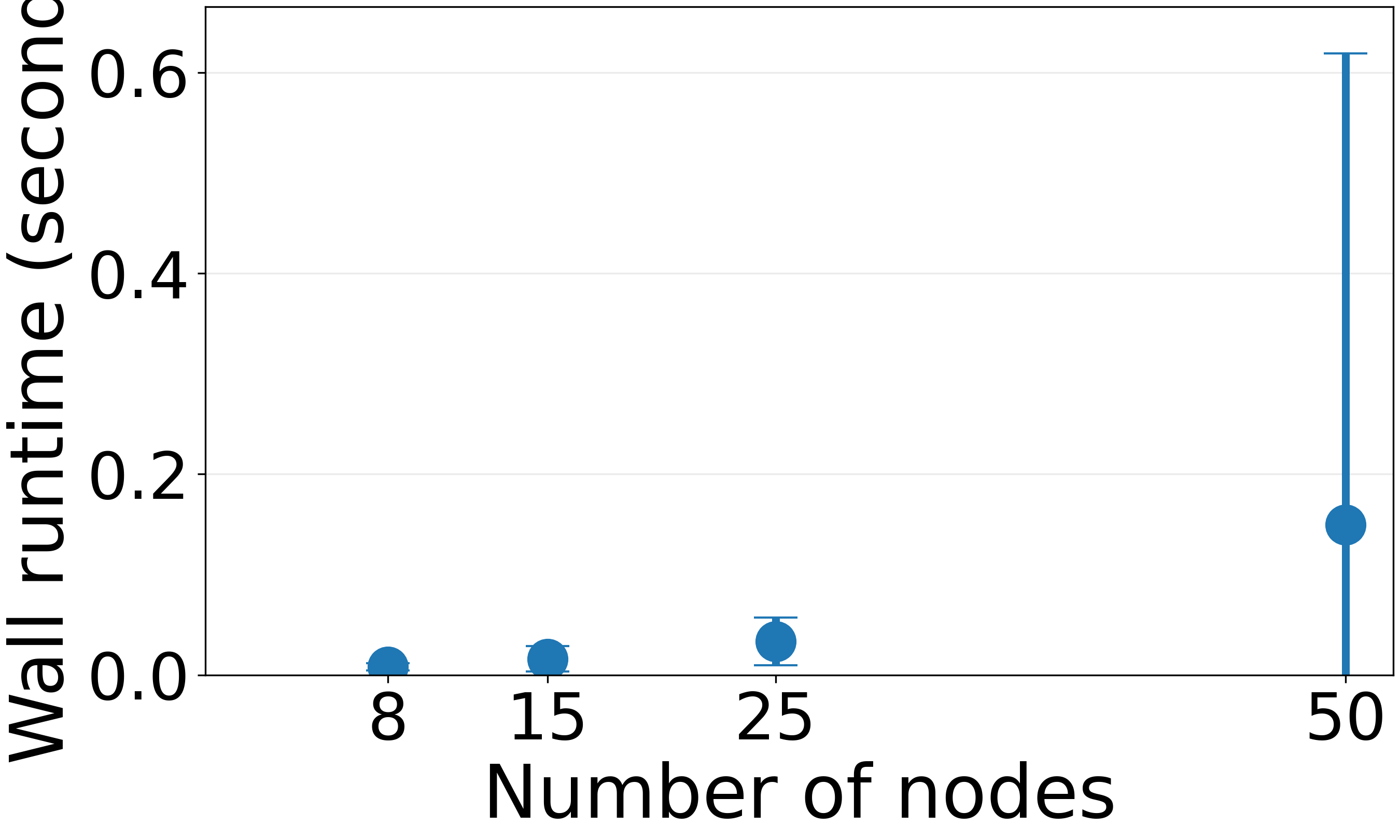}
        \caption{Cyclic graphs}
        \label{cic_runtime_precision}
    \end{subfigure}
    \caption{Wall runtime in seconds (s) of the adjustment set discovery method under the pre-treatment assumption for increasing graph sizes when computing the precision and using TC for the Markov blanket discovery. The x-axis represents the number of nodes in the graph, while the y-axis shows the measured wall runtime. Subfigures \ref{acy_runtime_precision} and \ref{cic_runtime_precision} are for acyclic and cyclic settings, respectively.}
    \label{fig:wall_runtime_pre_precision}
\end{figure}

\begin{figure}[h]
    \centering
    \begin{subfigure}{0.3\textwidth}
        \centering
        \includegraphics[width = \linewidth]{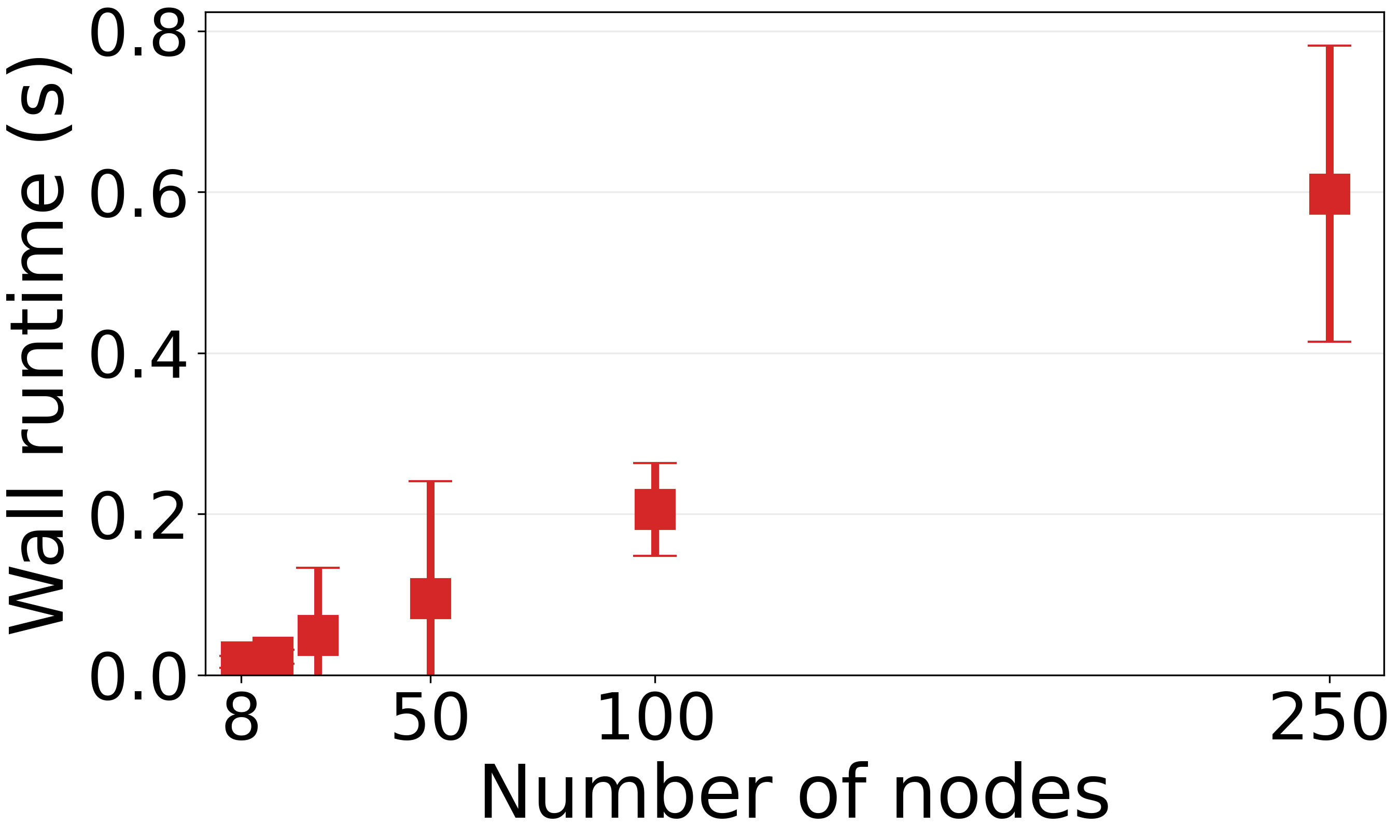}
        \caption{Acyclic graphs}
        \label{acy_runtime}
    \end{subfigure}
    \begin{subfigure}{0.3\textwidth}
        \centering
        \includegraphics[width = \linewidth]{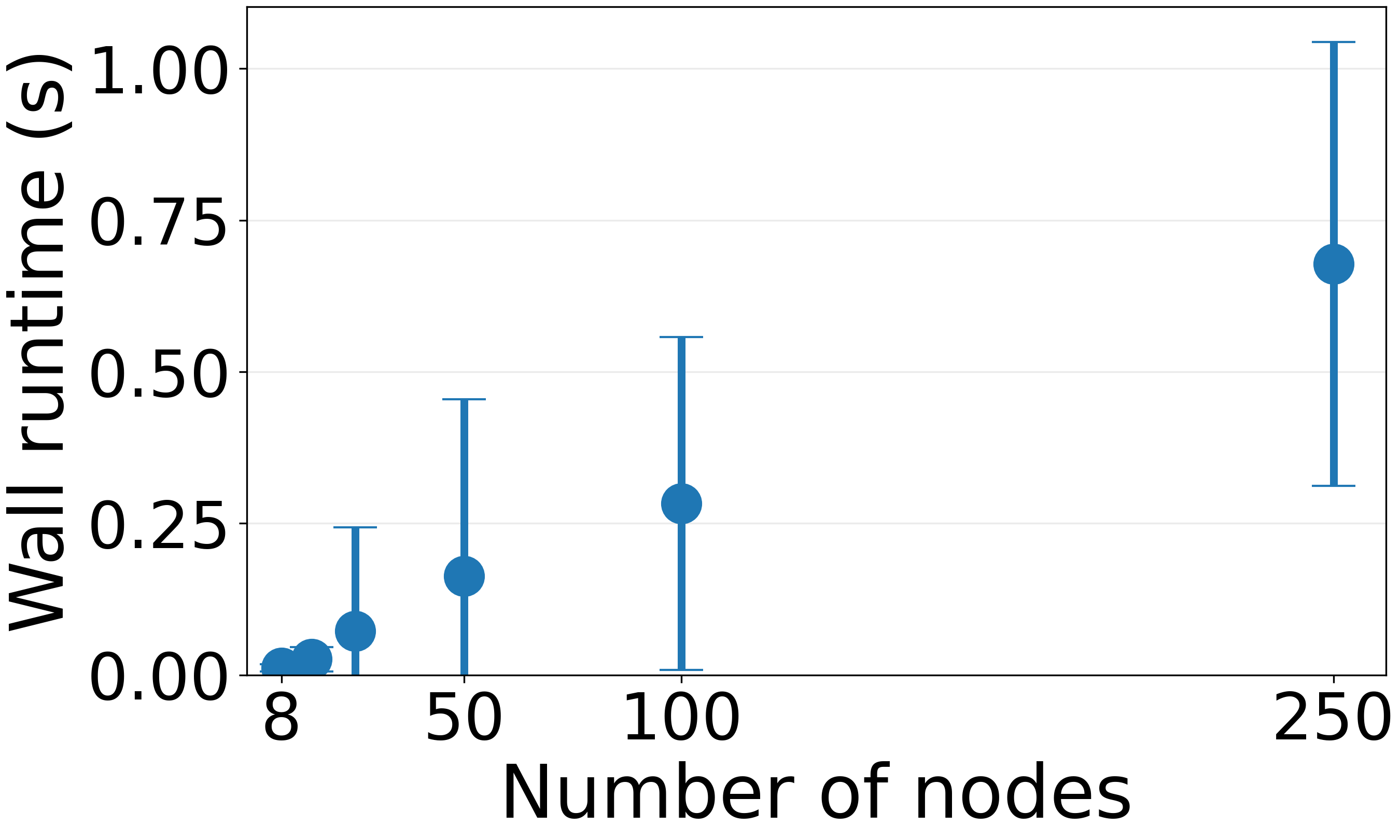}
        \caption{Cyclic graphs}
        \label{cic_runtime}
    \end{subfigure}
    \caption{Wall runtime in seconds (s) of the adjustment set discovery method under the pre-treatment assumption for increasing graph sizes without computing the precision and using Fast-IAMB for the Markov blanket discovery. The x-axis represents the number of nodes in the graph, while the y-axis shows the measured wall runtime. Subfigures \ref{acy_runtime} and \ref{cic_runtime} are for acyclic and cyclic settings, respectively.}
    \label{fig:wall_runtime_pre}
\end{figure}

Figure \ref{fig:wall_runtime_pre_precision} shows that the wall runtime of the adjustment set discovery method under the pre-treatment assumption for graphs of varying sizes, considering both acyclic and cyclic settings when computing the precision and using TC for the Markov blanket discovery. As expected, the runtime generally increases as the graph size grows. Larger graphs exhibit higher runtimes due to the increased computational complexity. Comparing the two configurations, cyclic graphs consistently show slightly higher runtimes than acyclic graphs (note that the axes are slightly obscured).

Figure \ref{fig:wall_runtime_pre} shows the same setup, but without computing the precision and using Fast-IAMB instead of TC for the Markov blanket discovery. As expected, the runtime generally increases as the graph size grows. Larger graphs exhibit higher runtimes due to the increased computational complexity. Comparing the two settings, cyclic graphs consistently show slightly higher runtimes than acyclic graphs.

Comparing the smaller graph sizes in Figures~\ref{fig:wall_runtime_pre} and~\ref{fig:wall_runtime_pre_precision} shows that omitting the precision computation yields a substantial speedup, especially for the cyclic graph with 50 nodes. This is expected, since computing precision requires checking whether a set is a valid adjustment set, which entails verifying the $\sigma$-backdoor criterion over a combinatorially growing number of paths.

\section{Extension of the results}
\subsection{Large graphs} \label{ap:largegraphs}
Figure \ref{fig:largergraphs} shows the behavior of the evaluated metrics as a function of the number of nodes. The graphs were created under the same experimental setups as described in Section~\ref{sec:experimental_results} with the important change that the possible size of the set $|\mathbf{Z}|\leq10$ in checking rule $R1$ and $R2$ (Definitions~\ref {def:R1},\ref{def:R2}). Overall, there is a clear trend of improving performance with increasing graph size. Consistent with previous results, cyclic models show better results than acyclic ones across both metrics and graph sizes. This seems to suggest that even when enforcing a maximum size on the set $\mathbf{Z}$, the algorithm works well. This could be due to the algorithm correctly rejecting scenarios where it cannot decide based on the smaller set, and/or the valid adjustment sets being small in general in our graphs.

\begin{figure}[h]
    \centering
    \begin{subfigure}{0.3\textwidth}
        \centering
        \includegraphics[width=\linewidth]{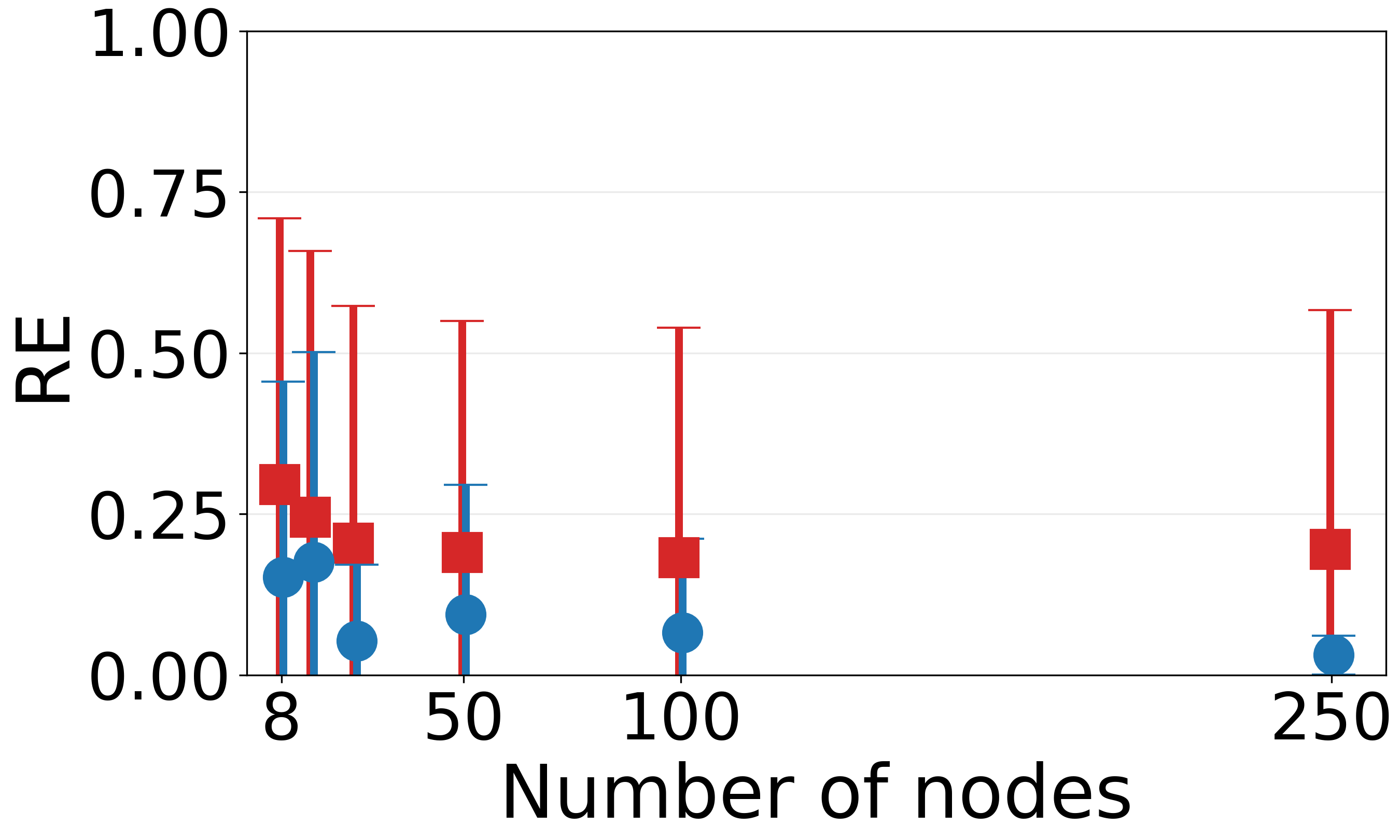}
        \caption{Relative error for linear models}
    \end{subfigure}
    \begin{subfigure}{0.3\textwidth}
        \centering
        \includegraphics[width=\linewidth]{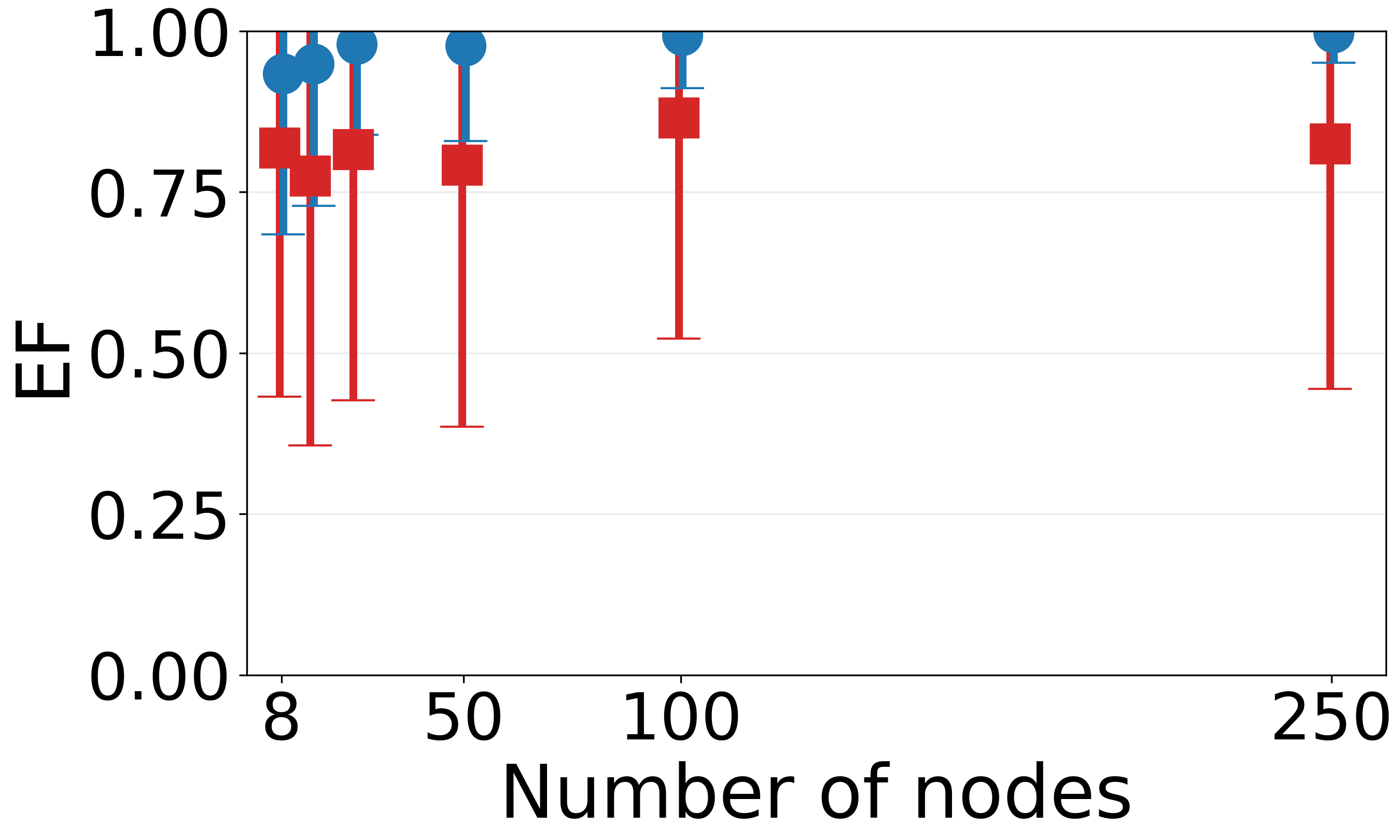}
        \caption{Edge fraction acc. for (non-)linear models}
    \end{subfigure}
    \caption{Performance across varying graph sizes. The x-axis represents the number of nodes, while the y-axis shows the evaluated metric. Results are reported for acyclic (red squares) and cyclic (blue circles) configurations. Markers indicate mean values, and error bars denote one standard deviation. Each subfigure corresponds to a different performance metric.}
    \label{fig:largergraphs}
\end{figure}


\subsection{Violation of the pre-treatment assumption} \label{ap:nonpre}

\begin{figure}[h]
    \centering
    \begin{subfigure}{0.24\textwidth}
        \centering
        \includegraphics[width=\linewidth]{figs/Precision_pre/p_prec_samples_vs_graph_type_8n_tc_xy_pre.png}
        \caption{8 nodes pre.}
        \label{figap:pre_8n}
    \end{subfigure}
    \begin{subfigure}{0.24\textwidth}
        \centering
        \includegraphics[width=\linewidth]{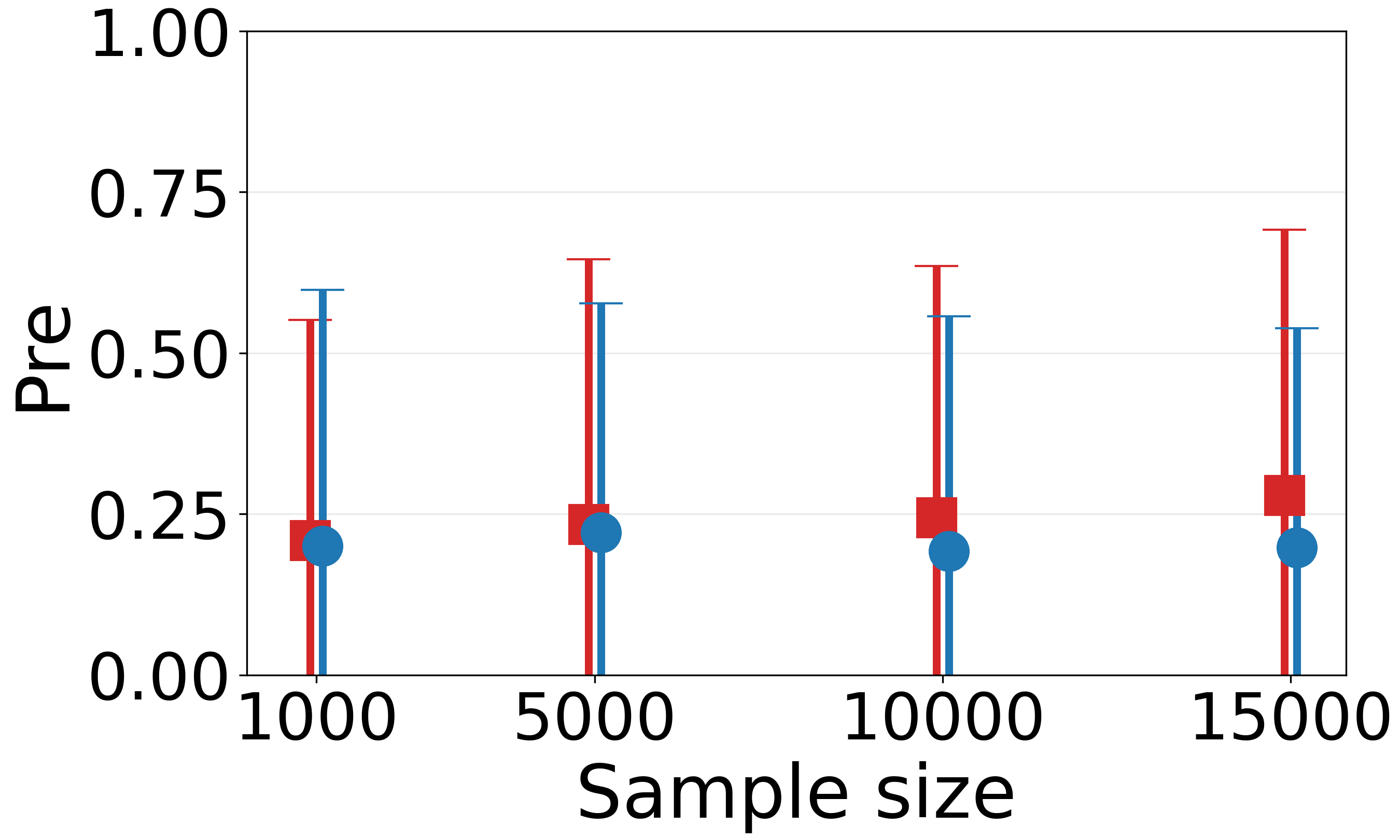}
        \caption{8 nodes non-pre.}     
        \label{figap:nonpre_8n}
    \end{subfigure}
    \begin{subfigure}{0.24\textwidth}
        \centering
        \includegraphics[width=\linewidth]{figs/Precision_pre/p_prec_samples_vs_graph_type_25n_tc_xy_pre.png}
        \caption{25 nodes pre.}   
        \label{figap:pre_25n}
    \end{subfigure}
    \begin{subfigure}{0.24\textwidth}
        \centering
        \includegraphics[width=\linewidth]{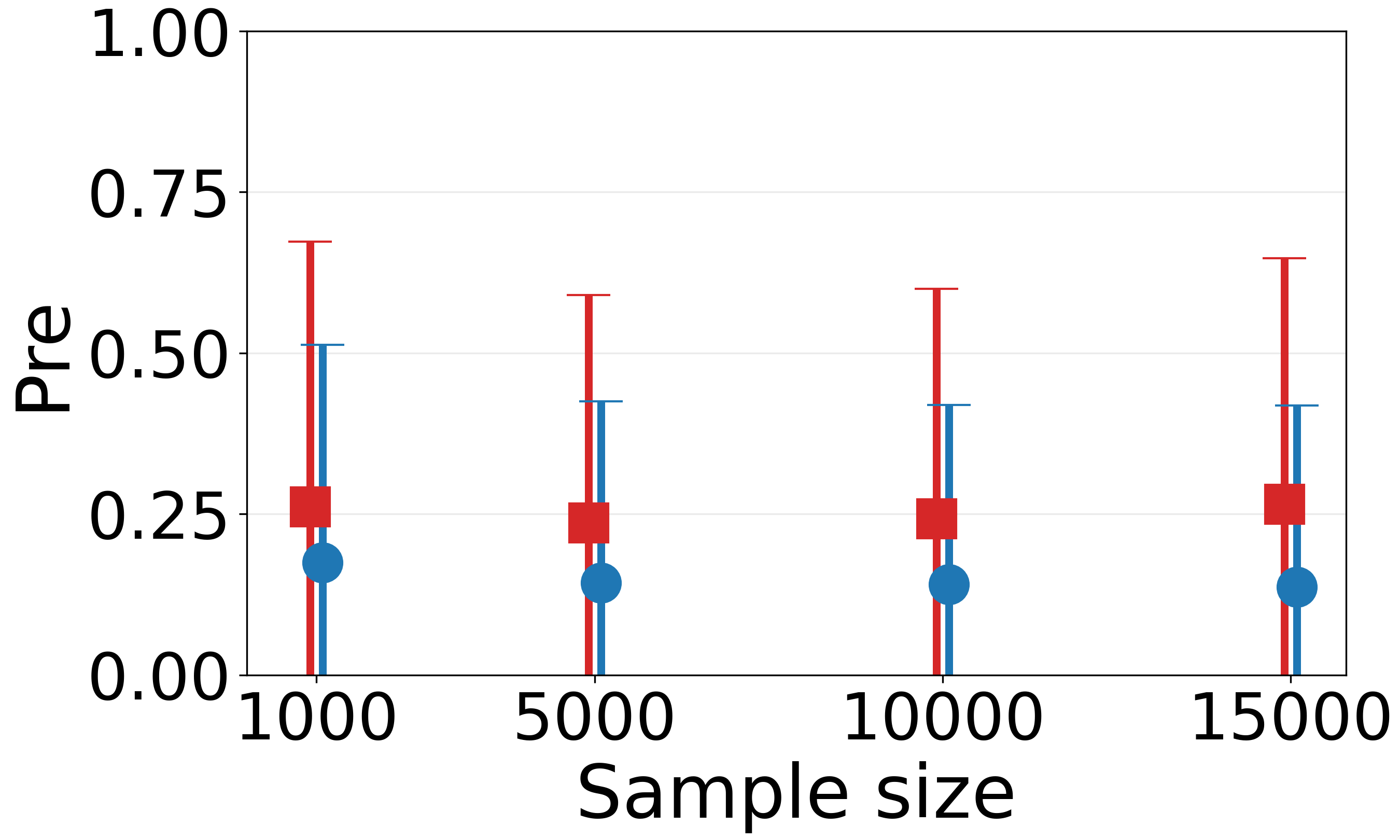}
        \caption{25 nodes non-pre.}
        \label{figap:nonpre_25n}
    \end{subfigure}
    \caption{Precision of linear models in acyclic (red square) and cyclic (blue circle) settings across sample sizes, comparing scenarios that violate the pre-treatment assumption (non-pre.) with those that satisfy it (pre.). Markers show the mean, with error bars indicating one standard deviation. Subfigures correspond to graph sets with different numbers of nodes and pre-treatment assumptions. }
    \label{fig:RE_smallgraphs_nonpre}
\end{figure}

Figure~\ref{fig:RE_smallgraphs_nonpre} shows the precision of linear models in acyclic (red square) and cyclic (blue circle) settings across sample sizes, comparing scenarios that violate the pre-treatment assumption (non-pre.) with those that satisfy it (pre.). Markers show the mean, with error bars indicating one standard deviation. When looking at the effect of the violation of the pre-treatment assumption for 8 nodes by comparing Figures \ref{figap:pre_8n} and \ref{figap:nonpre_8n}, we see that the acyclic SCM reduces in precision by the violation more than the cyclic configuration. Moreover, in the pre-treatment violation, both configuration types are performing equally. This is in contrast with the better performance of the acyclic configuration when looking at Figure~\ref{fig:nonlinear_smallgraphs}. The same pattern repeats itself for the other graphs, see for example, the 25 nodes in Figures \ref{figap:pre_25n} and \ref{figap:nonpre_25n}.

\subsection{Empty fraction} \label{ap:emptyfrac}

Figure \ref{fig:empty_frac_small_graph} shows the empty fraction, i.e. the percentage where the algorithm returns an empty set, for linear and non-linear models together across acyclic (red square) and cyclic (blue circle) settings across sample sizes.  Markers show the mean, with error bars indicating one standard deviation. Subfigures correspond to graph sets with different numbers of nodes. Overall, cyclic settings consistently exhibit lower empty fraction, remaing bellow $20\%$. Acyclic cases show slightly higher empty fraction, typically around $25\%$, with the exception in the $15$ nodes settings, where it increases substantially, being around $50\%$.

\begin{figure}[h]
    \centering
    \begin{subfigure}{0.24\textwidth}
        \centering
        \includegraphics[width=\linewidth]{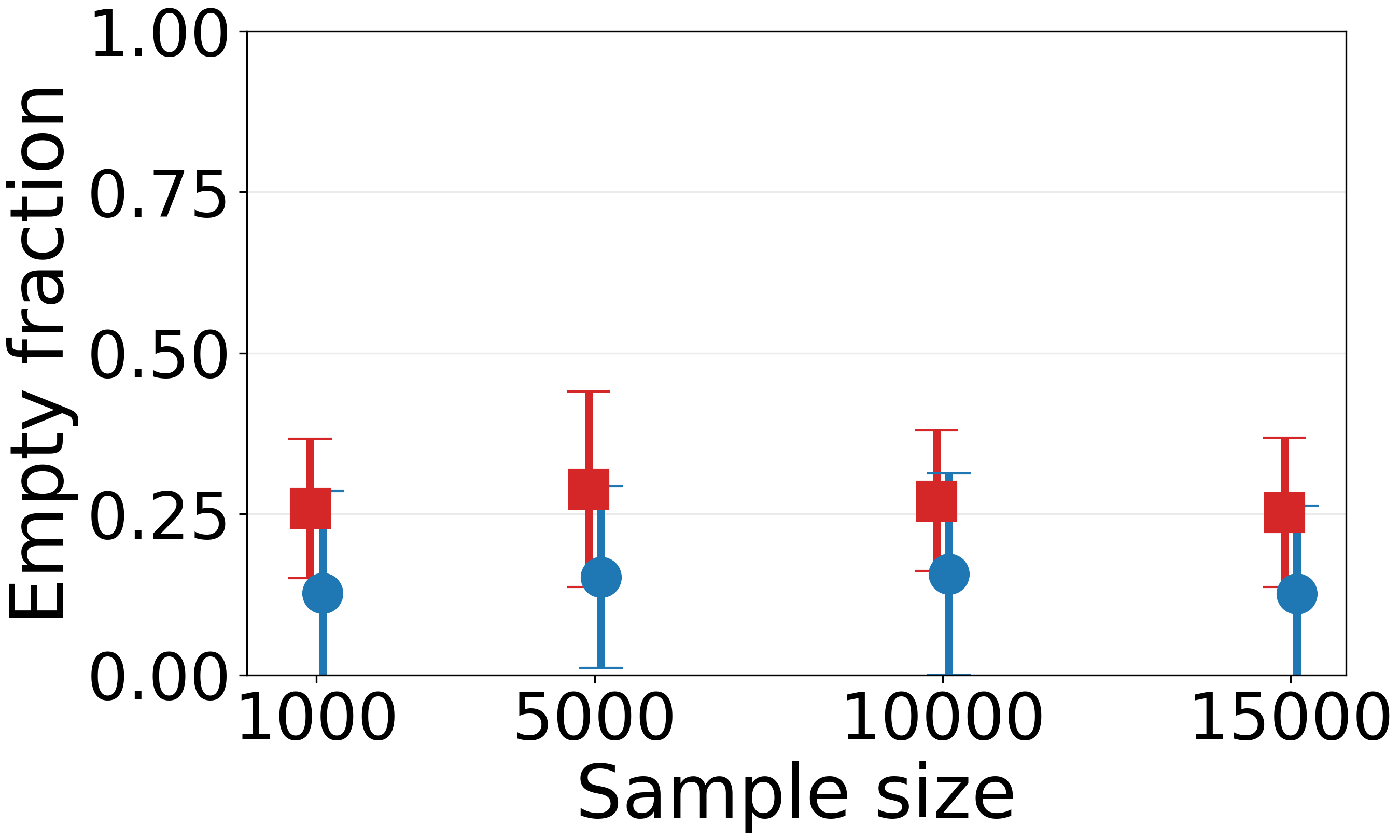}
        \caption{8 nodes}
    \end{subfigure}
    \begin{subfigure}{0.24\textwidth}
        \centering
        \includegraphics[width=\linewidth]{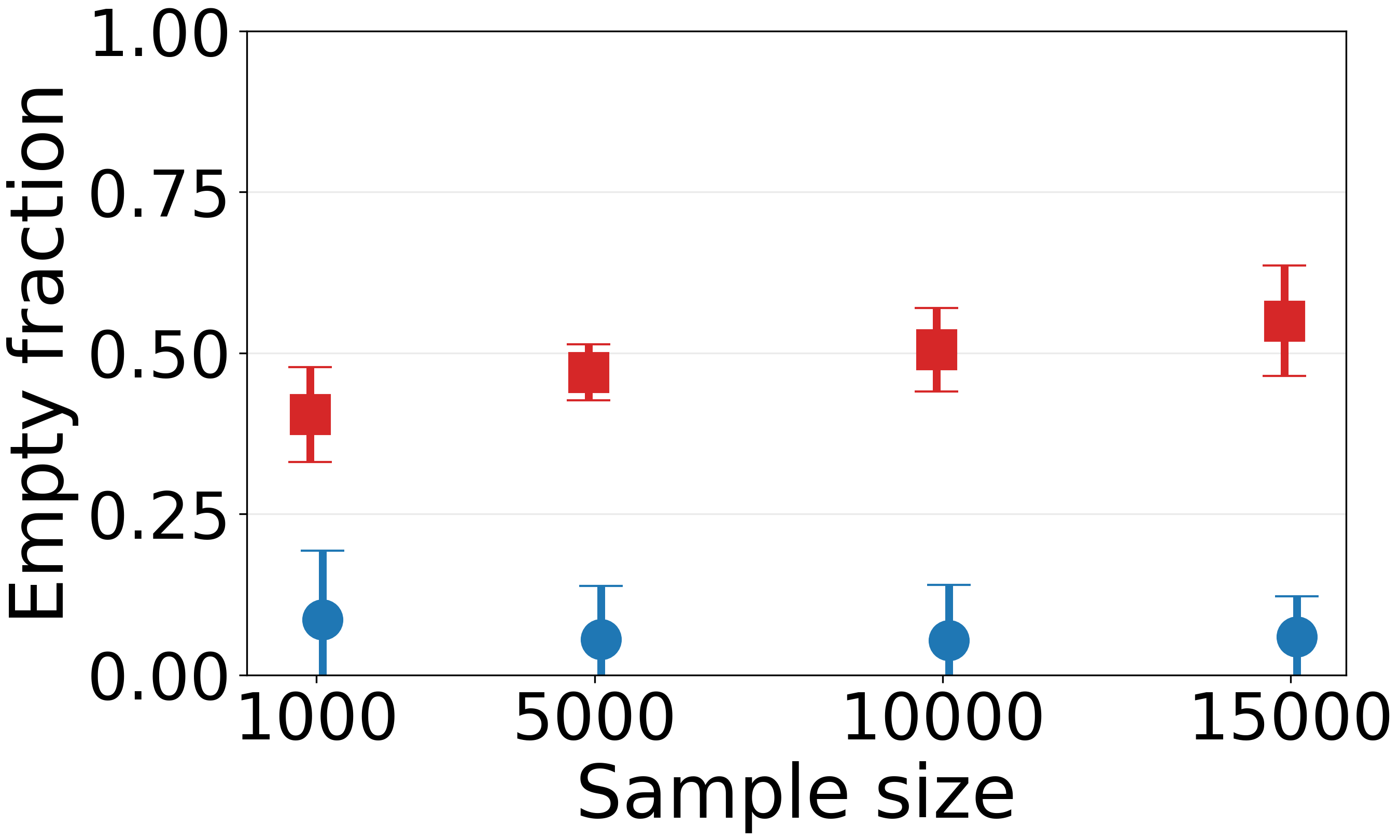}
        \caption{15 nodes}
    \end{subfigure}
    \begin{subfigure}{0.24\textwidth}
        \centering
        \includegraphics[width=\linewidth]{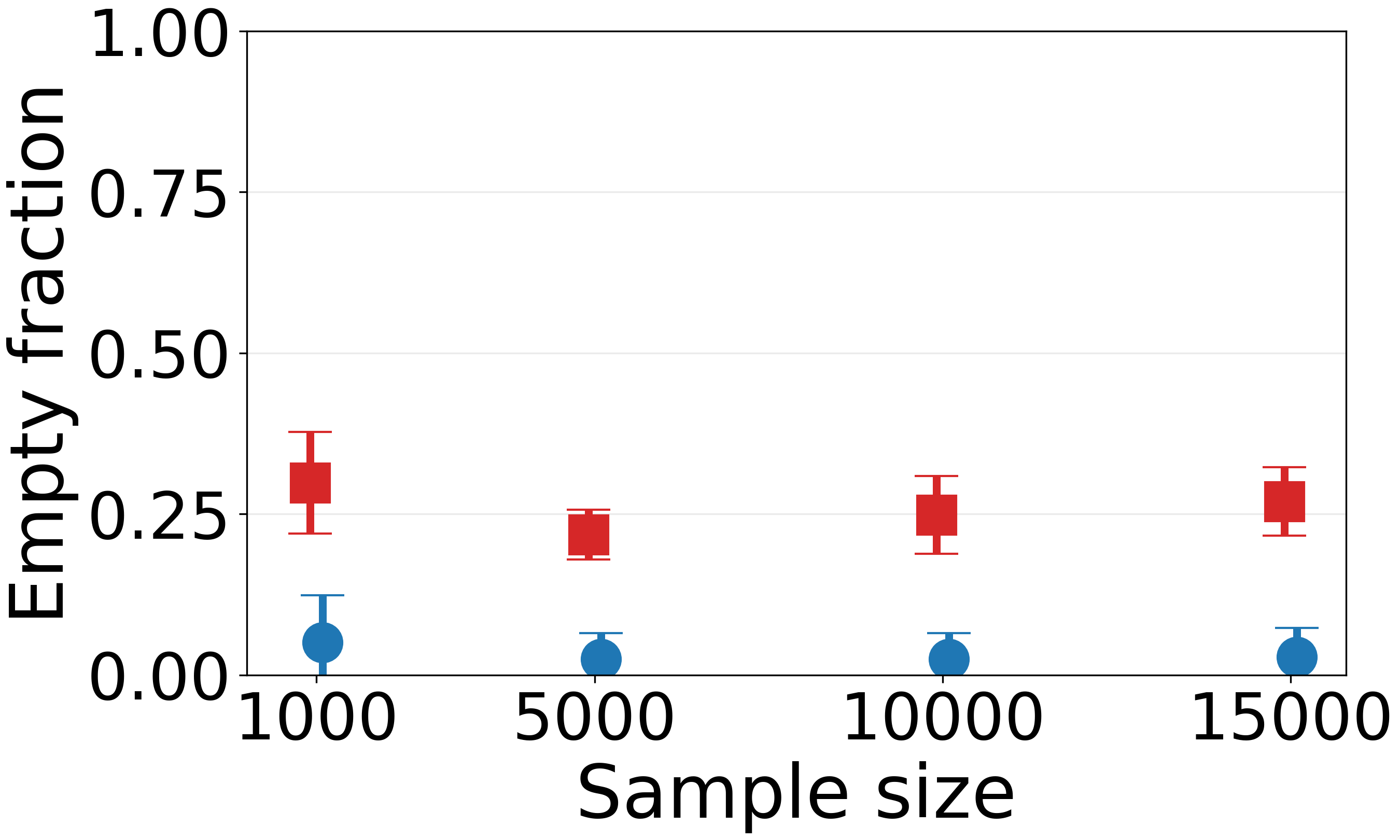}
        \caption{25 nodes}
    \end{subfigure}
    \begin{subfigure}{0.24\textwidth}
        \centering
        \includegraphics[width=\linewidth]{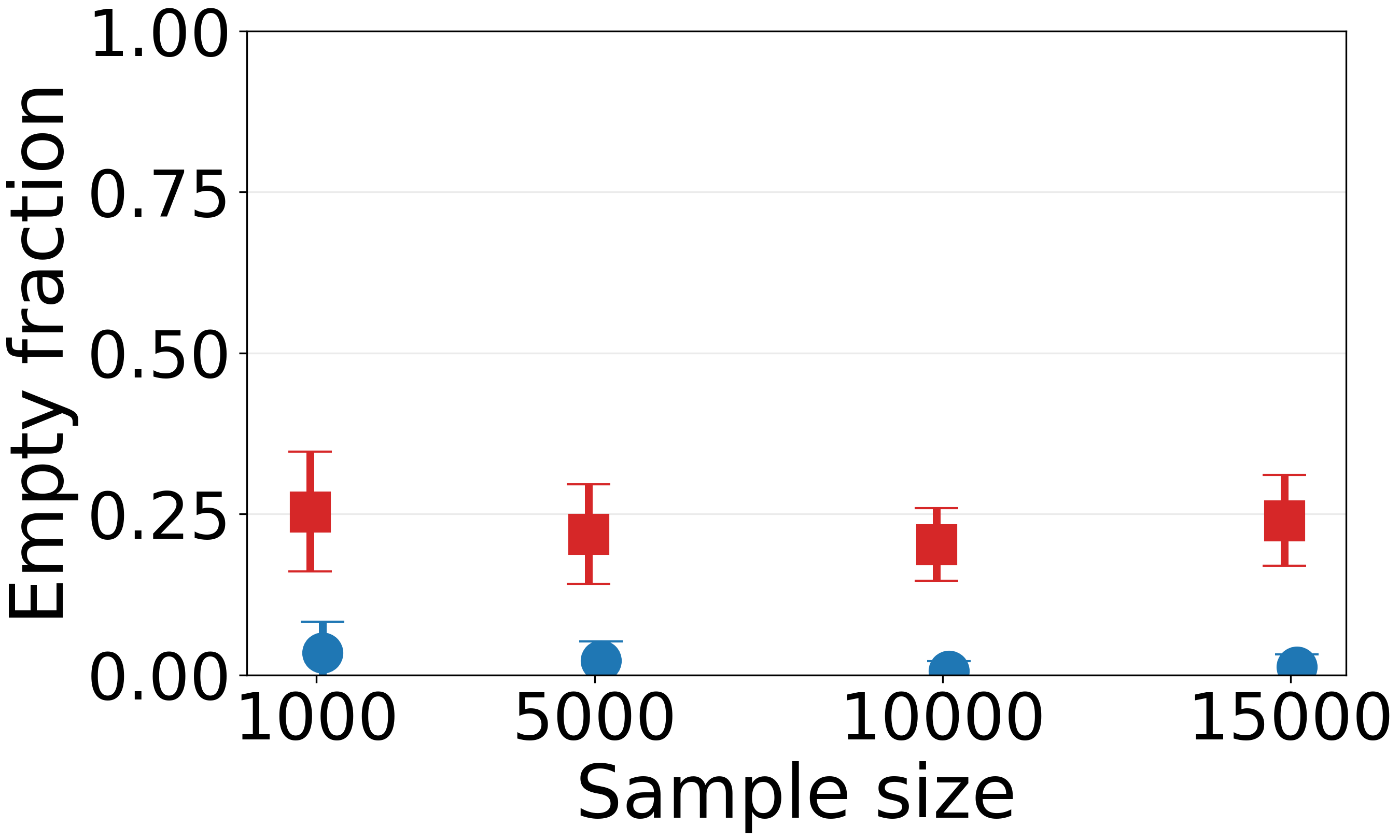}
        \caption{50 nodes}
    \end{subfigure}
    \caption{Empty fraction of linear and non-linear models in acyclic (red square) and cyclic (blue circle) settings across sample sizes. Markers show the mean, with error bars indicating one standard deviation. Subfigures correspond to graph sets with different numbers of nodes. }
    \label{fig:empty_frac_small_graph}
\end{figure}


\end{document}